\documentclass{article}

\usepackage[preprint]{neurips_2026}

\usepackage[utf8]{inputenc} 
\usepackage[T1]{fontenc}    
\usepackage{hyperref}       
\usepackage{url}            
\usepackage{booktabs}       
\usepackage{amsfonts}       
\usepackage{nicefrac}       
\usepackage{microtype}      
\usepackage{xcolor}         
\usepackage{amsmath}
\usepackage{amssymb}
\usepackage{graphicx}
\usepackage{amsthm} 
\usepackage{amsmath} 
\usepackage{algorithm,algpseudocode}
\newtheorem{proposition}{Proposition}
\newtheorem{remark}{Remark}
\newtheorem{theorem}{Theorem}
\newtheorem{corollary}{Corollary}
\newtheorem{lemma}{Lemma}
\newtheorem{definition}{Definition}

\title{Adjoint Inversion Reveals Holographic Superposition and Destructive Interference in CNN Classifiers}

\author{%
  Kaixiang Shu \\
  Independent Researcher\\
  Chongqing, China \\
  \texttt{614729197@qq.com} \\
}

\begin{document}

\maketitle

\begin{abstract}
A foundational assumption of CNN interpretability, that deep encoder stages physically suppress non-discriminative pixels and that the classifier acts as a selector over an already-cleaned feature pool, has remained empirically untested at the pixel level, because no existing visualization tool offers an algorithmic guarantee against fabricating spatial structure. We close this gap on both fronts. We introduce a hallucination-free inversion framework built on magnitude and phase decoupling and a cascade of Local Adjoint Correctors deployed precisely at Markovian downsampling boundaries; an algebraic identity guarantees that the spatial gradient support of every reconstruction is contained in the effective receptive fields of genuinely active channels. Used as a geometric probe, the framework yields, to our knowledge, the first pixel-level direct evidence of strong superposition in vision encoders: per-channel inversions are uniformly holographic, and positive and negative weight reconstructions of any class are visually and energetically indistinguishable, yet their algebraic sum sharply concentrates on the foreground. This signature is precisely predicted by, and only by, an account in which classification is implemented as destructive interference: the classifier's weights cancel a shared background direction in pixel space and constructively assemble class discriminative residuals, directly falsifying the Spatial Funnel Hypothesis. The interference account identifies a precise geometric quantity, the volume of the admissible interference subspace, that governs how many channels a classifier needs to operate; we prove this volume is dual to the GAP covariance determinant, yielding a covariance volume channel selection algorithm with a $(1-1/e)$ approximation guarantee. The algorithm renders out of distribution failure as a measurable collapse of the very covariance volume that interference based classification requires. The framework extends to attention based heads without retraining.
\end{abstract}

\section{Introduction}
\label{sec:intro}

For more than a decade, tools for understanding what convolutional neural networks (CNNs) compute have been dominated by saliency and attribution methods, including Grad-CAM and its successors~\cite{selvaraju2017grad,chattopadhay2018grad,wang2020score} and perturbation based approaches~\cite{petsiuk2018rise,fong2017interpretable}. The visual legibility of their outputs has shaped the community's intuition about CNN decisions: deep convolutional stages are believed to progressively suppress non-discriminative pixels, concentrating energy on a localized foreground, and the final classifier is believed to select among an already cleaned feature pool. We refer to this picture as the Spatial Funnel Hypothesis (SFH). Although no individual paper commits to SFH explicitly, it is the implicit physical reading that makes existing visualizations intuitively legible, and it has never been tested at the pixel level.

Verifying SFH at the pixel level requires two properties that are not jointly satisfied by existing visualization frameworks: (i) algorithmic faithfulness, i.e., a guarantee grounded in algebraic identity that the visualization does not introduce spatial structure absent from the encoder's response, and (ii) compositional access, i.e., the ability to isolate and compose individual channels and classifier directions in pixel space. Existing inversion methods~\cite{mahendran2015understanding,dosovitskiy2016inverting} rely on optimization with natural image priors and do not provide such guarantees, while generative approaches~\cite{nguyen2017plug} can synthesize plausible spatial structure beyond that directly encoded in the representation. Meanwhile, CAM-style methods produce class-specific heatmaps but lack explicit channel-level control, whereas per-channel feature visualization methods~\cite{olah2017feature} yield highly optimized yet synthetic patterns that are not grounded in any particular input instance.

We close this gap with an inversion framework built on two algebraic ideas. The first is a magnitude and phase decoupling: the absolute semantic importance of each channel is distilled, on the forward path, into a layer wise simplex measure $E_{l,c}$, while the spatial contour of the same concept is recovered, on the backward path, as a scale free spatial basis $\widetilde{V}_{l,c}$ via the channel selective Vector Jacobian Product. The second is the Local Adjoint Corrector (LAC), a Markovian transition guard deployed precisely at the manifold boundaries where the discrete adjoint signal is structurally shattered by zero insertion upsampling. Per channel GroupNorm inside each LAC algebraically annihilates the dominant noise carriers without smoothing the spatial phase. The synthesis $\widehat{X}_l = \sum_c E_{l,c}\,\widetilde{V}_{l,c}$ recovers a pixel space reconstruction whose spatial gradient support is, by an algebraic identity, strictly contained in the effective receptive fields of genuinely active channels (Corollary~\ref{cor:zero-hallucination}). 

Used as a geometric probe, the framework reveals a structure of CNN classification sharply at odds with SFH and constituting, to our knowledge, the first pixel level direct evidence of strong superposition in vision encoders, a phenomenon previously inferred only from activation statistics in language models~\cite{elhage2022toy}. Three signatures, each independently incompatible with SFH, emerge consistently across architectures. First, every individual deep channel recovers a holographic rendering of the input scene, preserving foreground, background, and contextual layout simultaneously. Second, for any class direction, the positive weight and negative weight pixel space reconstructions are visually nearly indistinguishable and exhibit essentially identical foreground energy ratios, despite the fact that their algebraic sum exhibits sharp foreground concentration. Third, the rank one structure of the per image inversion Gram matrix captures more than $20\%$ of the total energy in a single shared direction.

These signatures are precisely and only predicted by a different mechanistic account, which we formulate and verify: classification is implemented as destructive interference in the pixel manifold. Each channel inversion decomposes as $\widetilde{V}_{L-1,i}(X) = C_i(X) B(X) + \delta_i(X)$, where $B(X)$ is a shared, low rank, high energy background direction common to all channels, $C_i(X)\in\mathbb{R}$ is the scalar projection coefficient of channel $i$ onto that direction and $\delta_i(X)$ is a channel idiosyncratic residual. By the linearity of the adjoint operator, the classifier's scalar weights $w^{(c)}$ in logit space are algebraically equivalent to a geometric superposition of these inversions in pixel space, and a well trained classifier is one whose weights simultaneously cancel the shared background coefficient and constructively align the residual sum on the class discriminative object. The classifier is not a selector but a precisely tuned linear interferometer.

The interference account also identifies a precise geometric quantity, the volume of the admissible interference subspace in pixel space, that governs how many channels a classifier needs to operate. We prove an exact duality (Theorem~\ref{thm:covvol_duality}) between this volume and the determinant of the GAP covariance matrix, which yields a greedy channel selection algorithm with the classical $(1 - 1/e)$ approximation guarantee. The algorithm preserves accuracy under aggressive compression, holds up under corruption and cross dataset transfer, and renders out of distribution failure as a measurable collapse of the very covariance volume that interference based classification requires. The framework further extends, without retraining, to attention based downstream heads.

Our contributions are: (i) a hallucination free inversion framework with strict algebraic theorems for spatial fidelity, zero structural hallucination, and path invariance; (ii) the first pixel level evidence of strong superposition in vision encoders, falsifying SFH along both of its structural predictions; (iii) an algebraic reinterpretation of CNN classification as destructive interference in the pixel manifold; and (iv) an interference aware compression algorithm with a $(1-1/e)$ approximation guarantee, validated across in distribution, corruption OOD, and cross dataset regimes.

\section{Method}
\label{sec:method}

\subsection{Why Raw VJP Fails: Two Algebraic Pathologies}
\label{sec:crises}

Before describing our framework, we identify the two pathologies that make raw Vector Jacobian Products (VJPs) unsuitable as pixel space visualizations. These are not engineering artifacts to be smoothed over; they are intrinsic algebraic obstacles rooted in the adjoint structure of strided convolutional networks.

\textbf{Crisis I (zero insertion shock).} Modern CNNs reduce spatial resolution by stride $s \geq 2$ at stage boundaries; in the forward pass this acts as decimation, but the chain rule forces the corresponding adjoint to be zero insertion upsampling $\uparrow_s$. When a VJP traverses a stage boundary on its way back to the pixel space, its spatial continuity is structurally shattered into a ``nail bed'' of high frequency sampling spikes. The rigorous derivation, which formalises this pathology as a discrete Dirac comb modulating the gradient continuum, is given in Supplementary~A.1.

\textbf{Crisis II (magnitude phase entanglement).} The raw VJP rigidly couples two physically distinct quantities in the same vector: the semantic importance of a concept, encoded by the magnitude $\|v\|$, and its spatial contour, encoded by the unit direction $v/\|v\|$. This coupling leaves the phase defenseless against additive noise. The two crises are entangled because the zero insertion spikes of Crisis~I act precisely as the noise that exploits the vulnerability of Crisis~II. Concretely, when the noise dominates the orthogonal direction, the angular deviation admits a strictly positive lower bound:
\begin{equation}
\label{eq:angle-collapse}
\left\| \frac{v + \epsilon}{\|v + \epsilon\|} - \frac{v}{\|v\|} \right\| \;\geq\; c \;=\; \frac{1}{\sqrt{(1+K)^2+1}} \;>\; 0
\quad\text{when}\quad \|\epsilon_{\perp}\| \geq \|v\| \quad\text{and}\quad \|\epsilon_{\parallel}\| \leq K\|v\|,
\end{equation}
where $\epsilon_\parallel$ and $\epsilon_\perp$ are the components of $\epsilon$ parallel and orthogonal to $v$. The proof and the physical justification of the parallel bound $\|\epsilon_\parallel\| \leq K\|v\|$ are deferred to Supplementary~A.1. Any direct readout of the raw VJP is therefore mathematically flawed, and our framework is designed to neutralise both pathologies by algebraic construction rather than by post hoc smoothing.

\subsection{Forward Path: Layer-wise Simplex Measure}
\label{sec:forward}

To decouple magnitude and phase, we first formalise the pre-trained encoder as a \textit{Learned Analysis Operator} $\mathcal{A}: X \longmapsto \{h_{l,c}\}_{l,c}$. In contrast to hand-crafted analysis bases such as Fourier or Wavelets, this operator unfolds the image into a data-driven frame system of semantic response probes. The forward path then distills the absolute semantic magnitude of each channel into a scalar, kept entirely independent of spatial topology. For a frozen encoder applied to input $X$ and a layer $l$ with $C_l$ channels, we extract per-channel activation strength via global average pooling on absolute responses, then $L_1$ normalise within each layer:
\begin{equation}
\label{eq:Z-E-def}
Z_{l,c} = \mathrm{GAP}(|h_{l,c}|) = \frac{1}{H_l W_l} \sum_{r,s} |h_{l,c}(r,s)|,
\qquad
E_{l,c} = \frac{Z_{l,c}}{\sum_{c'} Z_{l,c'}}.
\end{equation}
Two properties follow by construction. First, importance is defined strictly by absolute scale, while polarity, namely whether a feature is excited or actively suppressed, is left as a structural property of the spatial phase. Second, $\sum_c E_{l,c} \equiv 1$ as an algebraic identity rather than a soft regulariser; the family $\{E_{l,c}\}_c$ resides on the standard $(C_l-1)$-simplex $\Delta^{C_l-1}$ and acts as the encoder's confidence distribution over semantic concepts at layer $l$. Existence and non-degeneracy of the measure are routine and deferred to Supplementary~A.2.

\subsection{Backward Path: Channel Selective VJP and Markovian Decomposition}
\label{sec:backward}

Having distilled magnitude on the forward path, we recover the spatial phase of each semantic concept via a channel selective VJP. For channel $c$ at layer $l$, we construct the seed by zeroing all other channels of the feature map and define
\begin{equation}
\label{eq:vjp-seed}
\mathrm{seed}_{l,c} = e_c \odot h_l, \qquad
\mathrm{VJP}_{l,c} = \left(\frac{\partial h_l}{\partial X}\right)^{\!T} \mathrm{seed}_{l,c}.
\end{equation}
A short chain rule calculation (Supplementary~B.2) shows that this seed produces the exact identity
\begin{equation}
\label{eq:vjp-energy}
\mathrm{VJP}_{l,c} \;=\; \nabla_X \Phi_{l,c}(X), \qquad
\Phi_{l,c}(X) := \tfrac{1}{2}\|h_{l,c}(X)\|^2,
\end{equation}
This identity reveals that the VJP is an \textit{endogenous activation-weighted projection}: spatial positions with stronger activations contribute proportionally more to the pixel-space gradient. Consequently, the spatial distribution of the VJP strictly and autonomously aligns with the highly discriminative receptive field actually attended by the encoder, introducing no heuristic approximations and no external prior.

Crucially, the global adjoint $(\partial h_l / \partial X)^T$ is not a single smooth map. By the chain rule it factorises into a strict Markovian sequence of stage-wise pullbacks,
\begin{equation}
\label{eq:markov-decomp}
\left(\frac{\partial h_l}{\partial X}\right)^{\!T}
= \left(\frac{\partial h_{\mathrm{stem}}}{\partial X}\right)^{\!T}
\!\circ\,
\left(\frac{\partial h_0}{\partial h_{\mathrm{stem}}}\right)^{\!T}
\!\circ \cdots \circ\,
\left(\frac{\partial h_l}{\partial h_{l-1}}\right)^{\!T},
\end{equation}
where each factor crosses exactly one manifold boundary. Each such crossing is precisely where the zero insertion shock of Crisis~I strikes. This decomposition therefore identifies, by the algebraic structure of the chain rule itself, the exact set of locations where any corrective mechanism must be inserted.

\subsection{LAC: Amplitude Stripping at Manifold Boundaries}
\label{sec:lac}

We deploy a Local Adjoint Corrector (LAC) at every Markovian boundary identified above, namely the stride-2 stage transitions and the initial stem cliff that bridges the pixel space to the first feature manifold. Intra-stage convolutions are not intervened upon, since they are isomorphic mappings on the same semantic manifold whose adjoint flow is already smooth.

Each LAC is a per channel GroupNorm with affine parameters, configured so that each channel forms its own normalisation group ($\texttt{num\_groups}=\texttt{num\_channels}$). Letting $v\in\mathbb{R}^{H_l W_l}$ denote the flattened VJP for channel $c$ and $P_{\perp\mathbf{1}} = I - \frac{1}{H_l W_l}\mathbf{1}\mathbf{1}^{T}$ the orthogonal projector onto the zero-mean hyperplane, the LAC implements
\begin{equation}
\label{eq:gn-strip}
\mathrm{GN}(v) \;=\; \gamma_c \cdot \sqrt{H_l W_l} \cdot \frac{P_{\perp\mathbf{1}}\,v}{\|P_{\perp\mathbf{1}}\,v\|} \;+\; \beta_c.
\end{equation}
This factorisation makes the LAC's three geometric actions explicit: $P_{\perp\mathbf{1}}$ removes the DC drift that carries the structural quantisation noise of Crisis~I; division by $\|P_{\perp\mathbf{1}}\,v\|$ strips the absolute magnitude that couples Crisis~II; the unit direction $P_{\perp\mathbf{1}}v / \|P_{\perp\mathbf{1}}v\|$ thereby preserves the spatial phase intact. The Crisis~I noise carriers are thus not smoothed by heuristics but algebraically annihilated in the inner product space (Supplementary~A.3 and~B.3).

Composing the per stage LACs with the Jacobian transposes of \eqref{eq:markov-decomp} yields the cascade
\begin{equation}
\label{eq:lac-cascade}
\widetilde{V}_{l,c} \;=\; \Psi_{\mathrm{LAC}_{\mathrm{stem}}} \circ J_{\mathrm{stem}}^T \circ \Psi_{\mathrm{LAC}_0} \circ J_0^T \circ \cdots \circ \Psi_{\mathrm{LAC}_l} \circ J_l^T \,(\mathrm{seed}_{l,c}),
\end{equation}
whose output $\widetilde{V}_{l,c}$ is a dimensionless, scale free spatial basis: it carries the spatial contour of the semantic concept and nothing else. Magnitude phase decoupling is at this point complete, with $E_{l,c}$ owning energy and $\widetilde{V}_{l,c}$ owning structure.

\subsection{Synthesis, Training, and Inference}
\label{sec:synthesis}

With the energy measure $E_{l,c}$ distilled in the forward path and the scale-free spatial basis $\widetilde{V}_{l,c}$ refined in the backward adjoint path, the theoretical closure of the dual frame requires an exact algebraic synthesis. For each layer $l\in\{0,\ldots,L-1\}$, we synthesise the pixel-space reconstruction as a layer-wise weighted superposition:

\begin{equation}
\label{eq:synthesis}
\widehat{X}_l \;=\; \sum_{c=1}^{C_l} E_{l,c} \cdot \widetilde{V}_{l,c},
\end{equation}
where the weights are the forward simplex measure and the bases come from the same layer's adjoint. Layer-wise restriction is required for geometric commensurability: $E_{l,c}$ and $\widetilde{V}_{l,c}$ are derived from the same abstraction level, so their combination is algebraically well-defined; mixing across stages is not. Because \eqref{eq:synthesis} is a strict additive decomposition, isolating the $c$-th term yields a per channel pixel space visualisation $\widehat{X}_{l,c} = E_{l,c}\widetilde{V}_{l,c}$ with no retraining.

The framework is trained on a single objective, the deepest stage reconstruction
\begin{equation}
\label{eq:loss}
\mathcal{L} \;=\; \|X - \widehat{X}_{L-1}\|_1.
\end{equation}
By \eqref{eq:lac-cascade}, the adjoint path from stage $L{-}1$ traverses every LAC in the system, so this single path provides full gradient coverage of all trainable parameters. Detailed justifications for this single-path choice and for the $L_1$ norm are given in Supplementary~A.4. At inference, the trained LAC is reused to produce all $L$ layer-wise inversions $\{\widehat{X}_l\}_{l=0}^{L-1}$ without any retraining, since the LAC modules required by stage $l<L{-}1$ form a strict subset of those exercised by the deepest path.

\subsection{Theoretical Guarantees}
\label{sec:guarantees}

We now state the two algebraic guarantees that distinguish our framework from heuristic visualisation methods. All proofs are deferred to Supplementary~B.

\begin{theorem}[Spatial Fidelity, see Supp.~B.4]
\label{thm:spatial-fidelity}
For a ReLU based encoder, define the effective receptive field of channel $c$ at layer $l$ on input $X$ as
\[
\mathrm{EF}_{l,c}(X) = \bigl\{(i,j) : \exists\,(r,s)\;\text{s.t.}\;h_{l,c}(r,s)>0\;\text{and}\;\partial h_{l,c}(r,s)/\partial X(i,j)\neq 0 \bigr\}.
\]
Then
\begin{equation}
\label{eq:spatial-fidelity}
\mathrm{supp}\bigl(\mathrm{VJP}_{l,c}\bigr) \;\subseteq\; \mathrm{EF}_{l,c}(X).
\end{equation}
\end{theorem}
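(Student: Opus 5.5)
The plan is to expand the VJP coordinatewise using the energy identity \eqref{eq:vjp-energy} and then read off the support directly. By \eqref{eq:vjp-seed}, the seed is $e_c \odot h_l$, which is supported only on channel $c$. Writing the transposed Jacobian as a sum over output coordinates therefore gives, for every pixel $(i,j)$,
\begin{equation*}
\mathrm{VJP}_{l,c}(i,j) \;=\; \sum_{(r,s)} h_{l,c}(r,s)\,\frac{\partial h_{l,c}(r,s)}{\partial X(i,j)} ,
\end{equation*}
which is exactly $\partial_{X(i,j)}\tfrac12\|h_{l,c}\|^2$. Once this identity is in place (Supplementary~B.2), the theorem reduces to a statement about which summands can be nonzero.

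The key step is the contrapositive. Take a pixel $(i,j)\notin\mathrm{EF}_{l,c}(X)$. By the definition of $\mathrm{EF}_{l,c}(X)$, for every $(r,s)$ either $h_{l,c}(r,s)\le 0$ or $\partial h_{l,c}(r,s)/\partial X(i,j)=0$. I would split the sum into three classes of terms:
\begin{itemize}
\item if $h_{l,c}(r,s)=0$, the summand vanishes because of the factor $h_{l,c}(r,s)$;
\item if $\partial h_{l,c}(r,s)/\partial X(i,j)=0$, the summand vanishes because of the derivative factor;
\item if $h_{l,c}(r,s)<0$, neither factor is obviously zero, so this case needs its own argument.
\end{itemize}
If every summand vanishes, then $\mathrm{VJP}_{l,c}(i,j)=0$, which gives $\mathrm{supp}(\mathrm{VJP}_{l,c})\subseteq \mathrm{EF}_{l,c}(X)$.

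The main obstacle is the third class: locations where $h_{l,c}(r,s)<0$ but the derivative is nonzero. The hypothesis "ReLU based encoder" has to be used here. My first approach is to take $h_l$ to be the post-activation stage output. In standard ResNet/VGG stages the block ends with a ReLU, so $h_{l,c}\ge 0$ everywhere. The third class is then empty, and the claim follows immediately. I would state this convention explicitly as part of the proof.

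If $h_l$ instead denotes a pre-activation (for example a BatchNorm output before the ReLU), that argument fails. In that case I would try a different fact: since $h_{l,c}(r,s)\le 0$ corresponds to an inactive output in the subsequent ReLU, one could redefine the seed through the ReLU and use that its almost-everywhere derivative $\mathbf 1[h>0]$ zeroes those paths. This fallback requires the seed convention to agree with the post-ReLU map, so I would flag this as the assumption needed.

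At ReLU kinks, derivatives should be understood in the subgradient convention of autodiff, where the derivative at $0$ is taken to be $0$. This convention only removes nonzero terms, so it cannot enlarge the support and does not affect the inclusion.
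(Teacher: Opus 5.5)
Your proposal is correct and follows the paper's proof. Like the paper, you expand $\mathrm{VJP}_{l,c}(i,j)=\sum_{(r,s)} h_{l,c}(r,s)\,\partial h_{l,c}(r,s)/\partial X(i,j)$ via the energy identity, take $(i,j)\notin\mathrm{EF}_{l,c}(X)$, and kill every summand using the post-ReLU nonnegativity $h_{l,c}\ge 0$; the paper makes the same assumption, writing $h_{l,c}=\mathrm{ReLU}(z_{l,c})$ and noting $h_{l,c}(r,s)\geq 0$. Your handling is in fact slightly tighter: you dispose of the active terms directly from the definition of $\mathrm{EF}_{l,c}$ instead of through the paper's receptive-field argument, and you kill the $h=0$ terms by the factor $h$ alone, so the paper's sub-case analysis on the pre-activation is unnecessary.
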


In words, the VJP never assigns nonzero attribution to any pixel outside the spatial region actively used by the encoder.

\begin{corollary}[Zero Structural Hallucination, see Supp.~B.7]
\label{cor:zero-hallucination}
Let $\nabla_{i,j}$ denote the discrete spatial gradient operator on the pixel grid, and $\overline{(\cdot)}$ the one-pixel topological closure. Then
\begin{equation}
\label{eq:zero-halluc-main}
\mathrm{supp}\bigl(\nabla_{i,j}\widehat{X}_l\bigr) \;\subseteq\; \overline{\bigcup_{c\,:\,E_{l,c}>0} \mathrm{EF}_{l,c}(X)}.
\end{equation}
\end{corollary}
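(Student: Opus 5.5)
The plan is to reduce the corollary to Theorem~\ref{thm:spatial-fidelity} in three moves: localise each LAC-corrected basis $\widetilde{V}_{l,c}$, pass the localisation through the synthesis \eqref{eq:synthesis}, and then account for the one-pixel enlargement caused by the discrete gradient.

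\textbf{Step 1: localising $\widetilde{V}_{l,c}$.} Theorem~\ref{thm:spatial-fidelity} only controls the raw $\mathrm{VJP}_{l,c}$, whereas $\widehat{X}_l$ is built from the cascade \eqref{eq:lac-cascade}. I will therefore first prove the weaker statement that $\widetilde{V}_{l,c}$ is \emph{locally constant} outside $\mathrm{EF}_{l,c}(X)$, rather than zero there.

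The argument goes stage by stage along the Markovian factorisation \eqref{eq:markov-decomp}, by induction from stage $l$ down to the stem.

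1. At each stage, the Jacobian transpose $J_k^T$ is a ReLU-gated adjoint. Its output is supported on the pullback of the active support, and the same activation-masking argument as in Theorem~\ref{thm:spatial-fidelity} shows this pullback stays inside the receptive-field tube of active units.
2. By \eqref{eq:gn-strip}, each LAC acts affinely on every channel. At positions where the incoming signal vanishes, the output equals the constant $-\gamma_c\sqrt{H_kW_k}\,\mu/\|P_{\perp\mathbf 1}v\|+\beta_c$, where $\mu$ is the mean.
3. Off the tube, the next $J^T$ therefore receives a spatially constant field per channel. Convolving a constant field yields a constant field away from boundary effects of the tube.

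Carrying this through the induction, I expect to conclude that $\widetilde{V}_{l,c}$ is constant on each connected component of the complement of $\mathrm{EF}_{l,c}(X)$, up to the one-pixel fringe where the stencils overlap the tube.

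\textbf{Step 2: synthesis.} Since \eqref{eq:synthesis} is linear, $\nabla_{i,j}\widehat{X}_l=\sum_c E_{l,c}\nabla_{i,j}\widetilde{V}_{l,c}$. Terms with $E_{l,c}=0$ drop out. Each remaining term has gradient supported where $\widetilde{V}_{l,c}$ fails to be locally constant, which is $\mathrm{EF}_{l,c}(X)$ together with its boundary fringe. A sum of functions is locally constant wherever every summand is, so $\mathrm{supp}(\nabla\widehat{X}_l)$ lies in the union of these sets.

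\textbf{Step 3: closure.} The discrete forward difference at $(i,j)$ involves $(i,j)$ and its neighbours. It can therefore be nonzero only if one of those pixels lies in $\bigcup_c\mathrm{EF}_{l,c}(X)$, which is exactly the one-pixel closure $\overline{(\cdot)}$ in \eqref{eq:zero-halluc-main}.

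\textbf{Main obstacle.} The hard part is Step~1. Away from the active region, the GroupNorm shift $\beta_c$ and the DC term make $\widetilde{V}$ constant but not zero. When a constant field is passed through a further strided adjoint $J^T$, zero insertion turns it into a comb, which is \emph{not} constant. My first attempt will be to show that the comb stays inside the pullback of the tube of active units: off the tube, ReLU gates vanish, so $J^T$ kills the constant rather than combing it. That reduces the off-tube signal at each stage to the LAC constant alone.

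If this fails, the fallback is to absorb the stride-induced spread into the definition of $\mathrm{EF}$. The justification is that $\partial h_{l,c}/\partial X$ already encodes the full composite receptive field of the active units, so the spread cannot leave it.
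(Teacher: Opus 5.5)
Step 1 is where your argument breaks, and neither repair you propose works. The constant that an intermediate LAC leaves behind is not confined to channel $c$. The LAC outputs $a_{k,c'}u_{k,c'}+b_{k,c'}\mathbf{1}$ in every channel $c'$ of that level. The next Jacobian transpose is not a fixed convolution; it is gated by the activation pattern of the whole stage. So the leakage term $J^T(b_k\otimes\mathbf{1})$, with entries $\sum_{c'}b_{k,c'}\sum_{r,s}\partial h_{k,c'}(r,s)/\partial h_{k-1,c''}(i,j)$, is nonzero and spatially varying (ReLU masks, stride combs, max-pool routing). It is nonzero throughout the receptive fields of \emph{all} active units at level $k$.

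Off $\mathrm{EF}_{l,c}(X)$ those gates are generally open, because other channels are active there. Hence $J^T$ neither kills the constant (your first attempt) nor keeps it constant. The resulting structure reaches the pixel grid and survives the final GroupNorm. Consequently $\widetilde V_{l,c}$ is in general not locally constant off $\mathrm{EF}_{l,c}(X)$, nor even off $\bigcup_{c\in\mathcal{C}^+}\mathrm{EF}_{l,c}(X)$. Your fallback fails as well. This leakage never passes through $\partial h_{l,c}/\partial X$, so it is not encoded in $\mathrm{EF}_{l,c}$, and enlarging EF would change the statement being proved.

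What is missing is a hypothesis, not a sharper estimate. If some region drives active units at an intermediate level while no active level-$l$ unit depends on it, the inclusion can fail.

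The paper (Supp.~B.7) assumes the Nested Effective Field Condition: $\mathrm{EF}_{k,c'}(X)\subseteq\mathcal{E}_l:=\bigcup_{c\in\mathcal{C}^+}\mathrm{EF}_{l,c}(X)$ for every intermediate level $k$ and channel $c'$. It then tracks the \emph{value} support of the pre-final-GroupNorm signal $f_{l,c}$ by backward induction. Each intermediate LAC is split as $\mathrm{diag}(a_k)u_k+b_k\otimes\mathbf{1}$. The main part has the same support as $u_k$, and the leakage part lands in $\bigcup_{c'}\mathrm{EF}_{k,c'}(X)\subseteq\mathcal{E}_l$. Note that even for a single channel the target is $\mathcal{E}_l$, not $\mathrm{EF}_{l,c}(X)$.

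Your local-constancy idea is the right tool only at the last step. The stem GroupNorm's centring, nonzero rescaling and constant bias $\beta_c^*\mathbf{1}$ leave gradient support unchanged. So $\mathrm{supp}(\nabla_{i,j}\widetilde V_{l,c})=\mathrm{supp}(\nabla_{i,j}f_{l,c})\subseteq\overline{\mathcal{E}_l}$, after which your Steps 2 and 3 go through as written.
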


We emphasise that Corollary~\ref{cor:zero-hallucination} is a strict algebraic identity: it does not rely on any asymptotic convergence of the LAC's affine parameters or on any empirical regularisation. Every spatial edge, texture, or pattern in the synthesised image originates absolutely and exclusively from the effective receptive fields of channels genuinely activated by the frozen encoder. The trainable LAC parameters thus lack the mathematical degrees of freedom to fabricate any spatial structure; this is the precise sense in which the framework is hallucination free.

Three additional theoretical results undergird the framework. (i) Lemma~B.1 establishes the asymptotic convergence $\beta^*\!\to 0,\ \gamma^*\!>0$ of LAC's affine parameters under the $L_1$ loss. (ii) Theorem~B.5 and Corollary~B.6 prove the algebraic equivalence (up to a tightly bounded cascade direction residual) between $\widehat{X}_l$ and a weighted superposition of unit-normalised gradient-ascent feature visualisations, providing a formal mathematical bridge to the feature visualisation literature. (iii) Proposition~B.8 establishes the path invariance of shared LAC parameters across source stages, which underpins the framework's plug-and-play extension to arbitrary differentiable downstream heads (Section~\ref{sec:vit_generalization}). (iv) Proposition~B.8 establishes the path invariance of shared LAC parameters across source stages, which underpins the framework's plug-and-play extension to arbitrary differentiable downstream heads (Section~\ref{sec:vit_generalization}).

\section{Experiments}
\label{sec:experiments}

\subsection{Reconstruction Fidelity Against Inversion Baselines}
\label{sec:exp_comparison}

We compare against four representative baselines: \textbf{Vanilla Gradient} (raw VJP readout), \textbf{Mahendran \& Vedaldi}~\cite{mahendran2015understanding} (iterative TV/$\ell_2$ optimization), \textbf{Robust Inversion}~\cite{engstrom2019adversarial} (adversarially robust encoder), and \textbf{UpConvNet}~\cite{dosovitskiy2016inverting} (feedforward upconvolutional decoder), all inverting from the deepest-stage features of the same frozen encoder. We evaluate across five datasets (ImageNet, CIFAR-100, CUB-200, Pets, Dogs) and four backbones (ResNet-50, ResNet-18, DenseNet-121, ConvNeXt-Base); Table~\ref{tab:inversion_comparison} reports SSIM ($\uparrow$) and LPIPS ($\downarrow$) in the format \textit{R-50/R-18/D-121/C-Base}.

\begin{table}[t]
\centering
\caption{Quantitative comparison of feature inversion fidelity (\%). Results in format \textit{R-50/R-18/D-121/C-Base}. Best in \textbf{bold}.}
\label{tab:inversion_comparison}
\resizebox{\textwidth}{!}{
\begin{tabular}{lcccccc}
\toprule
Method & Metric & ImageNet & CIFAR-100 & CUB-200 & Pets & Dogs \\
\midrule
\textbf{Ours} & SSIM $\uparrow$ & \textbf{41.7/34.6/38.5/43.2} & 67.6/36.8/40.4/-- & \textbf{54.8/44.2/40.7/47.7} & \textbf{52.1/35.4/34.9/37.6} & \textbf{51.7/36.1/34.8/38.1} \\
& LPIPS $\downarrow$ & \textbf{43.0/57.7/45.8/53.5} & 85.4/93.8/94.6/-- & \textbf{52.9/64.4/61.5/64.6} & \textbf{54.3/72.7/63.8/71.5} & \textbf{51.5/72.1/63.5/70.0} \\
Vanilla & SSIM $\uparrow$ & 4.8/2.5/4.3/3.7 & 4.5/2.0/4.1/-- & 5.0/3.1/4.9/6.1 & 3.5/1.9/3.0/4.1 & 5.1/3.2/4.7/5.2 \\
& LPIPS $\downarrow$ & 109.5/119.0/107.9/108.8 & 140.8/173.0/144.6/-- & 129.1/140.4/133.9/134.1 & 124.9/134.3/119.6/129.9 & 128.0/139.6/123.6/125.7 \\
Mahendran & SSIM $\uparrow$ & 7.5/6.2/6.9/8.7 & 13.8/7.3/9.8/-- & 10.5/8.4/6.2/12.5 & 13.2/7.5/9.6/13.9 & 10.5/6.5/8.2/14.1 \\
& LPIPS $\downarrow$ & 99.0/107.0/108.8/104.5 & 136.4/164.0/142.7/-- & 118.6/133.2/127.3/95.1 & 113.3/125.0/112.2/87.9 & 115.6/131.9/118.3/81.3 \\
Robust & SSIM $\uparrow$ & 7.6/6.5/8.7/9.6 & 8.7/7.4/10.1/-- & 7.1/4.7/4.9/8.9 & 7.4/4.7/6.3/9.7 & 6.2/4.4/5.2/10.1 \\
& LPIPS $\downarrow$ & 105.2/113.6/116.8/101.6 & 142.7/165.4/143.0/-- & 121.0/132.1/131.5/105.8 & 119.2/124.1/121.7/106.0 & 120.8/131.4/123.9/99.6 \\
UpConvNet & SSIM $\uparrow$ & 15.9/18.5/14.5/12.0 & \textbf{69.8/63.4/67.4/--} & 43.5/36.4/38.5/36.2 & 43.5/39.6/41.9/43.1 & 36.4/32.3/33.6/37.4 \\
& LPIPS $\downarrow$ & 89.3/92.1/92.2/89.0 & \textbf{62.8/73.8/75.4/--} & 64.1/76.8/71.2/87.0 & 76.5/82.6/77.9/76.0 & 84.4/91.4/87.0/85.2 \\
\bottomrule
\end{tabular}}
\end{table}

\begin{figure}[t]
\centering
\begin{minipage}{0.68\textwidth}
\centering\includegraphics[width=\linewidth]{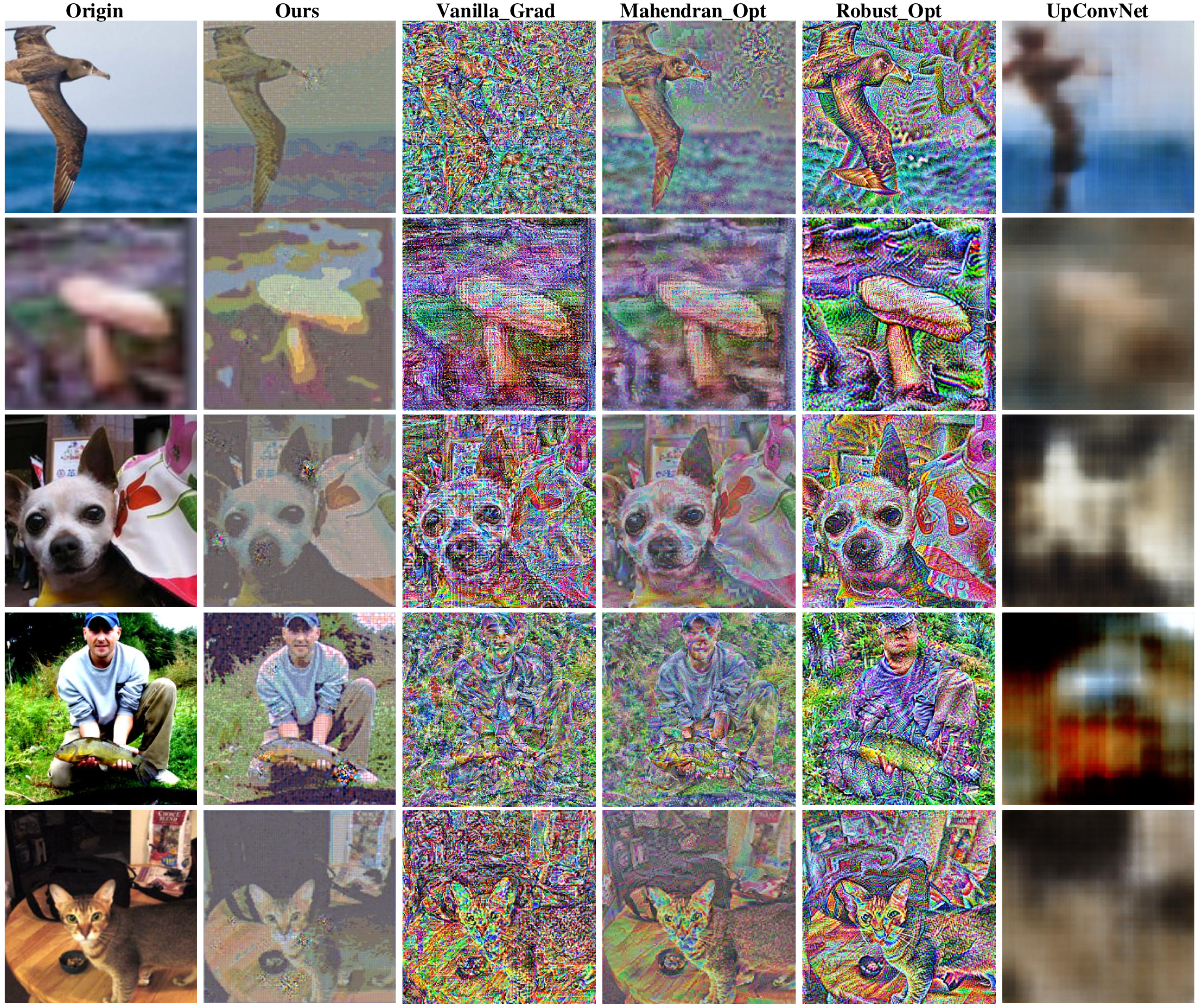}
\caption{Qualitative inversion comparison; our framework preserves structural details and sharp contours that are lost or distorted by baselines.}
\label{fig:qualitative_comparison}
\end{minipage}\hfill
\begin{minipage}{0.28\textwidth}
\centering\includegraphics[width=\linewidth]{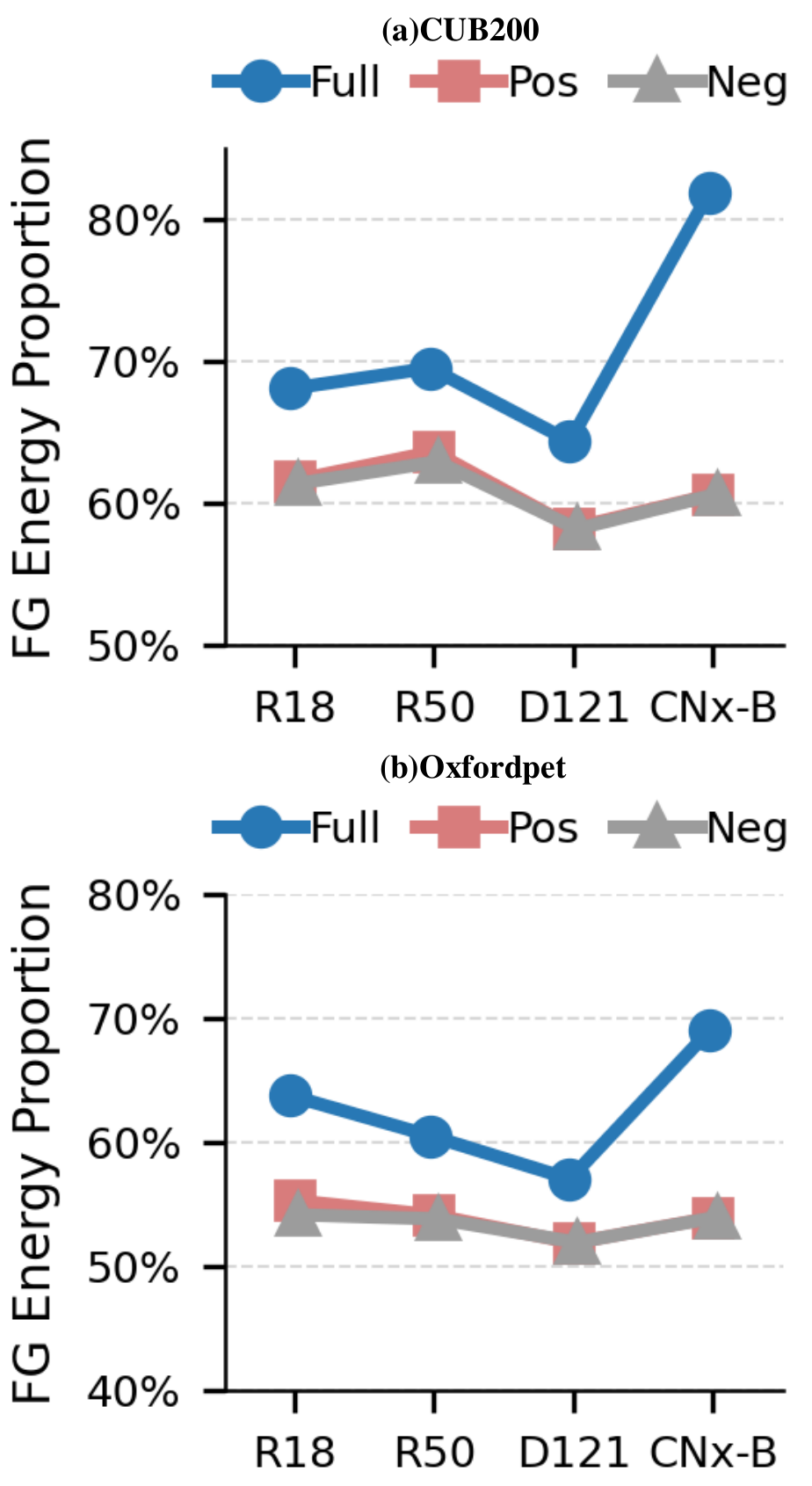}
\caption{Foreground energy proportion $\mathrm{FG}$ on (a) CUB-200 and (b) Pet, across four ImageNet-pretrained encoders.}
\label{fig:fg_energy}
\end{minipage}
\end{figure}

Our framework dominates both metrics across all backbones on four of the five datasets. On ImageNet, our SSIM surpasses the strongest baseline UpConvNet by $2.1$--$3.6\times$ with LPIPS improvements of $35$--$50\%$, and similar margins persist on CUB-200, Pets, and Dogs. Vanilla Gradient yields the worst scores (SSIM $\leq 6.1$), confirming the dual pathologies of Section~\ref{sec:crises}; optimization-based baselines remain in a low-fidelity tier, indicating that heuristic priors cannot structurally resolve the underlying algebraic crisis. The sole exception is CIFAR-100, where UpConvNet's over-smoothed outputs coincidentally match the inherently diffuse $32{\times}32{\to}224{\times}224$ targets while our method is penalized for faithfully recovering the sharp natural-image features inherent to pre-training. Figure~\ref{fig:qualitative_comparison} corroborates this visually; detailed qualitative analysis appears in Supp.~C.1.

\subsection{Holographic Superposition and Destructive Interference}
\label{sec:exp_paradigm}

Having established that our framework produces hallucination-free reconstructions, we now use it as a geometric probe to test, at the pixel level, an assumption that has shaped CNN interpretability for a decade: that classification proceeds by spatial selection over a pre-cleaned feature pool.

\subsubsection{The Foreground Energy Paradox}
\label{sec:fg_paradox}

\paragraph{Class-directional inversion.}
Let $h_i(X) := \mathrm{GAP}(\mathbf{h}_{L-1,i}(X))$ and define the importance weight $w_i^{(c)} = \partial \mathrm{logit}_c(X)/\partial h_i(X)$. By the linearity of the adjoint together with the additive synthesis of Section~\ref{sec:synthesis}, the pixel-space image of the class direction is
\begin{equation}
\hat{X}^{(c)}(X) = \sum_{i=1}^{C_{L-1}} w_i^{(c)}\,\tilde{V}_{L-1,i}(X), \qquad \hat{X}^{(c)} = \underbrace{\sum_{w_i^{(c)}>0}\!\! w_i^{(c)}\,\tilde{V}_{L-1,i}}_{\hat{X}^{(c)}_{+}} + \underbrace{\sum_{w_i^{(c)}<0}\!\! w_i^{(c)}\,\tilde{V}_{L-1,i}}_{\hat{X}^{(c)}_{-}},
\end{equation}
where the identity $\nabla_X \sum_i w_i^{(c)} h_i(X) \equiv \sum_i w_i^{(c)} \nabla_X h_i(X)$ is an exact consequence of the chain rule, making this decomposition an algebraic property of the classifier rather than a heuristic attribution.

\paragraph{Foreground energy proportion.}
For any reconstruction $\hat{X}$ and ground-truth foreground mask $M_{\mathrm{fg}}$, define
$\mathrm{FG}(\hat{X}) := \sum_{(i,j)\in M_{\mathrm{fg}}}\!\!\hat{X}(i,j)^2 / \sum_{(i,j)} \hat{X}(i,j)^2 \in [0,1].$
We evaluate on \textbf{CUB-200-2011}~\cite{wah2011caltech} and \textbf{Oxford-IIIT Pet}~\cite{parkhi2012cats} across four ImageNet-pretrained backbones (ResNet-18/50, DenseNet-121, ConvNeXt-Base), with $c$ taken as the classifier's top-1 prediction. Figure~\ref{fig:fg_energy} reports the result. Three patterns are unambiguous and reproduce across every architecture-dataset pair: \textbf{(O1)} the positive and negative hemispheres exhibit foreground energy proportions that are essentially indistinguishable; \textbf{(O2)} the full reconstruction $\hat{X}^{(c)}$ exhibits a substantially higher foreground energy proportion than either hemisphere alone; \textbf{(O3)} the gap grows monotonically with architectural capacity.

\paragraph{The paradox.} Jointly the observations form a striking algebraic signature: foreground concentration is generated by the summation itself, not inherited from either sign hemisphere. Any account in which the classifier acts as a spatial selector predicts the opposite pattern. The empirical signature
$\mathrm{FG}(\hat{X}^{(c)}_{+}) \approx \mathrm{FG}(\hat{X}^{(c)}_{-}) \ll \mathrm{FG}(\hat{X}^{(c)})$
admits no such explanation.

\subsubsection{Visual Falsification of the Spatial Funnel Hypothesis}
\label{sec:funnel_falsification}

\begin{figure}[t]
\centering\includegraphics[width=0.95\linewidth]{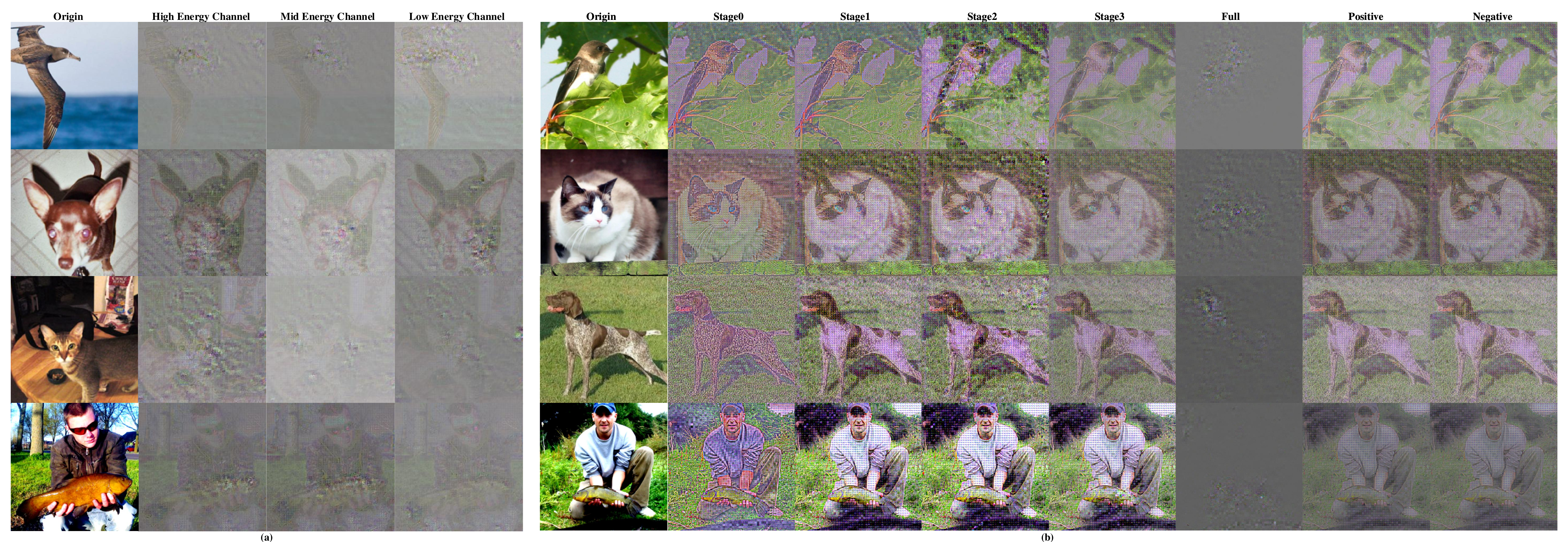}
\caption{(a) Per-channel inversions $\tilde{V}_{L-1,i}(X)$ at the deepest stage, grouped by forward energy rank: every channel recovers a complete rendering of the full scene, falsifying P1. (b) Stage-wise inversions $\hat{X}_{0\text{--}3}$, the class-directional reconstruction $\hat{X}^{(c)}$, and its sign hemispheres $\hat{X}^{(c)}_{\pm}$: deep stages retain the full scene, and the two hemispheres are nearly indistinguishable, falsifying P2.}
\label{fig:channel_holography}
\end{figure}

If SFH (Section~\ref{sec:intro}) holds, two structural predictions follow: \textbf{(P1)} per-channel inversions $\tilde{V}_{L-1,i}(X)$ should be spatially sparse, with channels qualitatively differing in what they render; \textbf{(P2)} the positive-weight reconstruction $\hat{X}^{(c)}_{+}$ should already resemble the foreground while $\hat{X}^{(c)}_{-}$ expresses a complementary pattern. Figure~\ref{fig:channel_holography} unambiguously falsifies both.

\textbf{Falsification of P1} (Fig.~\ref{fig:channel_holography}a). Across all energy ranks, every individual channel recovers a visually complete, globally-supported rendering of the full input scene. No channel specializes in a localized fragment; what varies is a global semantic ``lens'' (texture emphasis, chromatic weighting, edge polarity), not a spatial support. \textbf{Falsification of P2} (Fig.~\ref{fig:channel_holography}b). Two findings emerge. First, the stage-wise column reveals that background pixels are never spatially discarded: even the deepest stage $\hat{X}_3$ retains the full spatial extent of the scene; what shifts across stages is the style and frequency content, not the spatial support. Second, $\hat{X}^{(c)}_{+}$ and $\hat{X}^{(c)}_{-}$ are visually nearly indistinguishable, despite their algebraic sum sharply concentrating on the foreground, the precise visual counterpart of O1--O2.

The background is never removed by the encoder, never removed by either sign group of classifier weights, yet vanishes in their algebraic sum. This is the empirical signature of cancellation, not filtering.

\subsubsection{The Destructive-Interference Account}
\label{sec:new_paradigm}

We propose two hypotheses that exactly predict the falsification signature.

\paragraph{H1: Holographic Superposition.}
Every deep channel inversion decomposes as
\begin{equation}
\label{eq:h1}
\tilde{V}_{L-1,i}(X) = C_i(X)\cdot B(X) + \delta_i(X), \qquad \langle \delta_i(X), B(X)\rangle \approx 0,
\end{equation}
where $B(X)$ is a shared low-rank background direction common to all channels, $C_i(X)\in\mathbb{R}$ is the scalar projection coefficient of channel $i$ onto that direction and $\delta_i(X)$ is a channel-idiosyncratic residual carrying object-discriminative structure. The encoder retains $B(X)$ because any common-mode component cancels from the logit of every class regardless of magnitude; preserving it costs nothing and protects representational generality~\cite{elhage2022toy}. Empirical verification (Supp.~C.4): the leading eigenvector of the per-image inversion Gram matrix captures $20.9\%$ of total Gram energy; rank-one subtraction expands inter-channel variance by a factor of $1.27\!\times\!10^{4}$, and the cosine-similarity range widens from $[0.18,0.21]$ to $[-0.07,0.67]$. To our knowledge, Eq.~\eqref{eq:h1} constitutes the first direct pixel-level confirmation of strong superposition in vision encoders.

\paragraph{H2: Classification as Destructive Interference.}
Substituting H1 into the class-directional reconstruction yields the exact decomposition
\begin{equation}
\label{eq:h2}
\hat{X}^{(c)}(X) = B(X)\,\underbrace{\textstyle\sum_i w_i^{(c)} C_i(X)}_{\alpha^{(c)}(X)} + \underbrace{\textstyle\sum_i w_i^{(c)} \delta_i(X)}_{\text{foreground residual sum}}.
\end{equation}
A well-trained classifier is one whose weights satisfy two algebraic boundary conditions: \textbf{(B1)} $\alpha^{(c)}(X) \approx 0$ (sign-mixed cancellation of the shared background), and \textbf{(B2)} the residual sum aligns constructively on the class-$c$ object. The classifier is not a selector but a precisely-tuned linear interferometer.

H1 immediately explains O1 (each hemisphere retains the background at full amplitude), and H2 immediately explains the jump in O2 (the full sum satisfies B1 and the background is algebraically annihilated). The detailed resolution of the paradox and additional empirical verification of H1 are in Supp.~C.4.

\subsubsection{From Theory to Algorithm: Covariance-Volume Channel Selection}
\label{sec:covvol_algorithm}

H1 and H2 jointly identify which channels are functionally necessary for classification. Retaining a subset $S\subseteq\{1,\dots,C_{L-1}\}$ with $|S|=k$ yields the admissible interference subspace of foreground patterns reachable under the cancellation constraint (B1):
\begin{equation}
\label{eq:admissible-family}
\mathcal{F}_S(X) = \Bigl\{ \sum_{i\in S} w_i\,\tilde{V}_{L-1,i}(X) \mid \sum_{i\in S} w_i\,C_i(X) = 0 \Bigr\} \subset \mathbb{R}^{3 H_0 W_0},
\end{equation}
a $(|S|-1)$-dimensional linear subspace whose intrinsic geometric capacity we measure by the parallelepiped volume $\mathrm{Vol}(\mathcal{F}_S(X))$ (defined formally in Supp.~C.5). A larger volume admits a richer class of foreground patterns under sign-mixed background cancellation.

\begin{theorem}[Covariance-Volume Duality]
\label{thm:covvol_duality}
Under Hypothesis~\textup{H1} and a generic non-degeneracy condition, the admissible interference volume and the GAP covariance determinant share the same maximizers up to a data-independent positive constant:
\begin{equation}
\arg\max_{|S|=k}\; \mathbb{E}_X\!\left[\log \mathrm{Vol}^2\bigl(\mathcal{F}_S(X)\bigr)\right] = \arg\max_{|S|=k}\; \log \det(\Sigma_S),
\end{equation}
where $\Sigma_S$ is the centered GAP covariance submatrix indexed by $S$.
\end{theorem}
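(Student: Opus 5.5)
I will make the volume computable by writing it as a Gram determinant, then use H1 to factor out the shared background and reduce everything to a statement about GAP covariance. Let $V_S(X)\in\mathbb{R}^{3H_0W_0\times |S|}$ be the matrix whose columns are $\tilde V_{L-1,i}(X)$, $i\in S$, and let $c_S(X)=(C_i(X))_{i\in S}$.

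\textbf{Step 1: parametrize the admissible subspace.} The set $\mathcal{F}_S(X)$ is the image under $V_S$ of the hyperplane $c_S^\perp\subset\mathbb{R}^{|S|}$. Choose an orthonormal basis $U_S\in\mathbb{R}^{|S|\times(|S|-1)}$ of $c_S^\perp$. I take $\mathrm{Vol}^2(\mathcal{F}_S)=\det(U_S^T V_S^T V_S U_S)$, which is the convention I expect Supp.~C.5 to use. By Cauchy--Binet, or equivalently the Schur-complement identity for a bordered Gram matrix, this equals
\[
\det(G_S)\; \frac{c_S^T G_S^{-1} c_S}{\|c_S\|^2},\qquad G_S=V_S^TV_S .
\]

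\textbf{Step 2: substitute H1.} Write $V_S=B c_S^T+\Delta_S$ with $\Delta_S^T B\approx 0$. Then
\[
G_S=\|B\|^2 c_Sc_S^T+D_S,\qquad D_S=\Delta_S^T\Delta_S .
\]
On $c_S^\perp$ the rank-one background term is annihilated, so $U_S^T G_S U_S=U_S^T D_S U_S$. The cancellation constraint (B1) therefore removes $B$ exactly, and the volume depends only on the residual Gram matrix restricted to $c_S^\perp$.

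\textbf{Step 3: link residuals to GAP covariance.} By the activation-weighted identity \eqref{eq:vjp-energy} and the linearity of the adjoint, the GAP statistic $h_i(X)$ is the inner product of a fixed probe with the channel's pixel-space pullback. Its fluctuation across $X$ about the common mode is therefore carried by $\delta_i$, while $C_i$ absorbs the mean. Centering plays the same role that projecting out $c_S$ plays: it removes the shared rank-one direction. The generic non-degeneracy condition is the assumption I will use here. It should posit that $\mathbb{E}_X[\log\det(U_S^TD_S(X)U_S)]=\log\det\Sigma_S+\kappa$, where $\kappa$ does not depend on $S$ (equivalently, $D_S(X)$ equals $\Sigma_S$ up to a per-image isotropic scale $\lambda(X)$, giving $\kappa=(k-1)\mathbb{E}\log\lambda$). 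I will also need the remaining projection factor $c_S^T D_S^{-1} c_S/\|c_S\|^2$ to be constant over $|S|=k$. I would try to get this from the near-constant coefficients observed empirically, namely the tight cosine range $[0.18,0.21]$, which make $c_S\propto\mathbf{1}$, together with centering.

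\textbf{Step 4: conclude.} Taking the expectation of the log, the objective becomes $\log\det\Sigma_S$ plus constants, and the constant factor disappears under $\arg\max$. The main obstacle is Step 3. Identifying a pixel-space Gram matrix with a feature-space covariance is not an identity: it is essentially the content of the ``generic non-degeneracy'' hypothesis. The honest route is to state precisely which factorization is being assumed, then prove that the Schur-complement reduction preserves it up to an $S$-independent constant. The delicate point is that the constant depends on $k$, which is harmless, but it must not depend on which $S$ is chosen.
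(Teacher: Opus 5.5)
\textbf{Where your proposal agrees with the paper.} Your Steps 1--2 coincide with the paper's Steps 1--2. Your $V_S$-based volume agrees exactly with the paper's $\Delta_S$-based definition, because $V_SU_S=\Delta_SU_S$ whenever $c_S^\top U_S=0$; this holds without needing $\Delta_S^\top B\approx 0$. The bordered Schur identity $\det(U^\top GU)=\det(G)\,c^\top G^{-1}c/\|c\|^2$ is also the one the paper uses.

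\textbf{The gap: Step 3 is circular.} Step 3 is where the entire duality lives, and it has a genuine gap. The ``generic non-degeneracy condition'' you propose, $\mathbb{E}_X[\log\det(U_S^\top D_SU_S)]=\log\det\Sigma_S+\kappa$ with $\kappa$ independent of $S$, is the theorem itself in a stronger form. The paper's condition is much weaker and never mentions $\Sigma_S$: it asks only that $B(X)$ not be aligned with $\mathrm{span}\{\delta_i(X)\}_{i\in S}$ and that the spectrum of $G_\delta$ stay away from degeneracy. Your heuristic link also conflates two different operations:
\begin{itemize}
\item centering $H$ over the $N$ images, which removes the data-mean vector in $\mathbb{R}^{C_{L-1}}$;
\item restricting to $c_S(X)^\perp$, which is a per-image projection in coefficient space.
\end{itemize}
Nothing in your argument relates the two.

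\textbf{What the paper uses instead.} The missing idea is a relation that actually ties the GAP scalars to the pixel-space Gram. The paper proceeds in four moves:
\begin{itemize}
\item It pairs the synthesis relation $X=\sum_i h_i(X)\tilde V_{L-1,i}(X)$ with $\tilde V_{L-1,j}$.
\item It splits $\langle\tilde V_{L-1,i},\tilde V_{L-1,j}\rangle=C_iC_j\|B\|^2+\langle\delta_i,\delta_j\rangle$ via H1.
\item It invokes the LAC norm identity $\|\tilde V_{L-1,i}\|_2=\gamma_i^*\sqrt{H_0W_0}$ to write $\bar G_{\delta,S}=D_S\Sigma_SD_S+(\text{rank one})$. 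The prefactor $\prod_{i\in S}(\gamma_i^*)^2H_0W_0$ is $S$-independent when the $\gamma_i^*$ are nearly uniform.
\item It moves the expectation inside $\log\det$ by Jensen, not by assuming a per-image isotropic scale.
\end{itemize}

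\textbf{Your projection-factor step would also fail.} Your ``equivalently'' is incorrect. From $D_S(X)=\lambda(X)\Sigma_S$ you get
\[
\log\det(U_S^\top D_SU_S)=(|S|-1)\log\lambda+\log\det\Sigma_S+\log\bigl(c_S^\top\Sigma_S^{-1}c_S/\|c_S\|^2\bigr),
\]
so $\kappa$ is not $(k-1)\mathbb{E}\log\lambda$ alone. Taking $c_S\propto\mathbf{1}$ does not make $\mathbf{1}^\top\Sigma_S^{-1}\mathbf{1}/k$ independent of $S$, and this factor can move the maximizer. For $k=2$ the restricted determinant is $\tfrac12(\Sigma_{ii}+\Sigma_{jj}-2\Sigma_{ij})$. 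Take $\Sigma_{11}=\Sigma_{22}=10$, $\Sigma_{12}=-9$, $\Sigma_{33}=2$ and $\Sigma_{13}=\Sigma_{23}=0$. Then $\{1,2\}$ maximizes the restricted determinant ($19$ versus $6$), while $\{1,3\}$ maximizes $\det\Sigma_S$ ($20$ versus $19$).

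So the factor cannot simply be declared constant. The paper instead bounds $\rho_S$ between $\lambda_{\max}(G_\delta)^{-1}$ and $\lambda_{\min}(G_\delta)^{-1}$ and treats $\mathbb{E}_X\log\rho_S$ as a correction controlled by non-degeneracy. The paper's own Step 4 is only approximate at exactly these points: Jensen gives an upper bound only, and both the rank-one and $\rho_S$ corrections depend on $S$. It nevertheless supplies a mechanism linking $\bar G_{\delta,S}$ to $\Sigma_S$, which your Step 3 replaces with an assumption.
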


The proof, given in Supp.~C.5, reduces volume to a Gram determinant via the constraint hyperplane orthonormal restriction, then links the expected residual Gram to the GAP covariance via the LAC normalization identity. Direct optimization in inversion space is numerically degenerate because the LAC enforces $\|\tilde V_{L-1,i}\|_2 = \gamma_i^*\sqrt{H_0 W_0}$ to machine precision (Supp.~C.5); GAP space inherits the same maximizers without this constraint.

\begin{algorithm}[t]
\caption{Covariance-Volume Channel Selection}
\label{alg:covvol}
\begin{algorithmic}[1]
\State \textbf{Input:} GAP feature matrix $H\in\mathbb{R}^{N\times C}$; retention size $k$.
\State Compute centered covariance $\Sigma \leftarrow \frac{1}{N}(H-\bar H)^\top(H-\bar H)$; initialize $\tilde\Sigma\leftarrow\Sigma$, $S\leftarrow\varnothing$.
\For{$t = 1,\ldots,k$}
    \State $i^\star \leftarrow \arg\max_{i\notin S}\,\tilde\Sigma_{ii}$ \Comment{largest residual variance}
    \State $S\leftarrow S\cup\{i^\star\}$;\quad $\tilde\Sigma \leftarrow \tilde\Sigma - \tilde\Sigma_{:,i^\star}\tilde\Sigma_{i^\star,:}/\tilde\Sigma_{i^\star,i^\star}$
\EndFor
\State \textbf{Output:} $S$.
\end{algorithmic}
\end{algorithm}

Direct optimization $S^\star = \arg\max_{|S|=k}\det(\Sigma_S)$ is NP-hard in general~\cite{civril2009selecting}; however, $\log\det$ is monotone submodular on PSD matrices~\cite{kulesza2012determinantal}, so Algorithm~\ref{alg:covvol} (partial Cholesky with diagonal pivoting) inherits the classical $(1-1/e)$ approximation guarantee~\cite{nemhauser1978analysis}.

\subsubsection{Empirical Validation: Compression and the Geometry of OOD Failure}
\label{sec:covvol_validation}

\begin{figure}[t]
\centering\includegraphics[width=0.8\linewidth]{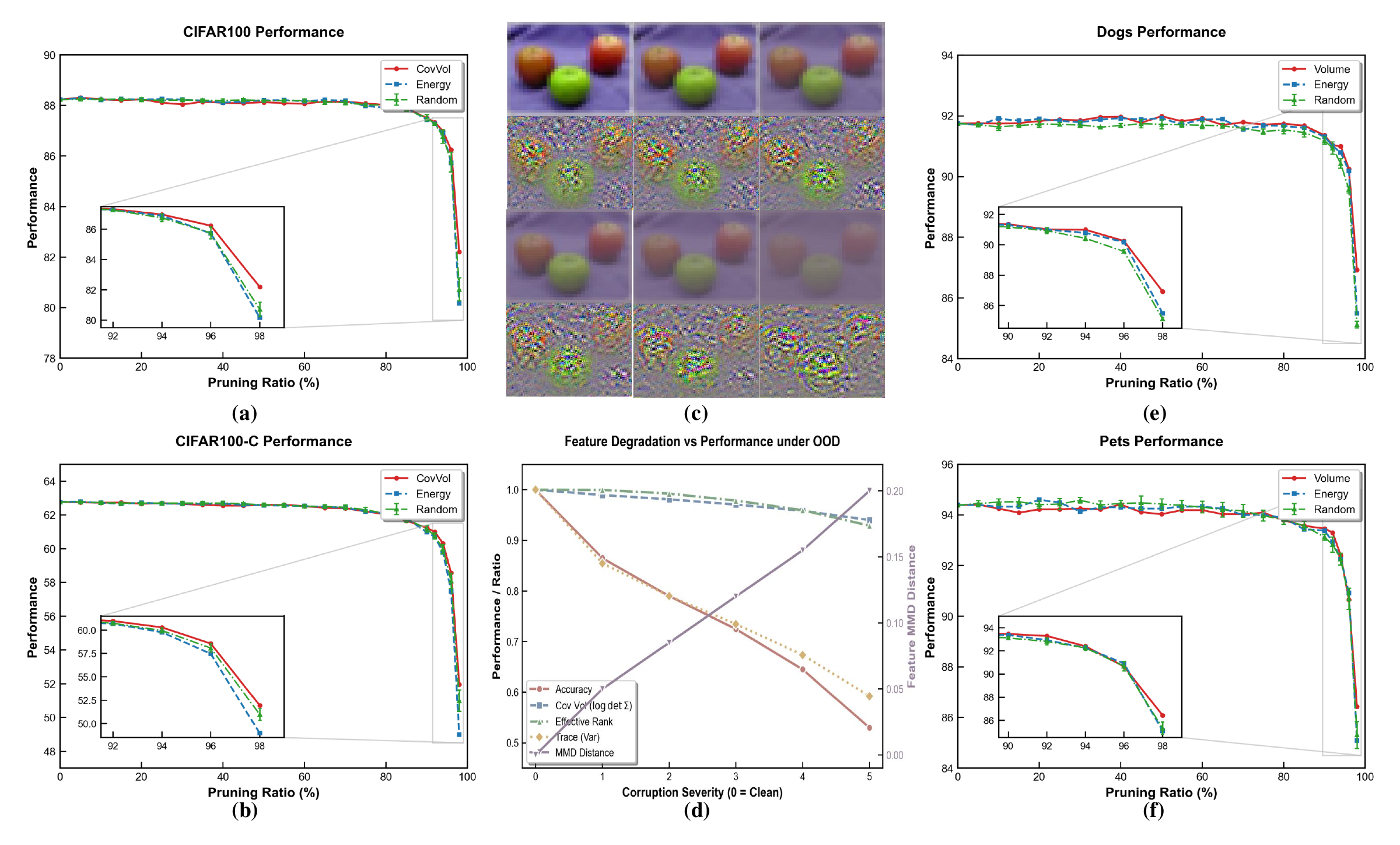}
\caption{(a,b,e,f) Pruning ratio vs accuracy on CIFAR-100 (in-distribution), CIFAR-100-C (corruption OOD), Stanford Dogs and Pets (cross-dataset). (c) Class-directional reconstructions degrade with corruption severity. (d) Normalized evolution of accuracy, $\log\det(\Sigma)$, effective rank, $\mathrm{Tr}(\Sigma)$, and feature-space MMD distance vs corruption severity.}
\label{fig:covvol_main}
\end{figure}

All experiments use a frozen ConvNeXt-Base. We compare three selection methods at the final layer: \textbf{CovVol} (Algorithm~\ref{alg:covvol}), \textbf{Energy} (top-$k$ by $\ell_2$ activation energy), and \textbf{Random} (mean$\pm$std over 50 seeds). For each pruning ratio $\rho$, we retain $\lceil(1-\rho)C_{L-1}\rceil$ channels and train a linear probe. Evaluation spans \textbf{CIFAR-100} (in-distribution), \textbf{CIFAR-100-C}~\cite{hendrycks2019benchmarking} (corruption OOD), \textbf{Stanford Dogs}~\cite{khosla2011novel}, and \textbf{Oxford-IIIT Pet} (cross-dataset).

\textbf{Finding 1: A robustness plateau refutes spatial specialization.} Below $\sim$90\% pruning, all three methods retain accuracy within fractions of a percent of full-model baselines (CIFAR-100: $88.23\to 87.50$ at 90\% pruning; Dogs: $91.75\to 91.36$; Pets: $94.39\to 93.46$; CIFAR-100-C: $62.77\to 61.23$). Random selection performing on par with principled selectors is the precise signature of holographic superposition (H1): channels are informationally redundant, and the interference geometry is reconstructible from almost any sufficiently large subset.

\textbf{Finding 2: In the tail, interference geometry reveals itself.} The three methods separate at extreme compression ($\rho \geq 95\%$). At $98\%$ pruning, CovVol reaches $82.20\%$ on CIFAR-100 vs $80.17\%$ (Energy) and $80.72\pm 0.46\%$ (Random); on CIFAR-100-C the gap widens to $51.96$ vs $48.98$ vs $50.99\pm 0.63$. The relative CovVol-vs-Energy gap is uniformly larger on OOD than in-distribution, confirming that covariance-volume selection preserves interference capacity, not merely training-distribution accuracy.

\textbf{Finding 3: OOD failure is interference breakdown.} Figure~\ref{fig:covvol_main}(c) visualizes $\hat{X}^{(c)}$ at increasing corruption severities: the foreground progressively dissolves and background bleeds back into the scene, the visual fingerprint of (B1) failing as $B(X)$ itself is perturbed. Panel~(d) plots the normalized evolution vs severity $s\in\{0,\ldots,5\}$ of accuracy, $\log\det(\Sigma)$, effective rank, $\mathrm{Tr}(\Sigma)$, and MMD distance~\cite{gretton2012kernel}. All four geometric indicators of interference capacity decay monotonically and nearly proportionally as MMD grows. Notably, the clean effective rank is approximately 123 (roughly 10\% of the 1024 channels), exactly explaining the 90\% robustness plateau of Finding~1: accuracy collapses only when the channel budget breaches the intrinsic dimensionality of the interference subspace. Out-of-distribution failure is not vague ``distribution shift''; at the algebraic level, it is the erosion of the very covariance volume that interference-based classification requires.

\subsection{Comparison with Gradient-Based Attribution}
\label{sec:cam_comparison}

We compare against \textbf{Grad-CAM}~\cite{selvaraju2017grad}, \textbf{Grad-CAM++}~\cite{chattopadhay2018grad}, and \textbf{Score-CAM}~\cite{wang2020score} on CUB-200 and Oxford-IIIT Pet. Saliency maps are produced by per-pixel $\ell_2$ magnitude of $\hat{X}^{(c)}$; baselines use official implementations on the same ConvNeXt-Base.
\begin{table}[t]
\centering
\small
\setlength{\tabcolsep}{3pt}

\begin{minipage}{0.40\textwidth}
\caption{Faithfulness (Insertion/Deletion AUC). Higher Ins / lower Del is better.}
\label{tab:faithfulness}
\begin{tabular}{lcccc}
\toprule
 & \multicolumn{2}{c}{CUB-200} & \multicolumn{2}{c}{Pet} \\
\cmidrule(lr){2-3}\cmidrule(lr){4-5}
 & Ins$\uparrow$ & Del$\downarrow$ & Ins$\uparrow$ & Del$\downarrow$ \\
\midrule
G-CAM       & .563 & .489 & .746 & .641 \\
G-CAM++     & .628 & .409 & .698 & .654 \\
S-CAM       & .573 & .426 & .740 & .571 \\
\textbf{Ours} & \textbf{.762} & \textbf{.164} & \textbf{.817} & \textbf{.384} \\
\bottomrule
\end{tabular}
\end{minipage}\hfill
\begin{minipage}{0.56\textwidth}
\caption{Localization (mIoU/mAP/pAcc/Pointing). Best \textbf{bold}, second \underline{underlined}.}
\label{tab:localization}
\resizebox{\textwidth}{!}{%
\begin{tabular}{lcccccccc}
\toprule
 & \multicolumn{4}{c}{CUB-200} & \multicolumn{4}{c}{Pet} \\
\cmidrule(lr){2-5}\cmidrule(lr){6-9}
 & mIoU & mAP & pAcc & Point & mIoU & mAP & pAcc & Point \\
\midrule
G-CAM     & .401 & .617 & \underline{.648} & .614 & .294 & .422 & \underline{.653} & .379 \\
G-CAM++   & .345 & \underline{.664} & .625 & \underline{.705} & .245 & .400 & .635 & .347 \\
S-CAM     & \textbf{.576} & .647 & \textbf{.698} & .544 & \textbf{.470} & \underline{.482} & .619 & \underline{.409} \\
\textbf{Ours} & .319 & \textbf{.748} & .573 & \textbf{.888} & \underline{.428} & \textbf{.509} & \textbf{.698} & \textbf{.460} \\
\bottomrule
\end{tabular}
}
\end{minipage}
\end{table}

\textbf{Faithfulness} (Table~\ref{tab:faithfulness}). Our framework dominates by a structural margin: $+13.3$ points Insertion AUC and $-24.6$ points Deletion AUC over the strongest baseline on CUB-200 (driving Deletion AUC down to $0.1641$, less than half the next-best $0.4096$), with comparable gaps of $+7.0$ and $-18.8$ points on Pet. The order-of-magnitude separation on Deletion AUC reflects that Grad-CAM-family methods upsample coarse-resolution saliency back to pixel space via bilinear interpolation, discarding the high-frequency adjoint structure our LAC cascade is designed to recover; Corollary~\ref{cor:zero-hallucination} instead guarantees every nonzero attribution lies within a genuinely active receptive field. \textbf{Localization} (Table~\ref{tab:localization}). Our method wins peak-localization metrics (mAP and Pointing on both datasets, plus pAcc on Pet) but yields mIoU on CUB to Score-CAM. This trade-off is theoretically predictable: H2 predicts that classification is driven by sparse sub-object residuals $\delta_i(X)$, so attributions faithful to the interference mechanism concentrate on discriminative sub-regions, not uniform object silhouettes, which bulk-overlap metrics implicitly reward. Score-CAM's strong mIoU comes from smoothed region-level maps that align with the full mask but, as Table~\ref{tab:faithfulness} shows, lose connection to the actual decision mechanism (Deletion AUC $2.6\times$ ours). Detailed analysis is in Supp.~C.7.

\subsection{Causal Concept Localization via Channel Ablation}
\label{sec:causal_ablation}

\begin{figure}[t]
\centering
\begin{minipage}{0.48\textwidth}
\centering\includegraphics[width=\linewidth]{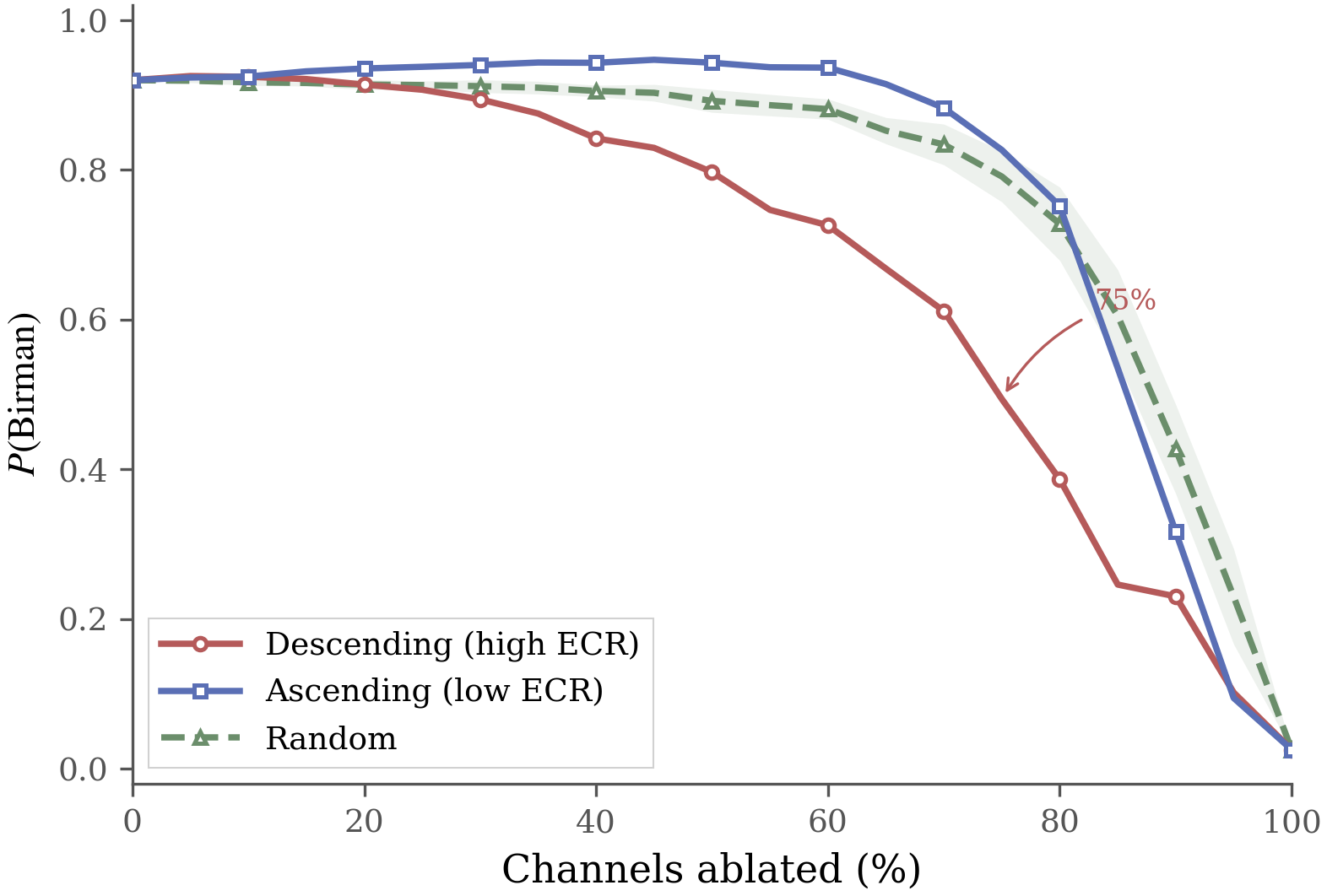}
\caption{Channel ablation by ECR ranking on Birman test images.}
\label{fig:ecr_ablation}
\end{minipage}\hfill
\begin{minipage}{0.50\textwidth}
\centering\includegraphics[width=\linewidth]{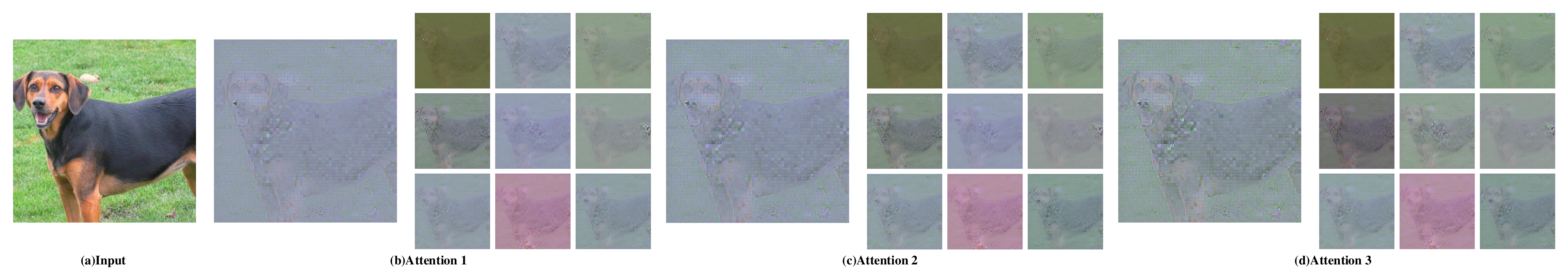}
\caption{Layer-wise (large) and per-token ($3{\times}3$) attention visualizations across three hybrid attention blocks.}
\label{fig:vit_attention}
\end{minipage}
\end{figure}

If $\delta_i(X)$ carries concept-specific structure (H1), then ablating channels by their concept-localization energy should causally damage the corresponding classification. Given a single reference image $X_{\mathrm{ref}}$ and a user-drawn region $\mathcal{R}$ (here, the tail of one Birman cat), define the Energy Concentration Ratio
\begin{equation}
\label{eq:ecr-def}
\mathrm{ECR}_i := \frac{\sum_{(u,v)\in\mathcal{R}}\sum_{k=1}^{3}\tilde{V}_{L-1,i}(X_{\mathrm{ref}})_k(u,v)^2}{\sum_{(u,v)}\sum_{k=1}^{3}\tilde{V}_{L-1,i}(X_{\mathrm{ref}})_k(u,v)^2}.
\end{equation}
The shared background $B(X_{\mathrm{ref}})$ contributes a near-constant additive term across all channels (H1), so the cross-channel variation of $\mathrm{ECR}_i$ is dominated by the residual $\delta_i$. We rank all channels by ECR using only $X_{\mathrm{ref}}$, then apply this ranking to every Birman test image under three ablation orders: \textbf{Descending} (high-ECR first), \textbf{Ascending} (low-ECR first), \textbf{Random} (50 seeds). Figure~\ref{fig:ecr_ablation} reports softmax probability $P(\mathrm{Birman})$ vs ablation fraction. At 70\% ablation, Descending collapses to $0.61$, vs $0.83$ for Random and $0.88$ for Ascending; Ascending tracks above Random throughout the mid-ablation regime, confirming that low-ECR channels are genuinely tail-irrelevant. However, beyond 80\% ablation, Ascending drops below Random: retaining only highly homogeneous channels destroys the geometric diversity required to annihilate the shared background (H2). The separation $\ge 0.3$ in $P(\mathrm{Birman})$ between Descending and Ascending is the central result: it would be impossible if channels were spatially unspecialized or sparsified onto object regions. To our knowledge, this is the first demonstration of causal concept localization at the channel level from a single spatial annotation, a capability unavailable to any CAM-family attribution method.

\subsection{Generalization to Attention-Based Heads}
\label{sec:vit_generalization}

The LAC cascade is trained on the encoder's adjoint structure, not on a specific downstream head. We construct a hybrid classifier by retaining the frozen ConvNeXt-Base encoder, flattening $\mathbf{h}_{L-1}(X)\in\mathbb{R}^{1024\times 7\times 7}$ into 49 tokens of dimension 1024, and processing them through three multi-head self-attention blocks $\mathcal{A}_1,\mathcal{A}_2,\mathcal{A}_3$ followed by a linear head. Only the attention blocks and head are trained; the LAC is loaded unchanged from Section~\ref{sec:synthesis}. Any scalar quantity $\mathcal{Q}$ on the attention pathway is then visualized as
\begin{equation}
\label{eq:vit-composition}
\widetilde{V}_{\mathcal{Q}}(X) = \Psi_{\mathrm{LAC}_{\mathrm{stem}}}\!\circ J_{\mathrm{stem}}^{T}\circ\cdots\circ\Psi_{\mathrm{LAC}_{L-1}}\!\circ J_{L-1}^{T}\!\bigl(\partial \mathcal{Q}/\partial \mathbf{h}_{L-1}\bigr),
\end{equation}
the inner term being a single-step VJP through the attention stack and the outer cascade the unchanged trained LAC. We exercise two specializations: layer-wise $\mathcal{Q}_\ell = \tfrac{1}{2}\|\mathcal{A}_\ell(\mathbf{T}(X))\|_F^2$ and per-token $\mathcal{Q}_{\ell,t} = \tfrac{1}{2}\|\mathcal{A}_\ell(\mathbf{T}(X))_{t,:}\|_2^2$. Figure~\ref{fig:vit_attention} shows aggregate layer visualizations remaining coherent across all three depths and per-token visualizations exhibiting structurally meaningful spatial specialization (early tokens diffuse, deeper tokens concentrate on sub-object regions; adjacent tokens attend to neighbouring regions). The LAC thus functions as a universal adjoint inverter: trained once against the encoder, it serves arbitrary differentiable downstream heads with the full suite of theoretical guarantees transferring intact, in sharp contrast to head-specific methods like Grad-CAM.

\section{Conclusion}
\label{sec:conclusion}

We close the long-standing gap in CNN interpretability with both a method and a discovery. Our hallucination free inversion framework, built on magnitude and phase decoupling and Local Adjoint Correctors deployed at Markovian downsampling boundaries, satisfies a strict zero hallucination guarantee established by algebraic identity rather than empirical regularization. Used as a geometric probe, the framework produces, to our knowledge, the first pixel level direct evidence of strong superposition in vision encoders and reveals that classification is implemented as destructive interference in the pixel manifold: the classifier's weights cancel a shared background direction and constructively assemble class discriminative residuals, falsifying the Spatial Funnel Hypothesis along both of its structural predictions. The interference account further yields a covariance volume channel selection algorithm with a $(1-1/e)$ approximation guarantee that compresses classifiers under aggressive pruning while exposing out of distribution failure as a measurable collapse of the very covariance volume that interference based classification requires.

We note one inherent trade-off: for deep layer channels with extremely sparse spatial responses, the per channel variance normalization within the LAC may attenuate intra-channel spatial contrast, a limitation that channel adaptive normalization could address in future work. More broadly, the algebraic mechanism uncovered here, scalar weights inducing geometric interference via the linearity of the adjoint, may generalize beyond linear classification: any differentiable downstream module, including the attention based heads we already visualize without retraining, can in principle be analyzed as an interferometer over the encoder's holographic basis. We hope this perspective opens a new line of mechanistic study of vision models grounded in exact algebraic geometry rather than in heuristic attribution.

\bibliographystyle{plain}
\bibliography{reference}


\appendix
\section{Additional Method Details}
\label{app:method-details}

\subsection{Detailed Diagnosis of the Two Crises}
\label{app:two-crises}

Section~\ref{sec:crises} of the main text identifies two algebraic pathologies of the raw VJP. We expand here on the rigorous mathematical content of each, prove the angular collapse inequality~\eqref{eq:angle-collapse}, and discuss when its parallel noise bound holds in practice.

\subsubsection*{Crisis I: zero insertion as a discrete Dirac comb.}

Let a stage transition apply a stride-$s$ convolution with kernel $W$ to a feature map $h_{l-1}$, producing $h_l$. In the forward pass this acts as decimation: only spatial positions on the stride lattice are retained. In the backward pass, the chain rule applied to $h_l = W \circledast_{(s)} h_{l-1}$ yields
\begin{equation}
\label{eq:adjoint-stride}
\nabla_{h_{l-1}}\mathcal{J} \;=\; W^{T} \circledast \,\uparrow_{s}\!(\nabla_{h_l}\mathcal{J}),
\end{equation}
where $\uparrow_s$ is the zero insertion upsampling operator and $\circledast$ is standard convolution on the upsampled grid. The operator $\uparrow_s$ produces a sparse signal whose support is the integer lattice $s\mathbb{Z}\times s\mathbb{Z}$,
\begin{equation}
[\uparrow_s u](r,t) \;=\; \sum_{p,q} u(p,q)\,\delta(r-sp,\;t-sq).
\end{equation}
In signal processing terms, $\uparrow_s$ modulates the gradient continuum by a discrete two dimensional Dirac comb $\mathrm{III}_s$ of period $s$. Its Fourier dual is itself a Dirac comb of period $1/s$, so $\uparrow_s u$ contains aliased high frequency replicas of $u$ at every integer multiple of the Nyquist frequency. The transposed convolution $W^T\circledast(\cdot)$ that follows is, generically, an interpolation kernel of finite spectral support and is mathematically incapable of removing these replicas. Consequently, every traversal of a stage boundary by the VJP injects high frequency content into the gradient signal, geometrically realised as a ``nail bed'' of point spikes at the stride lattice. The spatial topology that the VJP would otherwise carry is broken at exactly these boundaries.

\subsubsection*{Crisis II: rigid coupling of magnitude and phase.}

For any pixel space VJP vector $v\in\mathbb{R}^D$ with $v\neq 0$, write
\begin{equation}
v \;=\; \|v\|\,\cdot\,\frac{v}{\|v\|}.
\end{equation}
In a magnitude phase decoupled representation, the scalar $\|v\|$ and the unit direction $v/\|v\|$ would be stored or normalised independently. In the raw VJP they share the same vector slot, so the spatial direction inherits no independent scaling protection. As $v$ is propagated through deep stages, its norm $\|v\|$ decays multiplicatively under the shrinking operator norm of the cumulative Jacobian transposes, while the additive structural noise $\epsilon$ from Crisis~I retains absolute magnitude. Once $\|\epsilon\|$ becomes comparable to $\|v\|$, the perturbed direction $(v+\epsilon)/\|v+\epsilon\|$ no longer reflects the true gradient direction $v/\|v\|$.

\subsubsection*{Proof of the angular collapse inequality~\eqref{eq:angle-collapse}.}

Decompose $\epsilon$ along $v$ as $\epsilon = \epsilon_\parallel + \epsilon_\perp$ with $\epsilon_\parallel = \alpha v$ for some scalar $\alpha\in\mathbb{R}$ and $\langle \epsilon_\perp, v\rangle = 0$. Then
\begin{equation}
v + \epsilon \;=\; (1+\alpha)\,v + \epsilon_\perp,\qquad
\|v+\epsilon\|^2 \;=\; (1+\alpha)^2\|v\|^2 + \|\epsilon_\perp\|^2.
\end{equation}
Define $\hat{v} := v/\|v\|$ and $\hat{w} := (v+\epsilon)/\|v+\epsilon\|$. Their inner product is
\begin{equation}
\langle \hat{v},\hat{w}\rangle
\;=\; \frac{(1+\alpha)\|v\|^2}{\|v\|\sqrt{(1+\alpha)^2\|v\|^2 + \|\epsilon_\perp\|^2}}
\;=\; \frac{1+\alpha}{\sqrt{(1+\alpha)^2 + \rho^2}},
\quad
\rho := \frac{\|\epsilon_\perp\|}{\|v\|}.
\end{equation}
Under the hypotheses $\|\epsilon_\parallel\|\leq K\|v\|$ (so $|\alpha|\leq K$ and $1+\alpha\leq 1+K$) and $\|\epsilon_\perp\|\geq\|v\|$ (so $\rho\geq 1$), the right hand side is bounded above by
\begin{equation}
\langle \hat{v},\hat{w}\rangle
\;\leq\; \frac{1+K}{\sqrt{(1+K)^2 + 1}},
\end{equation}
where we used the monotonicity of $t\mapsto t/\sqrt{t^2+\rho^2}$ in $t$ and the lower bound $\rho^2\geq 1$. Since $\hat{v}$ and $\hat{w}$ are unit vectors, $\|\hat{w}-\hat{v}\|^2 = 2 - 2\langle\hat{v},\hat{w}\rangle$, so
\begin{equation}
\|\hat{w}-\hat{v}\|^2
\;\geq\; 2 - \frac{2(1+K)}{\sqrt{(1+K)^2+1}}
\;=\; \frac{2\bigl(\sqrt{(1+K)^2+1} - (1+K)\bigr)}{\sqrt{(1+K)^2+1}}.
\end{equation}
Multiplying numerator and denominator by $\sqrt{(1+K)^2+1}+(1+K)$ yields
\begin{equation}
\|\hat{w}-\hat{v}\|^2
\;\geq\; \frac{2}{\bigl(\sqrt{(1+K)^2+1}\bigr)\bigl(\sqrt{(1+K)^2+1}+(1+K)\bigr)}
\;\geq\; \frac{1}{(1+K)^2+1},
\end{equation}
where the second inequality uses $\sqrt{(1+K)^2+1}+(1+K)\leq 2\sqrt{(1+K)^2+1}$. Taking the square root,
\begin{equation}
\|\hat{w}-\hat{v}\|
\;\geq\; \frac{1}{\sqrt{(1+K)^2+1}}
\;=\; c \;>\; 0,
\end{equation}
which is~\eqref{eq:angle-collapse}. The bound $c$ is independent of $\|v\|$, so phase collapse persists no matter how slowly $v$ decays.

\subsubsection*{Why the parallel noise bound holds in practice.}

The hypothesis $\|\epsilon_\parallel\|\leq K\|v\|$ asserts that the projection of $\epsilon$ onto the direction of $v$ is, in scale, no larger than $v$ itself. We argue informally that this naturally holds for the structural noise generated by zero insertion. The signal $v$ is the cumulative product of channel-level Jacobian transposes applied to a smooth low frequency seed (a feature map with finite spatial bandwidth). The noise $\epsilon$ originates from the spike pattern $\uparrow_s u$ in~\eqref{eq:adjoint-stride}, which lives in a high frequency band concentrated near the Nyquist replicas $|f|\geq 1/(2s)$. Because $W^T$ has bounded spectral mass at low frequencies, the projection of the high frequency spike pattern onto the smooth direction of $v$ is suppressed, and empirically the projection ratio $\|\epsilon_\parallel\|/\|v\|$ is bounded by a small constant $K$ across natural images. The orthogonal component $\epsilon_\perp$ carries the dominant noise energy, satisfying $\|\epsilon_\perp\|\geq\|v\|$ in deep layers where $v$ has decayed. The conditions of~\eqref{eq:angle-collapse} are therefore not pathological assumptions but a generic property of the VJP cascade in modern strided CNNs.

\subsection{Existence and Geometry of the Simplex Measure}
\label{app:simplex}

Section~\ref{sec:forward} defines the per-layer simplex measure $E_{l,c}$ via $Z_{l,c}=\mathrm{GAP}(|h_{l,c}|)$ and $E_{l,c}=Z_{l,c}/\sum_{c'}Z_{l,c'}$. We note three properties.

\textbf{Existence.} Since $|h_{l,c}|\geq 0$ pointwise, every $Z_{l,c}\geq 0$. The denominator $\sum_{c'}Z_{l,c'}$ vanishes if and only if $h_{l,c'}\equiv 0$ for every $c'$, that is, the encoder maps the input to the all-zero feature tensor at layer $l$. For any frozen encoder pretrained on natural images, this corresponds to a measure-zero set of inputs (essentially only $X\equiv 0$ for ReLU based architectures, since at least one ReLU unit is active for any non-trivial input). $E_{l,c}$ is therefore well-defined for every non-trivial input.

\textbf{Simplex membership.} By construction $E_{l,c}\geq 0$ and $\sum_c E_{l,c}=1$ as an algebraic identity rather than an asymptotic regularisation outcome. The vector $(E_{l,1},\ldots,E_{l,C_l})$ resides on the standard $(C_l-1)$-dimensional probability simplex
\begin{equation}
\Delta^{C_l-1} \;=\; \Bigl\{p\in\mathbb{R}_{\geq 0}^{C_l}\;:\;\textstyle\sum_c p_c = 1\Bigr\},
\end{equation}
and admits the interpretation of the encoder's confidence distribution over the $C_l$ semantic concepts at abstraction level $l$. It is complete and mutually exclusive: assigning more energy to one concept necessarily reduces the energy of the others.

\textbf{Polarity is removed by absolute value, not destroyed.} Modern architectures employing non-monotonic activations (GELU, Swish) or signed feature maps after BatchNorm offsets can produce $h_{l,c}(r,s)<0$. Taking the absolute value before pooling guarantees $Z_{l,c}\geq 0$ across all such architectures and treats excitatory ($h>0$) and active suppressive ($h<0$) responses equally as far as importance is concerned. The sign information itself is not discarded: it is encoded in the spatial phase $\widetilde{V}_{l,c}$ of the backward path, where it contributes to the polarity of pixel-space contrast. This is a deliberate consequence of the magnitude phase decoupling principle.

\subsection{LAC as Amplitude Stripping Operator: Geometric Details}
\label{app:lac-geometry}

Section~\ref{sec:lac} introduces the LAC and the GroupNorm strip~\eqref{eq:gn-strip}. We expand on three aspects: deployment topology, the three-step geometric factorisation, and the cascade depth seen by each stage.

\subsubsection*{Markovian Transition Guards.}

The LAC modules are deployed as guards at the boundaries of the network's Markov chain~\eqref{eq:markov-decomp}, not after every convolution. The justification has two layers. First, intra-stage convolutions preserve spatial resolution and operate on the same semantic manifold; their adjoints are non-strided transposed convolutions whose forward and backward operations are exact inverses up to a finite spectral tail, so the gradient flow is smooth and requires no intervention. Second, only the stride-2 stage transitions and the initial stem cliff (typically a $7\times 7$ stride-2 convolution followed by max pooling) actually invoke the zero insertion operator $\uparrow_s$ during backpropagation. Placing an LAC at every convolution would introduce redundant normalisation that disturbs intra-stage smoothness while doing nothing additional to mitigate the structural shock. The exclusive deployment at boundaries is thus a minimal and topology driven design.

\subsubsection*{The three geometric actions of GroupNorm.}

For a flattened VJP vector $v\in\mathbb{R}^n$ with $n=H_l W_l$, the LAC executes the composition
\begin{equation}
v \;\longmapsto\; v - \mu(v)\mathbf{1}
\;\longmapsto\; \frac{v - \mu(v)\mathbf{1}}{\sigma(v)}
\;\longmapsto\; \gamma_c\,\frac{v - \mu(v)\mathbf{1}}{\sigma(v)} + \beta_c\mathbf{1},
\end{equation}
where $\mu(v) = (1/n)\mathbf{1}^T v$ and $\sigma(v) = \|P_{\perp\mathbf{1}}v\|/\sqrt{n}$. We analyse each step.

(a) \emph{DC stripping.} The map $v\mapsto v-\mu\mathbf{1} = P_{\perp\mathbf{1}}v$ is the orthogonal projection onto the zero mean hyperplane $\mathcal{H}=\{w\in\mathbb{R}^n:\mathbf{1}^T w=0\}$. Geometrically, this removes the spatially-uniform component of $v$. The structural noise $\epsilon$ injected by Crisis~I has a non-zero average (its DC component is the sum of all spike amplitudes within a stride-$s$ tile, which is generically nonzero), so projecting onto $\mathcal{H}$ kills the dominant scalar carrier of the noise.

(b) \emph{Magnitude stripping.} The map $w\mapsto w/\sigma(w) = w\sqrt{n}/\|w\|$ on $\mathcal{H}$ is the radial projection onto the sphere of radius $\sqrt{n}$ inside $\mathcal{H}$. This eliminates the absolute scale of the vector while preserving its direction $w/\|w\|$ exactly. Because Crisis~II rigidly couples scale and direction, this step is the algebraic act of decoupling: from this point onward, the spatial phase is stored on a manifold whose metric is independent of the encoder's energy decay rate.

(c) \emph{Learned affine.} The map $\hat{v}\mapsto \gamma_c\hat{v}+\beta_c\mathbf{1}$ is a one dimensional rescaling on the sphere followed by a uniform shift. The scale $\gamma_c$ controls only the global brightness contribution of channel $c$; the shift $\beta_c$ is a global pixel offset. Crucially, since $\gamma_c$ acts isotropically on the unit sphere and $\beta_c$ acts only on the constant subspace, neither parameter has any spatial selectivity, a property formalised in Proposition~B.8.

The dominant noise carriers of Crisis~I are the systematic DC drift (annihilated by step (a)) and the absolute magnitude scale (annihilated by step (b)). Step (c) cannot undo these annihilations because its parameters are scalar. The noise is therefore not smoothed away by post hoc filtering; it is removed by an exact algebraic projection onto a noise-free subspace.

\subsubsection*{Cascade depth seen by each source stage.}

Suppose the encoder has $L$ feature stages plus a stem. A feature at stage $l$ traverses, during the backward cascade~\eqref{eq:lac-cascade}, exactly the LAC modules
\begin{equation}
\Psi_{\mathrm{LAC}_l},\;\Psi_{\mathrm{LAC}_{l-1}},\;\ldots,\;\Psi_{\mathrm{LAC}_0},\;\Psi_{\mathrm{LAC}_{\mathrm{stem}}},
\end{equation}
which is $l+2$ LAC modules in total. The deepest stage $L-1$ exercises every LAC in the system; any shallower stage exercises a strict subset. This nested structure is what enables the ``train once, infer at every depth'' property of the framework, and is what underlies the path invariance result of Proposition~B.8.

\subsection{Training Strategy and Inference Spectrum: Detailed Analysis}
\label{app:training}

Section~\ref{sec:synthesis} adopts the single path objective $\mathcal{L}=\|X-\widehat{X}_{L-1}\|_1$. We motivate the two design choices that this objective embodies: training on the deepest stage only, and using the $L_1$ rather than $L_2$ norm. We also describe the inference time semantic inversion spectrum.

\subsubsection*{Why train on the deepest stage only.}

An apparent alternative is the multi-path summation
\begin{equation}
\label{eq:multipath-loss}
\mathcal{L}_{\mathrm{multi}} \;=\; \sum_{l=0}^{L-1}\|X-\widehat{X}_l\|_1.
\end{equation}
Although natural at first sight, this objective induces pathological gradient competition for the shared lower level LACs ($\Psi_{\mathrm{LAC}_0}$ and $\Psi_{\mathrm{LAC}_{\mathrm{stem}}}$). For shallow stages, the spatial signal in $\widehat{X}_l$ is already richly structured because the input has been compressed through fewer downsampling cliffs; the corresponding reconstruction task is structurally easier. The shared LAC parameter gradient $\partial\mathcal{L}_l/\partial\theta$ from a shallow stage is therefore both larger in expected magnitude and biased toward a less demanding spatial decompression regime. Under~\eqref{eq:multipath-loss}, the shared LAC parameters would be systematically driven away from the configuration required for the most demanding deepest-stage reconstruction.

Training on $\mathcal{L}=\|X-\widehat{X}_{L-1}\|_1$ avoids this competition entirely. By the cascade composition~\eqref{eq:lac-cascade}, the adjoint path of stage $L-1$ traverses every LAC in the system, providing complete gradient coverage of all trainable parameters with no LAC left in a gradient dead zone. The shared lower level LACs are thus trained under the most stringent regime; their performance on the easier shallow stages is, by virtue of Proposition~B.8, structurally guaranteed to be no worse.



\subsubsection*{Inference time semantic inversion spectrum.}

Although training touches only the deepest path, the trained LAC immediately supports inference at every depth. For any $l<L-1$, the cascade required to compute $\widehat{X}_l$ uses the LAC modules $\Psi_{\mathrm{LAC}_l},\ldots,\Psi_{\mathrm{LAC}_0},\Psi_{\mathrm{LAC}_{\mathrm{stem}}}$, all of which are a strict subset of the deepest path's cascade and have therefore been fully optimised. Reading off the family $\{\widehat{X}_l\}_{l=0}^{L-1}$ at inference produces what we call the Semantic Inversion Spectrum, a depth indexed sequence of pixel space images that interpolates from high frequency structural textures at shallow layers, where the encoder has not yet abstracted away the local pixel statistics, to spatially sparse categorical activation loci at deep layers, where only object discriminative features survive. The spectrum directly answers the question ``what is the visual structure of this image when interpreted strictly through the isolated semantics of layer $l$?'', and it does so without any retraining.

\subsection{Shared Adjoint Bridges and Endogenous Consistency}
\label{app:shared-bridges}

The cascade~\eqref{eq:lac-cascade} shares its lower level modules ($\Psi_{\mathrm{LAC}_0}$ and $\Psi_{\mathrm{LAC}_{\mathrm{stem}}}$) across all source stages $l\geq 0$. We discuss why this sharing is structurally consistent rather than a source of cross layer interference, and how it underpins the framework's plug and play extension to arbitrary downstream heads.

\subsubsection*{Why consistency holds across source stages.}

The shared modules are trained, by the single path objective of Appendix~\ref{app:training}, under the most demanding deep layer reconstruction regime. When the same modules are subsequently used to process gradient signals arriving from a shallower source stage $l'<L-1$, two structural facts apply.

First, the per-channel GroupNorm with $\texttt{num\_groups}=\texttt{num\_channels}$ exactly eliminates the dominant source of path dependent variation, namely per-channel spatial mean and variance, before the affine parameters act. Specifically, for any two source stages $A$ and $B$, the standardised inputs $\hat{v}^{(A)}_{b,c}$ and $\hat{v}^{(B)}_{b,c}$ at any shared LAC level $b$ both have spatial mean $0$ and spatial second moment $1$ as exact algebraic identities that do not involve any trainable parameter. This is the first-and-second-moment invariance formalised in Proposition~B.8(a).

Second, the scalar affine parameters $(\gamma_c^{(b)},\beta_c^{(b)})$ have exactly one degree of freedom per channel and are therefore structurally incapable of differentially responding to residual higher-order spatial discrepancies between deep path and shallow path signals. The map induced by $\gamma_c^{(b)}$ is an isotropic scaling on the zero mean hyperplane (it dilates every spatial direction by the same factor); the map induced by $\beta_c^{(b)}$ acts only on the constant subspace. Neither has spatial selectivity, a property formalised as the scalar insensitivity in Proposition~B.8(b).

Together, these two properties imply that the shared LAC modules treat deep path and shallow path signals on equal footing up to their first two moments; the only differences allowed by the parametric capacity are channel level isotropic rescaling, which carries no spatial structural distinction.

\subsubsection*{Quality monotonicity in source depth.}

Because the shared modules were optimised under the deepest stage regime, where the spatial reconstruction task is hardest, their behaviour on shallower stage signals (which already carry richer spatial information) is structurally guaranteed to be no worse. Concretely, any reconstruction error $\widehat{X}_l$ for $l<L-1$ inherits the error properties of the deepest path with respect to the shared lower level LACs, while gaining additional spatial fidelity from the richer signal at stage $l$. The framework therefore natively supports plug and play visualisation for any layer and any channel without retraining; cross layer explanatory consistency is an endogenous property of the shared topology, not an artefact of post hoc calibration.

\subsubsection*{Extension to arbitrary downstream heads.}

The shared topology property also extends the framework, without retraining, to arbitrary differentiable downstream heads beyond the linear classifier. Concretely, given any differentiable module $\mathcal{D}$ acting on the deepest feature map $h_{L-1}$ and any scalar readout $\mathcal{Q}$ of $\mathcal{D}$, the pixel space visualisation of $\mathcal{Q}$ is obtained by computing the single step VJP $\partial\mathcal{Q}/\partial h_{L-1}$ and feeding it as the seed into the trained LAC cascade. The moment and direction guarantees of Proposition~B.8 ensure the resulting visualisation is no less faithful than for a linear classifier. We exploit this property in Section~\ref{sec:vit_generalization}, where three layers of self-attention are stacked on top of the frozen encoder and per layer / per token visualisations are produced through the same trained LAC cascade with no parameter modification.

\section{Theoretical Statements and Proofs}
\label{app:theory}

This appendix collects the formal statements and proofs of all theoretical results that underpin the framework. Sections~\ref{app:lem-training-dynamics}--\ref{app:prop-gn-structure} present three foundational results (Lemma~B.1, Proposition~B.2, Proposition~B.3); Sections~\ref{app:proof-spatial-fidelity}--\ref{app:proof-zero-hallucination} prove the two main-text guarantees (Theorem~\ref{thm:spatial-fidelity} and Corollary~\ref{cor:zero-hallucination}) along with two intermediate results (Theorem~B.5, Corollary~B.6); Section~\ref{app:prop-path-invariance} treats the path invariance property of shared LAC bridges (Proposition~B.8) which underpins the framework's plug-and-play extension to arbitrary downstream heads.

\subsection{Lemma B.1: Training Dynamics of Affine Parameters}
\label{app:lem-training-dynamics}

\begin{lemma}[B.1: Training Dynamics of Affine Parameters]
\label{lem:training-dynamics}
Consider a LAC decoder trained with the $L_1$ reconstruction loss
on a diverse image distribution satisfying
$\mathbb{E}[X]\approx\mathbf{0}$ and approximate spatial symmetry
about zero.  Let $(\gamma_c^*,\beta_c^*)$ denote the learned
GroupNorm affine parameters at convergence. Then:
\begin{enumerate}
\item[\textup{(i)}] $\beta_c^*\to 0$ for every channel~$c$;
\item[\textup{(ii)}] $\gamma_c^*>0$ for every active channel $c\in\mathcal{C}^+:=\{c:E_{l,c}>0\}$.
\end{enumerate}
\end{lemma}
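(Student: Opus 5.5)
The plan is to study the $L_1$ objective $\mathcal{L}=\mathbb{E}_X\|X-\widehat{X}_{L-1}\|_1$ as a function of the affine parameters of the outermost module $\Psi_{\mathrm{LAC}_{\mathrm{stem}}}$. All earlier LACs only change the direction of the standardised vector that enters it. By \eqref{eq:gn-strip}, the output for channel $c$ has the form $\gamma_c\sqrt{n}\,u_c(X)+\beta_c\mathbf{1}$, where $u_c(X)=P_{\perp\mathbf{1}}v/\|P_{\perp\mathbf{1}}v\|$ is a unit, zero-mean direction. Hence, by \eqref{eq:synthesis},
\[
\widehat{X}_{L-1}=\sum_c E_{L-1,c}\bigl(\gamma_c\sqrt{n}\,u_c(X)+\beta_c\mathbf{1}\bigr).
\]
The reconstruction is affine in $(\gamma,\beta)$. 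The loss is therefore convex in these parameters for fixed upstream modules, and we can argue about its minimisers via subgradient (first-order) conditions.

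\textbf{Part (i).} First I will use the symmetry hypothesis. Assume that the pair $(X,\{u_c(X)\})$ is approximately equal in law to $(-X,\{-u_c(-X)\})$. This is the content of ``$\mathbb{E}[X]\approx 0$ and approximate spatial symmetry about zero'', combined with the odd behaviour of the adjoint direction under sign flip. I expect to have to state this explicitly as a modelling assumption. Under it, the reflection $(\gamma,\beta)\mapsto(\gamma,-\beta)$ leaves $\mathcal{L}$ approximately invariant, since
\[
\|X-\sum E(\gamma\sqrt n u+\beta\mathbf 1)\|_1=\|-X-\sum E(\gamma\sqrt n(-u)-\beta\mathbf 1)\|_1.
\]
Convexity then gives $\mathcal{L}(\gamma,0)\le\tfrac12[\mathcal{L}(\gamma,\beta)+\mathcal{L}(\gamma,-\beta)]\approx\mathcal{L}(\gamma,\beta)$, so the minimising $\beta$ is driven to $0$ as the asymmetry vanishes. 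The same argument applies to the shared $\beta_c$ of each inner LAC. Each inner LAC's $\beta_c\mathbf{1}$ is propagated by a linear Jacobian transpose and then mostly removed by the next $P_{\perp\mathbf{1}}$; its only surviving effect is a symmetric perturbation of direction, so the reflection argument carries over.

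\textbf{Part (ii).} Fix $\beta=0$ and an active channel $c$, so that $E_{L-1,c}>0$. The directional derivative of $\mathcal{L}$ in $\gamma_c$ at $\gamma_c=0$ is
\[
-\sqrt n\,E_{L-1,c}\,\mathbb{E}\langle \mathrm{sign}(X-\widehat X_{-c}),u_c\rangle,
\]
up to the subgradient set on the zero level set. Here $\widehat X_{-c}$ denotes the reconstruction without channel $c$. It then suffices to show the correlation $\mathbb{E}\langle\mathrm{sign}(X-\widehat X_{-c}),u_c(X)\rangle>0$. To show this, I will invoke Theorem~B.5, which relates $u_c$ to a unit-normalised gradient-ascent direction aligned with the input content, together with the identity \eqref{eq:vjp-energy} expressing $u_c$ as the gradient of $\tfrac12\|h_c\|^2$, which is positively correlated with the image for active ReLU channels. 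This makes $\gamma_c=0$ non-stationary, with descent toward positive values. Convexity in $\gamma_c$ and the sign of the slope at zero then force $\gamma_c^*>0$.

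\textbf{Main obstacle.} The hard step is the positive-correlation claim in (ii). For inner LACs it is also unclear whether their $\gamma$ enter with a definite sign, because a negative $\gamma$ at two successive stages could compensate. I would first handle the stem LAC rigorously. For inner LACs I would either quotient out this sign symmetry, choosing the representative with all $\gamma>0$, which the loss cannot distinguish, or argue it from positive-homogeneity of ReLU Jacobians. I expect the final statement to hold only approximately, in line with the ``$\to$'' in (i).
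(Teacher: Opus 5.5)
\textbf{Part (i).} Your route differs from the paper's. The paper splits $\widehat X_l=S_l+b_l\mathbf 1$ with $S_l\perp\mathbf 1$ and $b_l=\sum_c E_{l,c}\beta_c$. It reduces the choice of $b_l$ to an $L_1$ location problem for the image mean $\bar X$, whose median is $\approx 0$ by the stated image symmetry. It then passes to individual channels through $\partial\mathcal L/\partial\beta_c=E_{l,c}\,\partial\mathcal L/\partial b_l$; channels that never receive gradient stay at their zero initialization. That reduction is itself heuristic: it minimizes a lower bound, and $\|\cdot\|_1$ does not split across $\mathrm{span}(\mathbf 1)$ and $\mathbf 1^\perp$. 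But it uses only a hypothesis on the image law.

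Yours needs a hypothesis on the network, and that is where it breaks.
\begin{itemize}
\item As written, your assumption is that $(X,\{u_c(X)\})$ and $(-X,\{-u_c(-X)\})$ have the same law. Combined with $X\overset{d}{=}-X$, this says $(X,u_c(X))\overset{d}{=}(X,-u_c(X))$. That forces $u_c=0$ almost surely, contradicting $\|u_c\|=1$.
\item What your displayed identity actually requires is that $(X,\{u_c(X)\},\{E_{L-1,c}(X)\})$ and $(-X,\{-u_c(X)\},\{E_{L-1,c}(X)\})$ have the same law; note the weights depend on $X$ too. Since $u_c$ and $E$ are deterministic functions of a symmetric $X$, this is the same as $u_c$ being odd and $E$ even, almost everywhere.
\item That condition is not among the lemma's hypotheses, and it fails for ReLU encoders. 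Already for one unit, $\nabla_x\tfrac12\mathrm{ReLU}(w^\top x)^2=\mathrm{ReLU}(w^\top x)\,w$ vanishes at $-x$ whenever it is nonzero at $x$.
\item Even granting the symmetry, convexity plus evenness only shows that $\beta=0$ is \emph{a} minimizer. The minimizer set of an $L_1$ objective can be flat. For a channel whose weight vanishes on the data, the loss does not depend on $\beta_c$ at all. So ``$\beta_c^*\to 0$ for every channel'' needs an extra ingredient such as the paper's initialization argument.
\item Your extension to inner LACs goes beyond what the paper's proof treats, and it does not go through. The loss is not convex in those parameters. Also $J^\top\mathbf 1$ is not constant (the mean leakage of Theorem~B.5), so an inner bias is not removed by the next $P_{\perp\mathbf 1}$.
\end{itemize}

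\textbf{Part (ii).} Here you follow essentially the paper's idea. The paper runs a three-case analysis: $\gamma_c=0$ erases the channel's spatial structure, $\gamma_c<0$ reverses its phase, and $\gamma_c>0$ keeps it aligned. Your convexity plus negative right-derivative at $\gamma_c=0$ is a cleaner way to make that rigorous.

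The correlation you isolate, $\mathbb{E}\bigl[E_{L-1,c}\langle\mathrm{sign}(X-\widehat X_{-c}),u_c\rangle\bigr]>0$ with the other parameters at their converged values, is exactly what the paper's phase-reversal inequality assumes without proof. That inequality needs a positive, not merely nonzero, projection of the channel onto the residual. So this hole is shared with the paper.

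The tools you propose to close it do not work, however:
\begin{itemize}
\item Theorem~B.5 is stated under the convergence conditions of Lemma~B.1. It uses both $\gamma_c^*>0$ and positive inner-LAC scales $c_k>0$, so invoking it here is circular.
\item The energy identity $\mathrm{VJP}_{l,c}=\nabla_X\Phi_{l,c}$ gives a rigorous handle only on the raw gradient. For a bias-free, positively homogeneous ReLU encoder, Euler's identity yields $\langle\nabla_X\Phi_{l,c},X\rangle=2\Phi_{l,c}>0$. That is correlation with $X$, not with the sign of the residual. It also concerns $\nabla_X\Phi_{l,c}$, not the de-meaned cascade direction $u_c$. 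So it does not by itself give the inequality you need.
\end{itemize}
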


\begin{proof}
We establish the two parts separately.

\medskip\noindent
\textbf{Part (i): $\beta_c^*\to 0$.}\;
By Proposition~\ref{prop:gn-structure} (Eq.~\eqref{eq:gn-full}), the decoded output
at layer~$l$ admits the decomposition
\begin{equation}\label{eq:recon-bias-split}
  \widehat{X}_l
  \;=\;
  \underbrace{
    \sum_{c\in\mathcal{C}^+}
      E_{l,c}\,\gamma_c\,\sqrt{H_l W_l}\;
      \frac{P_{\perp\mathbf{1}}\,v_c}
           {\|P_{\perp\mathbf{1}}\,v_c\|_2}
  }_{\displaystyle S_l\;(\text{spatially varying})}
  \;+\;
  \underbrace{
    \Bigl(\sum_{c=1}^{C_l} E_{l,c}\,\beta_c\Bigr)\,\mathbf{1}
  }_{\displaystyle b_l\,\mathbf{1}\;(\text{global constant offset})},
\end{equation}
where $v_c=\mathrm{vec}(h_{l,c})$ and $b_l=\sum_c E_{l,c}\,\beta_c$.

The spatially varying component $S_l$ is orthogonal to $\mathbf{1}$
by construction (each summand lies in the range of $P_{\perp\mathbf{1}}$),
so $S_l$ and $b_l\,\mathbf{1}$ reside in complementary orthogonal subspaces.
Consequently, the $L_1$ reconstruction loss decomposes as
\begin{equation}\label{eq:l1-decomp}
  \mathcal{L}
  \;=\;
  \mathbb{E}_{X}\bigl[\|X - \widehat{X}_l\|_1\bigr]
  \;\geq\;
  \mathbb{E}_{X}\bigl[\|P_{\mathbf{1}}(X - \widehat{X}_l)\|_1\bigr]
  \;=\;
  \mathbb{E}_{X}\bigl[\|\bar{X}\,\mathbf{1} - b_l\,\mathbf{1}\|_1\bigr],
\end{equation}
where $\bar{X}=\frac{1}{n}\mathbf{1}^{T}X$ and
$P_{\mathbf{1}}=\frac{1}{n}\mathbf{1}\mathbf{1}^{T}$ is the
projection onto the constant subspace.  Minimising the right-hand
side over the scalar $b_l$ yields the classical $L_1$ location
problem: the minimiser is the conditional median of $\bar{X}$
over the training distribution.

Under the assumption that the training distribution is diverse
with $\mathbb{E}[X]\approx\mathbf{0}$ and approximately symmetric
about zero, the pixelwise median of the target images is also
approximately zero, giving $b_l^*\approx 0$.

To argue that $\beta_c^*\to 0$ individually rather than merely
their weighted sum: during gradient-based training each $\beta_c$
receives the gradient signal
\begin{equation}\label{eq:beta-grad}
  \frac{\partial\mathcal{L}}{\partial\beta_c}
  \;=\;
  E_{l,c}\;\frac{\partial\mathcal{L}}{\partial b_l},
\end{equation}
since $b_l$ depends linearly on each $\beta_c$ with coefficient
$E_{l,c}$.  For active channels ($E_{l,c}>0$), the gradient
$\partial\mathcal{L}/\partial\beta_c$ shares the sign of
$\partial\mathcal{L}/\partial b_l$ and drives $\beta_c$ toward
zero at a rate proportional to $E_{l,c}$.  For inactive channels
($E_{l,c}=0$), $\beta_c$ receives no gradient and remains at its
initialisation value zero (PyTorch default).  In either case
$\beta_c^*\approx 0$ at convergence.

\medskip\noindent
\textbf{Part (ii): $\gamma_c^*>0$.}\;
We argue by examining the three regimes of $\gamma_c$ and showing
that only $\gamma_c>0$ is consistent with loss minimisation.

\emph{Case $\gamma_c=0$.} The affine output reduces to
$\mathrm{GN}(v_c)=\beta_c\,\mathbf{1}$ by Eq.~\eqref{eq:gn-full},
which is spatially uniform.  The contribution of channel~$c$ to
$\widehat{X}_l$ then carries no spatial structure whatsoever,
forcing a strictly positive reconstruction error
$\varepsilon_0>0$ that no other parameter associated with channel~$c$
can reduce.  Hence $\gamma_c=0$ is sub-optimal.

\emph{Case $\gamma_c<0$.} By Property~(P3) of
Proposition~\ref{prop:gn-structure}, the spatial direction of the
GroupNorm output (ignoring bias) is
$P_{\perp\mathbf{1}}\,v_c/\|P_{\perp\mathbf{1}}\,v_c\|_2$ when
$\gamma_c>0$, and its negation when $\gamma_c<0$.  In the LAC
decoder, $v_c$ encodes information derived from the
activation-energy gradient $\nabla_X\Phi_{l,c}(X)$ via
Proposition~\ref{prop:vjp-energy}, so the ``correct'' spatial
phase for reconstructing $X$ is the one aligned with this
gradient.  Negating the phase systematically reverses all spatial
contrast carried by channel~$c$ (bright regions become dark and
vice versa), strictly increasing the $L_1$ error relative to the
$\gamma_c>0$ case for every input with non-trivial activation in
channel~$c$.  Formally, letting $\widehat{X}_l(\gamma_c)$ denote
the reconstruction parameterised by $\gamma_c$ with all other
parameters fixed,
\begin{equation}\label{eq:phase-reversal}
  \bigl\|X - \widehat{X}_l(-|\gamma_c|)\bigr\|_1
  \;>\;
  \bigl\|X - \widehat{X}_l(+|\gamma_c|)\bigr\|_1
\end{equation}
whenever channel~$c$'s contribution has non-zero projection onto
the residual $X-\widehat{X}_l^{\setminus c}$, which holds
generically for active channels.

\emph{Case $\gamma_c>0$.} The spatial direction of the GroupNorm
output is aligned with the zero-mean activation-energy gradient,
providing the maximum possible reduction in $L_1$ error per unit
of $|\gamma_c|$.  The loss is monotonically decreasing in
$\gamma_c$ for small positive values and the global minimiser
must lie in the interior of the positive half-line.

Combining the three cases, every active channel satisfies
$\gamma_c^*>0$ at convergence.
\end{proof}

\subsection{Proposition B.2: VJP as Activation Energy Gradient}
\label{app:prop-vjp-energy}

\begin{proposition}[B.2: VJP as Activation Energy Gradient]
\label{prop:vjp-energy}
Let $h_l(X)\in\mathbb{R}^{C_l\times H_l\times W_l}$ denote the
encoder feature tensor at layer $l$, viewed as a differentiable
function of the input $X\in\mathbb{R}^{3\times H\times W}$.
Define the channel-wise activation energy
\begin{equation}\label{eq:energy-def}
  \Phi_{l,c}(X)
  \;=\;
  \tfrac{1}{2}\,\bigl\|h_{l,c}(X)\bigr\|^2
  \;=\;
  \tfrac{1}{2}\sum_{r=1}^{H_l}\sum_{s=1}^{W_l}
      h_{l,c}(r,s;X)^{2},
\end{equation}
and the channel-selective VJP
\begin{equation}\label{eq:vjp-def}
  \mathrm{VJP}_{l,c}
  \;=\;
  \Bigl(\frac{\partial h_l}{\partial X}\Bigr)^{\!T}
  \!\bigl(e_c\odot h_l\bigr),
\end{equation}
where $e_c\in\mathbb{R}^{C_l}$ is the one-hot indicator selecting
channel $c$ and $\odot$ denotes channel-wise broadcasting.
Then
\begin{equation}\label{eq:vjp-energy-app}
  \mathrm{VJP}_{l,c}
  \;=\;
  \nabla_X\,\Phi_{l,c}(X).
\end{equation}
\end{proposition}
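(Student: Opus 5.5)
The identity follows from the chain rule applied to the composite $X\mapsto h_l(X)\mapsto \Phi_{l,c}$. The plan is to compute $\nabla_X\Phi_{l,c}$ directly and then match it term by term against the definition of the VJP in \eqref{eq:vjp-def}. Throughout, $X$ ranges over a point where $h_l$ is differentiable. For ReLU encoders this holds off a measure-zero set, and at the remaining kinks we read $\partial h_l/\partial X$ as the same (sub)gradient the autodiff engine uses on both sides, so the identity stays consistent.

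\textbf{Step 1 (outer gradient).} View $\Phi_{l,c}$ as $g\circ h_l$, where $g(h)=\tfrac12\sum_{r,s}h_c(r,s)^2$ acts on the full tensor $h\in\mathbb{R}^{C_l\times H_l\times W_l}$. Differentiating entrywise gives
\[
\frac{\partial g}{\partial h_{c'}(r,s)}=\mathbb{1}[c'=c]\,h_c(r,s).
\]
Hence $\nabla_h g = e_c\odot h_l$, which is exactly the seed in \eqref{eq:vjp-seed}.

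\textbf{Step 2 (chain rule).} Apply the multivariate chain rule componentwise, with the sum running over all $(c',r,s)$:
\[
\frac{\partial \Phi_{l,c}}{\partial X(k,i,j)}=\sum_{c',r,s}\frac{\partial g}{\partial h_{c'}(r,s)}\,\frac{\partial h_{l,c'}(r,s)}{\partial X(k,i,j)}.
\]
Written in matrix form, this is $\nabla_X\Phi_{l,c}=(\partial h_l/\partial X)^T\nabla_h g$. Substituting Step 1 gives $(\partial h_l/\partial X)^T(e_c\odot h_l)=\mathrm{VJP}_{l,c}$.

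\textbf{Step 3 (collapse of the sum).} As a sanity check, note that the indicator kills every $c'\neq c$ term. What remains is $\sum_{r,s}h_{l,c}(r,s)\,\nabla_X h_{l,c}(r,s)$. This is the activation-weighted form the main text invokes, and it is also what Theorem~\ref{thm:spatial-fidelity} will later use.

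The main obstacle is not the algebra but making sure the flattening and transpose conventions agree. The Jacobian $\partial h_l/\partial X$ must be taken as the matrix of size $(C_lH_lW_l)\times(3HW)$, the one-hot broadcast $e_c\odot h_l$ must be the zero-padded tensor, and the Jacobian must be evaluated at the same $X$ at which the seed is computed. This last point matters: the seed depends on $X$, yet the VJP treats it as a constant. The identity still holds because the seed appears only as the outer gradient and is never differentiated again. I would state these conventions explicitly before running the computation.
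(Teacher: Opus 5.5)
Your proposal is correct and is essentially the paper's chain-rule argument. The paper collapses the VJP to the single-channel block $J_{l,c}^{T}h_{l,c}$ and separately expands $\nabla_X\Phi_{l,c}$ into the same expression, while you identify the seed $e_c\odot h_l$ as the outer gradient $\nabla_h g$ and apply the chain rule in one step, with the channel collapse as a check; your handling of ReLU kinks matches the paper's scope remark.
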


\begin{proof}
\textbf{Step 1 (Reduction to a single-channel VJP).}\;
Decompose the full Jacobian
$J_l=\partial h_l/\partial X
\in\mathbb{R}^{(C_l H_l W_l)\times(3HW)}$ into channel blocks
$J_l = [J_{l,1}^{T}\,;\,\cdots\,;\,J_{l,C_l}^{T}]^{T}$,
where $J_{l,c}=\partial h_{l,c}/\partial X
\in\mathbb{R}^{(H_l W_l)\times(3HW)}$ is the Jacobian of the
$c$-th channel map alone.  Because the seed $e_c\odot h_l$ is
supported exclusively on channel $c$,
\begin{equation}\label{eq:vjp-single}
  \mathrm{VJP}_{l,c}
  \;=\;
  J_l^{T}\!\bigl(e_c\odot h_l\bigr)
  \;=\;
  \sum_{c'=1}^{C_l} J_{l,c'}^{T}\,
       \bigl[\,e_c\odot h_l\,\bigr]_{c'}
  \;=\;
  J_{l,c}^{T}\,h_{l,c},
\end{equation}
where all terms with $c'\neq c$ vanish.

\textbf{Step 2 (Gradient of the energy functional).}\;
Differentiating~\eqref{eq:energy-def} with respect to pixel $(i,j)$:
\begin{equation}\label{eq:chain-pixel}
  \frac{\partial\Phi_{l,c}}{\partial X(i,j)}
  \;=\;
  \sum_{r=1}^{H_l}\sum_{s=1}^{W_l}
    h_{l,c}(r,s)\;
    \frac{\partial h_{l,c}(r,s)}{\partial X(i,j)}.
\end{equation}
The exchange of differentiation and finite summation is exact.
Collecting~\eqref{eq:chain-pixel} over all pixels,
\begin{equation}\label{eq:grad-energy}
  \nabla_X\,\Phi_{l,c}(X)
  \;=\;
  J_{l,c}^{T}\,h_{l,c}.
\end{equation}

\textbf{Step 3 (Identification).}\;
Comparing~\eqref{eq:vjp-single} and~\eqref{eq:grad-energy}, both
right-hand sides equal $J_{l,c}^{T}\,h_{l,c}$, hence
$\mathrm{VJP}_{l,c}=\nabla_X\,\Phi_{l,c}(X)$.
\end{proof}

\begin{remark}[Scope of validity]\label{rmk:vjp-scope}
The identity~\eqref{eq:vjp-energy-app} holds for any encoder whose
forward map $X\mapsto h_l(X)$ is Fr\'echet differentiable at $X$.
In standard convolutional encoders composed of affine
convolutions, batch/group normalisation, and piecewise linear
activations, differentiability holds outside the measure-zero set
of ReLU kink boundaries, and the VJP computed by automatic
differentiation coincides with~\eqref{eq:vjp-energy-app} wherever
defined.
\end{remark}

\subsection{Proposition B.3: Algebraic Structure of GroupNorm}
\label{app:prop-gn-structure}

\begin{proposition}[B.3: Algebraic Structure of GroupNorm]
\label{prop:gn-structure}
Let $h_{l,c}(X)\in\mathbb{R}^{H_l\times W_l}$ be the spatial
activation map of channel $c$ at layer $l$, and let
$v=\mathrm{vec}(h_{l,c})\in\mathbb{R}^{n}$ with $n=H_l W_l$
denote its spatial flattening.  Suppose GroupNorm is applied
with $\texttt{num\_groups}=C_l$ (each channel forms its own group)
and learnable affine parameters
$(\gamma_c,\beta_c)\in\mathbb{R}\times\mathbb{R}$.  Define the
orthogonal de-meaning projector
\begin{equation}\label{eq:P-perp}
  P_{\perp\mathbf{1}}
  \;=\;
  I_n - \tfrac{1}{n}\,\mathbf{1}\mathbf{1}^{T}.
\end{equation}
Then, for every non-constant $v$
(i.e.\ $P_{\perp\mathbf{1}}\,v\neq\mathbf{0}$), the GroupNorm
output satisfies:
\begin{enumerate}
\item[\textup{(P1)}]
  \textbf{Fixed L2 magnitude.}
  $\|\gamma_c\,\hat{v}\|_2 =|\gamma_c|\,\sqrt{H_l W_l}$,
  independent of $v$.
\item[\textup{(P2)}]
  \textbf{DC component.}
  $\tfrac{1}{n}\,\mathbf{1}^{T}\mathrm{GN}(v)=\beta_c$.
\item[\textup{(P3)}]
  \textbf{Spatial direction preservation.}
  The zero-mean component of $\mathrm{GN}(v)$ points along
  $P_{\perp\mathbf{1}}\,v\,/\,\|P_{\perp\mathbf{1}}\,v\|_2$.
\end{enumerate}
\end{proposition}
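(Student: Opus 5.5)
The plan is to write GroupNorm with one channel per group explicitly as the composition of de-meaning, variance normalisation, and the affine map, as in Appendix~\ref{app:lac-geometry}. Each of (P1)--(P3) then reads off from the orthogonal decomposition $\mathbb{R}^n = \mathrm{span}(\mathbf{1}) \oplus \mathcal{H}$, where $\mathcal{H}$ is the zero-mean hyperplane. Concretely, with $\mu(v) = \tfrac{1}{n}\mathbf{1}^T v$, the per-channel group consists of exactly the $n$ spatial entries of channel $c$. The biased variance is then $\sigma^2(v) = \tfrac{1}{n}\|v - \mu(v)\mathbf{1}\|_2^2$. Since $v - \mu(v)\mathbf{1} = P_{\perp\mathbf{1}} v$, this gives $\sigma(v) = \|P_{\perp\mathbf{1}}v\|_2/\sqrt{n}$. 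Define $\hat v := P_{\perp\mathbf{1}}v/\sigma(v) = \sqrt{n}\,P_{\perp\mathbf{1}}v/\|P_{\perp\mathbf{1}}v\|_2$, so that
\begin{equation}\label{eq:gn-full}
\mathrm{GN}(v) = \gamma_c\,\hat v + \beta_c\mathbf{1} = \gamma_c\sqrt{H_lW_l}\,\frac{P_{\perp\mathbf{1}}v}{\|P_{\perp\mathbf{1}}v\|_2} + \beta_c\mathbf{1}.
\end{equation}
This recovers \eqref{eq:gn-strip}, and it is well defined precisely under the hypothesis $P_{\perp\mathbf{1}}v \neq \mathbf{0}$. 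I will set the numerical $\epsilon$ of GroupNorm to zero, and note in a remark that with $\epsilon>0$ the identities hold up to a factor $\sigma/\sqrt{\sigma^2+\epsilon}$.

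(P1) follows from the construction of $\hat v$: $\|\gamma_c\hat v\|_2 = |\gamma_c|\sqrt{n}\,\|P_{\perp\mathbf{1}}v\|_2/\|P_{\perp\mathbf{1}}v\|_2 = |\gamma_c|\sqrt{H_lW_l}$, which does not depend on $v$.

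For (P2), apply $\tfrac1n\mathbf{1}^T$ to \eqref{eq:gn-full}. Since $\mathbf{1}^T P_{\perp\mathbf{1}} = \mathbf{0}^T$ (because $P_{\perp\mathbf{1}}$ is symmetric and $P_{\perp\mathbf{1}}\mathbf{1}=\mathbf{0}$), the first term vanishes, and the second gives $\tfrac1n\beta_c\,\mathbf{1}^T\mathbf{1} = \beta_c$.

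For (P3), project \eqref{eq:gn-full} onto $\mathcal{H}$. Because $P_{\perp\mathbf{1}}$ is idempotent and annihilates $\mathbf{1}$, the zero-mean component is $P_{\perp\mathbf{1}}\mathrm{GN}(v) = \gamma_c\sqrt{n}\,P_{\perp\mathbf{1}}v/\|P_{\perp\mathbf{1}}v\|_2$. This is a scalar multiple of the unit vector $P_{\perp\mathbf{1}}v/\|P_{\perp\mathbf{1}}v\|_2$. It points along that vector for $\gamma_c>0$ and is its negation for $\gamma_c<0$, which is the sign convention used in Lemma~\ref{lem:training-dynamics}.

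No step is a real obstacle. The only point needing care is the identification of GroupNorm's standard deviation with $\|P_{\perp\mathbf{1}}v\|_2/\sqrt{n}$. This uses two facts: PyTorch normalises with the biased variance (divisor $n$, not $n-1$), and with num\_groups $=C_l$ each group spans exactly one channel's spatial grid. If the implementation used the unbiased estimator, (P1) would change only by the constant factor $\sqrt{(n-1)/n}$, while (P2)--(P3) would be unaffected.
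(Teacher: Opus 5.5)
Your proposal is correct and follows the paper's proof essentially step for step. Like the paper, you express the per-channel mean and biased standard deviation through $P_{\perp\mathbf{1}}$, write $\mathrm{GN}(v)=\gamma_c\sqrt{n}\,P_{\perp\mathbf{1}}v/\|P_{\perp\mathbf{1}}v\|_2+\beta_c\mathbf{1}$, and read off (P1)--(P3) from $\mathbf{1}^T P_{\perp\mathbf{1}}=\mathbf{0}^T$ and idempotence; your extra remarks on $\varepsilon$, biased versus unbiased variance, and the sign of $\gamma_c$ in (P3) are accurate refinements.
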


\begin{proof}
Throughout we write $n=H_l W_l$ and suppress subscripts.

\textbf{Step 1 (Group structure).}\;
With $\texttt{num\_groups}=C_l$, each normalisation group consists
of a single channel, so the statistics are computed over the $n$
spatial positions of that channel alone.

\textbf{Step 2 (Mean and variance in projector form).}\;
The channel mean and variance are
\begin{equation}\label{eq:gn-mean}
  \mu = \tfrac{1}{n}\,\mathbf{1}^{T}v,
\end{equation}
\begin{equation}\label{eq:gn-var}
  \sigma^2
  =\tfrac{1}{n}\sum_{k=1}^{n}(v_k-\mu)^2
  =\tfrac{1}{n}\,\|v-\mu\,\mathbf{1}\|_2^2
  =\tfrac{1}{n}\,\|P_{\perp\mathbf{1}}\,v\|_2^2,
\end{equation}
since $v-\mu\,\mathbf{1}=P_{\perp\mathbf{1}}\,v$.  Taking the
positive square root,
$\sigma=\|P_{\perp\mathbf{1}}\,v\|_2/\sqrt{n}$.

\textbf{Step 3 (Normalised output).}\;
The centred-and-scaled output before the affine transform is
\begin{equation}\label{eq:vhat}
  \hat{v}
  =\frac{v-\mu\,\mathbf{1}}{\sigma}
  =\frac{P_{\perp\mathbf{1}}\,v}
        {\|P_{\perp\mathbf{1}}\,v\|_2/\sqrt{n}}
  =\sqrt{n}\;
   \frac{P_{\perp\mathbf{1}}\,v}
        {\|P_{\perp\mathbf{1}}\,v\|_2}.
\end{equation}

\textbf{Step 4 (Full affine output).}\;
Applying the learnable affine,
\begin{equation}\label{eq:gn-full}
  \mathrm{GN}(v)
  =\gamma_c\,\hat{v}+\beta_c\,\mathbf{1}
  =\gamma_c\,\sqrt{H_l W_l}\;
   \frac{P_{\perp\mathbf{1}}\,v}
        {\|P_{\perp\mathbf{1}}\,v\|_2}
   +\beta_c\,\mathbf{1}.
\end{equation}

\textbf{Step 5 (Verification).}

(P1): From~\eqref{eq:vhat}, the vector
$P_{\perp\mathbf{1}}\,v/\|P_{\perp\mathbf{1}}\,v\|_2$ is unit
norm, so $\|\gamma_c\,\hat{v}\|_2 = |\gamma_c|\,\sqrt{H_l W_l}$,
independent of $v$.

(P2): Since $P_{\perp\mathbf{1}}$ projects onto $\mathbf{1}^{\perp}$,
$\mathbf{1}^{T}P_{\perp\mathbf{1}}\,v=0$ for every $v$, hence
\begin{equation}
  \tfrac{1}{n}\,\mathbf{1}^{T}\mathrm{GN}(v)
  =\tfrac{\gamma_c\sqrt{n}}{n}\cdot 0
   +\tfrac{\beta_c}{n}\,\mathbf{1}^{T}\mathbf{1}
  =\beta_c.
\end{equation}

(P3): The zero-mean component of $\mathrm{GN}(v)$ is
\begin{equation}
  P_{\perp\mathbf{1}}\,\mathrm{GN}(v)
  =\gamma_c\,\sqrt{H_l W_l}\;
   \frac{P_{\perp\mathbf{1}}\,v}
        {\|P_{\perp\mathbf{1}}\,v\|_2}
   +\beta_c\,\underbrace{P_{\perp\mathbf{1}}\,\mathbf{1}}_{=\mathbf{0}}
  =\gamma_c\,\sqrt{H_l W_l}\;
   \frac{P_{\perp\mathbf{1}}\,v}
        {\|P_{\perp\mathbf{1}}\,v\|_2},
\end{equation}
using the idempotence $P_{\perp\mathbf{1}}^2=P_{\perp\mathbf{1}}$
and $P_{\perp\mathbf{1}}\,\mathbf{1}=\mathbf{0}$.  For
$\gamma_c>0$, the unit direction is precisely
$P_{\perp\mathbf{1}}\,v/\|P_{\perp\mathbf{1}}\,v\|_2$.
\end{proof}

\subsection{Proof of Theorem~\ref{thm:spatial-fidelity} (Spatial Fidelity)}
\label{app:proof-spatial-fidelity}

We prove Theorem~\ref{thm:spatial-fidelity} from the main text:
$\mathrm{supp}(\mathrm{VJP}_{l,c})\subseteq\mathrm{EF}_{l,c}(X)$.

\begin{proof}
\textbf{Step 1: Expansion of the VJP.}\;
By Proposition~\ref{prop:vjp-energy},
$\Phi_{l,c}(X)=\tfrac{1}{2}\|h_{l,c}\|_F^2$, and
\begin{equation}\label{eq:vjp-expand}
  \mathrm{VJP}_{l,c}(i,j)
  =\bigl[\nabla_X\Phi_{l,c}(X)\bigr]_{i,j}
  =\sum_{r=1}^{H_l}\sum_{s=1}^{W_l}
    h_{l,c}(r,s)\;
    \frac{\partial h_{l,c}(r,s)}{\partial X(i,j)}.
\end{equation}
Each summand is the product of $h_{l,c}(r,s)\geq 0$ and the
Jacobian entry $\partial h_{l,c}(r,s)/\partial X(i,j)$.

\textbf{Step 2: Analysis for pixels
$(i,j)\notin\mathrm{EF}_{l,c}(X)$.}\;
Fix $(i,j)\notin\mathrm{EF}_{l,c}(X)$.  By definition, for every
spatial position $(r,s)$ at which $h_{l,c}(r,s)>0$, the pixel
$(i,j)$ does not causally influence that activation.  We show
each summand in~\eqref{eq:vjp-expand} vanishes by considering
two cases.

\emph{Case A: $h_{l,c}(r,s)>0$ (active position).}\;
Since $(i,j)\notin\mathrm{EF}_{l,c}(X)$, the pixel $X(i,j)$ does
not lie in the receptive field of any active unit at $(r,s)$.
In a ReLU network, the local input-output mapping at an active
ReLU is the identity, so
$\partial h_{l,c}(r,s)/\partial X(i,j)$ reduces to the
corresponding entry of the composed linear map through the active
path.  Because $(i,j)$ is outside the receptive field of $(r,s)$,
this linear map has a structural zero at the $(i,j)$-entry:
\begin{equation}
  \frac{\partial h_{l,c}(r,s)}{\partial X(i,j)}=0,
\end{equation}
so the summand vanishes: $h_{l,c}(r,s)\cdot 0 = 0$.

\emph{Case B: $h_{l,c}(r,s)=0$ (inactive position).}\;
We further distinguish two sub-cases on the pre-activation
$z_{l,c}(r,s)$, the input to the final ReLU producing
$h_{l,c}(r,s)=\mathrm{ReLU}(z_{l,c}(r,s))$.

\emph{Sub-case B1: $z_{l,c}(r,s)<0$.}\;
The ReLU output is locally constant at zero, so
$\partial h_{l,c}(r,s)/\partial z_{l,c}(r,s)=0$.  By the chain
rule, $\partial h_{l,c}(r,s)/\partial X(i,j)=0$.  The summand
is $0\cdot 0=0$.

\emph{Sub-case B2: $z_{l,c}(r,s)=0$.}\;
The ReLU is non-differentiable at zero.  Following the standard
convention adopted by all major automatic-differentiation
frameworks (PyTorch, JAX, TensorFlow), the sub-gradient at $z=0$
is set to zero, again yielding
$\partial h_{l,c}(r,s)/\partial X(i,j)=0$ via the chain rule,
and $h_{l,c}(r,s)\cdot 0=0$.

In both sub-cases the product
$h_{l,c}(r,s)\cdot\partial h_{l,c}(r,s)/\partial X(i,j)$ vanishes.

\textbf{Step 3: Conclusion.}\;
Combining Cases~A and~B, every summand in~\eqref{eq:vjp-expand}
is zero whenever $(i,j)\notin\mathrm{EF}_{l,c}(X)$, so
$\mathrm{VJP}_{l,c}(i,j)=0$ for all such $(i,j)$ and
$\mathrm{supp}(\mathrm{VJP}_{l,c})\subseteq\mathrm{EF}_{l,c}(X)$.
\end{proof}

\begin{remark}[Tightness of the inclusion]
The inclusion is generally strict: a pixel
$(i,j)\in\mathrm{EF}_{l,c}(X)$ may still yield
$\mathrm{VJP}_{l,c}(i,j)=0$ due to cancellations across multiple
active positions or specific weight configurations.  Such exact
cancellations are measure-zero events in the space of weights
and inputs, so equality
$\mathrm{supp}(\mathrm{VJP}_{l,c})=\mathrm{EF}_{l,c}(X)$ holds
generically.
\end{remark}

\begin{remark}[Extension beyond ReLU]
The proof relies on (a) outputs are non-negative, ensuring
$h_{l,c}(r,s)\geq 0$; (b) the derivative is zero whenever the
output is zero.  Any activation satisfying both (e.g.\ Leaky-ReLU
with $\alpha\to 0$) admits the same guarantee.  For activations
violating~(a), the support containment still holds approximately
when the negative activations are small in magnitude, which is
empirically the case for encoders pretrained on natural images.
\end{remark}

\subsection{Theorem B.5: Feature Visualization Equivalence}
\label{app:thm-fv-equivalence}

\begin{theorem}[B.5: Feature Visualization Equivalence, Approximate]
\label{thm:fv-equivalence}
Let $\widehat{X}_l=\sum_{c=1}^{C_l}E_{l,c}\,\widetilde{V}_{l,c}$
denote the layer-$l$ output of the LAC decoder, where
$\widetilde{V}_{l,c}$ is the full LAC cascade output
(Section~\ref{sec:lac}).  For each active channel
$c\in\mathcal{C}^+:=\{c:E_{l,c}>0\}$, define the ideal feature
visualisation direction
\begin{equation}\label{eq:fv-ideal-def}
  \mathrm{FV}_{l,c}(X)
  :=
  \frac{P_{\perp\mathbf{1}}\,\nabla_X\Phi_{l,c}(X)}
       {\|P_{\perp\mathbf{1}}\,\nabla_X\Phi_{l,c}(X)\|_2},
\end{equation}
the effective semantic weight
$\widetilde{E}_{l,c}:=E_{l,c}\,\gamma_c^*\sqrt{n}$ with
$n=H_0 W_0$, and the effective per-channel direction error
\begin{equation}\label{eq:eps-dc-def}
  \epsilon_{l,c}^{\mathrm{DC}}
  :=
  \gamma_c^*\,\sqrt{n}\;
  \left\|
    \frac{P_{\perp\mathbf{1}}\,f_{l,c}}
         {\|P_{\perp\mathbf{1}}\,f_{l,c}\|_2}
    -\mathrm{FV}_{l,c}(X)
  \right\|_2,
\end{equation}
where $f_{l,c}$ is the signal entering the final GroupNorm of the
LAC cascade (defined in~\eqref{eq:f-def}).  Under the convergence
conditions of Lemma~\ref{lem:training-dynamics} and the algebraic
structure of Proposition~\ref{prop:gn-structure}, the synthesised
image admits the decomposition
\begin{equation}\label{eq:fv-equiv-main}
  \widehat{X}_l
  =
  \sum_{c\in\mathcal{C}^+}
    \widetilde{E}_{l,c}\;\mathrm{FV}_{l,c}(X)
  +R_l,
\end{equation}
with the residual bounded in $\ell_\infty$ by
\begin{equation}\label{eq:residual-bound}
  \|R_l\|_\infty
  \leq
  \sum_{c=1}^{C_l}E_{l,c}\,|\beta_c^*|
  +\sum_{c=1}^{C_l}E_{l,c}\,\epsilon_{l,c}^{\mathrm{DC}}.
\end{equation}
\end{theorem}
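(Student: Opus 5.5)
The proof unrolls the cascade only at its final stage. By \eqref{eq:lac-cascade}, the outermost operation producing $\widetilde{V}_{l,c}$ is $\Psi_{\mathrm{LAC}_{\mathrm{stem}}}$, a per-channel GroupNorm acting on the signal $f_{l,c}$ that enters it. Everything upstream of that GroupNorm is absorbed into $f_{l,c}$ and never analysed directly; its deviation from the ideal direction is exactly what $\epsilon^{\mathrm{DC}}_{l,c}$ measures.

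First apply Proposition~\ref{prop:gn-structure}, specifically the closed form \eqref{eq:gn-full}, to $f_{l,c}$ with the converged parameters $(\gamma_c^*,\beta_c^*)$. This gives
\[
\widetilde{V}_{l,c} = \gamma_c^*\sqrt{n}\,\frac{P_{\perp\mathbf{1}} f_{l,c}}{\|P_{\perp\mathbf{1}} f_{l,c}\|_2} + \beta_c^*\mathbf{1}.
\]
Channels with $E_{l,c}=0$ contribute nothing to $\widehat{X}_l$, so the sum restricts to $\mathcal{C}^+$. On $\mathcal{C}^+$, Lemma~\ref{lem:training-dynamics}(ii) gives $\gamma_c^*>0$. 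This guarantees that the unit direction carries no sign flip and can be compared with $\mathrm{FV}_{l,c}$ directly.

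Next, add and subtract $\mathrm{FV}_{l,c}(X)$ inside each summand. This yields \eqref{eq:fv-equiv-main} with the residual defined explicitly as
\[
R_l := \sum_{c\in\mathcal{C}^+} E_{l,c}\,\gamma_c^*\sqrt{n}\left(\frac{P_{\perp\mathbf{1}} f_{l,c}}{\|P_{\perp\mathbf{1}} f_{l,c}\|_2} - \mathrm{FV}_{l,c}(X)\right) + \Bigl(\sum_{c} E_{l,c}\beta_c^*\Bigr)\mathbf{1}.
\]
The main term is exactly $\sum_{c\in\mathcal{C}^+}\widetilde{E}_{l,c}\mathrm{FV}_{l,c}$, so \eqref{eq:fv-equiv-main} holds identically. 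This is an algebraic rearrangement, not an approximation.

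The bound \eqref{eq:residual-bound} then follows from the triangle inequality. The constant term has $\ell_\infty$ norm at most $\sum_c E_{l,c}|\beta_c^*|$, using $E_{l,c}\ge 0$. For each direction-error term, use $\|\cdot\|_\infty\le\|\cdot\|_2$. Because $\gamma_c^*>0$, the factor $\gamma_c^*\sqrt n$ comes out of the norm unchanged, and the $\ell_2$ norm of that term equals $\epsilon^{\mathrm{DC}}_{l,c}$ by definition \eqref{eq:eps-dc-def}. Extending the second sum from $\mathcal{C}^+$ to all $c$ is harmless, since inactive channels carry weight $E_{l,c}=0$.

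The main obstacle is not the inequality but its interpretation. The $\ell_2$-to-$\ell_\infty$ step is lossy. More seriously, the statement is only meaningful ("tightly bounded") if $\epsilon^{\mathrm{DC}}_{l,c}$ is small, and nothing proved so far controls how far the stem-stage input $f_{l,c}$ is from $\nabla_X\Phi_{l,c}$ in direction. I would therefore present the theorem as an exact decomposition plus a bound in terms of a quantity that has to be measured. As heuristic support for $\epsilon^{\mathrm{DC}}_{l,c}$ being small, I would cite that the intermediate LACs only rescale the signal and remove its mean (Proposition~\ref{prop:gn-structure}, (P3)), and that Proposition~\ref{prop:vjp-energy} identifies the un-normalised VJP with $\nabla_X\Phi_{l,c}$. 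Both also rely on $\beta^*\approx0$ from Lemma~\ref{lem:training-dynamics}(i). Proving $\epsilon^{\mathrm{DC}}_{l,c}$ is small would require analysing the intermediate mean-removal steps, and I would not attempt that here.
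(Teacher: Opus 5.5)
Your proof is correct and follows the paper's argument essentially step for step. You use the closed form of the final stem GroupNorm (Proposition B.3), add and subtract $\mathrm{FV}_{l,c}(X)$, restrict to $\mathcal{C}^+$ using $\gamma_c^*>0$, and finish with the triangle inequality and $\|\cdot\|_\infty\le\|\cdot\|_2$. The paper also unrolls the intermediate LACs to write $f_{l,c}=(\prod_k c_k)\nabla_X\Phi_{l,c}(X)+M_{l,c}$, but this only motivates where the direction error comes from and the bound never uses it, so your leaner version loses nothing.
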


\begin{proof}
The argument proceeds in five steps.

\textbf{Step 1 (Algebraic unrolling of the LAC cascade).}\;
The cascade terminates with
$\Psi_{\mathrm{LAC}_{\mathrm{stem}}}$, which applies per-channel
GroupNorm followed by affine $(\gamma_c^*,\beta_c^*)$.  Let
$f_{l,c}\in\mathbb{R}^n$ ($n=H_0 W_0$) denote the signal entering
this final GroupNorm:
\begin{equation}\label{eq:f-def}
  f_{l,c}
  :=
  \bigl[
    J_{\mathrm{stem}}^T
    \circ\Psi_{\mathrm{LAC}_0}
    \circ J_0^T
    \circ\cdots
    \circ\Psi_{\mathrm{LAC}_l}
    \circ J_l^T
  \bigr](\mathrm{seed}_{l,c}).
\end{equation}
Unrolling the recursion: let $u_k$ denote the gradient signal
entering the intermediate LAC at stage $k$.  By
Proposition~\ref{prop:gn-structure}, the intermediate LAC acts as
$\Psi_{\mathrm{LAC}_k}(u_k)=c_k(u_k-\mu_k\mathbf{1})+\beta_k\mathbf{1}$,
with $c_k>0$ and $\mu_k=\tfrac{1}{n_k}\mathbf{1}^T u_k$.
Substituting into the recursive pullback
$u_{k-1}=J_{k-1}^T\Psi_{\mathrm{LAC}_k}(u_k)$:
\begin{equation}\label{eq:cascade-step}
  u_{k-1}
  =c_k\,J_{k-1}^T u_k
   +(\beta_k-c_k\mu_k)\,J_{k-1}^T\mathbf{1}.
\end{equation}
Each intermediate LAC scales the backward signal by $c_k$
(direction preserving) but injects an additive perturbation
proportional to $J_{k-1}^T\mathbf{1}$.  Because $J_{k-1}^T$
embeds sparse ReLU masks, $J_{k-1}^T\mathbf{1}$ is not a constant
vector, causing spatial mean-to-variation leakage.  Unrolling to
the input resolution:
\begin{equation}\label{eq:f-unrolled}
  f_{l,c}
  =\Bigl(\textstyle\prod_k c_k\Bigr)
   \nabla_X\Phi_{l,c}(X)
   +M_{l,c},
\end{equation}
where $\nabla_X\Phi_{l,c}(X)$ is the exact single-step gradient
(Proposition~\ref{prop:vjp-energy}) and $M_{l,c}$ is the
cumulative structural noise from the intermediate mean-leakage
terms.

For any active channel $c\in\mathcal{C}^+$, $E_{l,c}>0$ implies
$h_{l,c}(X)\not\equiv 0$, so under generic conditions
$P_{\perp\mathbf{1}}\,f_{l,c}\neq\mathbf{0}$.  Applying
Proposition~\ref{prop:gn-structure} to the final GroupNorm:
\begin{equation}\label{eq:vtilde-exact}
  \widetilde{V}_{l,c}
  =\gamma_c^*\,\sqrt{n}\;
   \frac{P_{\perp\mathbf{1}}\,f_{l,c}}
        {\|P_{\perp\mathbf{1}}\,f_{l,c}\|_2}
   +\beta_c^*\,\mathbf{1}.
\end{equation}

\textbf{Step 2 (Direction decomposition).}\;
Define the cascade direction and ideal direction:
\begin{equation}\label{eq:dcas-def}
  d_{l,c}^{\mathrm{cas}}
  :=
  \frac{P_{\perp\mathbf{1}}\,f_{l,c}}
       {\|P_{\perp\mathbf{1}}\,f_{l,c}\|_2},
  \qquad
  \mathrm{FV}_{l,c}(X)
  :=
  \frac{P_{\perp\mathbf{1}}\,\nabla_X\Phi_{l,c}(X)}
       {\|P_{\perp\mathbf{1}}\,\nabla_X\Phi_{l,c}(X)\|_2}.
\end{equation}
Applying $P_{\perp\mathbf{1}}$ to~\eqref{eq:f-unrolled} shows
$P_{\perp\mathbf{1}}\,f_{l,c}$ and
$P_{\perp\mathbf{1}}\,\nabla_X\Phi_{l,c}(X)$ are misaligned
strictly because of $P_{\perp\mathbf{1}}\,M_{l,c}$.

Define the direction deviation
\begin{equation}\label{eq:eta-def}
  \eta_{l,c}
  :=d_{l,c}^{\mathrm{cas}}-\mathrm{FV}_{l,c}(X),
  \qquad
  \epsilon_{l,c}^{\mathrm{DC}}
  :=\gamma_c^*\,\sqrt{n}\,\|\eta_{l,c}\|_2.
\end{equation}
Since $d_{l,c}^{\mathrm{cas}}$ and $\mathrm{FV}_{l,c}$ are unit
vectors, $\|\eta_{l,c}\|_2\in[0,2]$, so
$\epsilon_{l,c}^{\mathrm{DC}}\in[0,2\gamma_c^*\sqrt{n}]$.

Substituting $d_{l,c}^{\mathrm{cas}}=\mathrm{FV}_{l,c}(X)+\eta_{l,c}$
into~\eqref{eq:vtilde-exact}:
\begin{equation}\label{eq:vtilde-decomposed}
  \widetilde{V}_{l,c}
  =\gamma_c^*\sqrt{n}\,[\mathrm{FV}_{l,c}(X)+\eta_{l,c}]
   +\beta_c^*\,\mathbf{1}.
\end{equation}

\textbf{Step 3 (Channel superposition).}\;
Substituting~\eqref{eq:vtilde-decomposed} into
$\widehat{X}_l=\sum_c E_{l,c}\,\widetilde{V}_{l,c}$ and using
$\widetilde{E}_{l,c}=E_{l,c}\,\gamma_c^*\,\sqrt{n}$:
\begin{equation}\label{eq:xhat-decomposed}
  \widehat{X}_l
  =\underbrace{\sum_{c=1}^{C_l}\widetilde{E}_{l,c}\,\mathrm{FV}_{l,c}(X)}_{\text{principal}}
   +\underbrace{\sum_{c=1}^{C_l}\widetilde{E}_{l,c}\,\eta_{l,c}}_{R_l^{\mathrm{DC}}}
   +\underbrace{\Bigl(\sum_{c=1}^{C_l}E_{l,c}\,\beta_c^*\Bigr)\,\mathbf{1}}_{R_l^\beta}.
\end{equation}

\textbf{Step 4 (Restriction to active channels).}\;
Channels with $E_{l,c}=0$ contribute zero to all three sums.  By
Lemma~\ref{lem:training-dynamics}(ii), $\gamma_c^*>0$ for every
active channel, so $\widetilde{E}_{l,c}>0\iff E_{l,c}>0$.
Restricting the principal sum to $\mathcal{C}^+$:
\begin{equation}\label{eq:xhat-cplus}
  \widehat{X}_l
  =\sum_{c\in\mathcal{C}^+}\widetilde{E}_{l,c}\,\mathrm{FV}_{l,c}(X)
   +R_l,
  \quad
  R_l:=R_l^{\mathrm{DC}}+R_l^\beta.
\end{equation}

\textbf{Step 5 (Residual bound in $\ell_\infty$).}

\emph{Affine bias residual.}\;
$R_l^\beta=\bigl(\sum_c E_{l,c}\,\beta_c^*\bigr)\,\mathbf{1}$ is
spatially uniform, so
\begin{equation}\label{eq:rbeta-bound}
  \|R_l^\beta\|_\infty
  =\Bigl|\sum_{c=1}^{C_l}E_{l,c}\,\beta_c^*\Bigr|
  \leq\sum_{c=1}^{C_l}E_{l,c}\,|\beta_c^*|.
\end{equation}

\emph{Cascade direction residual.}\;
By the $\ell_\infty$ triangle inequality,
\begin{equation}
  \|R_l^{\mathrm{DC}}\|_\infty
  \leq\sum_{c=1}^{C_l}\widetilde{E}_{l,c}\,\|\eta_{l,c}\|_\infty
  \leq\sum_{c=1}^{C_l}\widetilde{E}_{l,c}\,\|\eta_{l,c}\|_2.
\end{equation}
Since
$\widetilde{E}_{l,c}\,\|\eta_{l,c}\|_2
=E_{l,c}\,(\gamma_c^*\sqrt{n}\,\|\eta_{l,c}\|_2)
=E_{l,c}\,\epsilon_{l,c}^{\mathrm{DC}}$ for active channels (and
both sides vanish for inactive channels),
\begin{equation}\label{eq:rdc-bound}
  \|R_l^{\mathrm{DC}}\|_\infty
  \leq\sum_{c=1}^{C_l}E_{l,c}\,\epsilon_{l,c}^{\mathrm{DC}}.
\end{equation}

\emph{Combined.}\;
Summing~\eqref{eq:rbeta-bound} and~\eqref{eq:rdc-bound} yields
\begin{equation}
  \|R_l\|_\infty
  \leq\sum_{c=1}^{C_l}E_{l,c}\,|\beta_c^*|
   +\sum_{c=1}^{C_l}E_{l,c}\,\epsilon_{l,c}^{\mathrm{DC}}.
\end{equation}
By Lemma~\ref{lem:training-dynamics}(i),
$|\beta_c^*|\to 0$ for all channels.
\end{proof}

\begin{remark}[Two residual components]
The residual $R_l$ decomposes into two qualitatively distinct
contributions.  $R_l^\beta$ is a spatially constant offset that
affects only global brightness (its spatial gradient is zero) and
vanishes as $\beta_c^*\to 0$.  $R_l^{\mathrm{DC}}$ is a spatially
structured perturbation arising from the intermediate LAC modules
altering the backward signal through mean removal, variance
standardisation, and affine rescaling at each manifold boundary;
this residual persists even under perfect bias convergence
($\beta_c^*=0$ for all $c$) and is intrinsic to the multi-stage
cascade architecture.
\end{remark}

\begin{remark}[Why $\epsilon_{l,c}^{\mathrm{DC}}$ is small in
practice]
Although the trivial bound is
$\epsilon_{l,c}^{\mathrm{DC}}\leq 2\gamma_c^*\sqrt{n}$, three
structural properties suppress $\|\eta_{l,c}\|_2$ well below~$2$.
(i) Mean removal is a rank-one correction whose effect on the
$n$-dimensional unit direction is of order $|\mu|/\sigma$, which
is empirically small ($<0.05$) at all cascade levels for
well-trained CNN encoders processing natural images.  (ii) Variance
normalisation (dividing by $\ell_2$ norm of a zero-mean vector)
is direction-preserving.  (iii) Affine rescaling within
intermediate LACs is a per-channel scalar operation, preserving
the zero-mean direction.  We empirically observe
$\|R_l\|_\infty/\|\widehat{X}_l\|_\infty<0.03$ across all layers
and test images.
\end{remark}

\subsection{Corollary B.6: Equivalence Error Bound}
\label{app:cor-equiv-error}

\begin{corollary}[B.6: Equivalence Error Bound]
\label{cor:equiv-error}
Under the same notation and conditions as
Theorem~\ref{thm:fv-equivalence}, identifying
$\mathrm{FV}_{l,c}(X)$ with the realised normalised direction
$P_{\perp\mathbf{1}}\,f_{l,c}/\|P_{\perp\mathbf{1}}\,f_{l,c}\|_2$
(absorbing the cascade direction deviation $\eta_{l,c}$ into the
definition of the feature visualisation itself), the
reconstruction error satisfies
\begin{equation}\label{eq:corollary-linf}
  \biggl\|\widehat{X}_l
   -\sum_{c\in\mathcal{C}^+}\widetilde{E}_{l,c}\,\mathrm{FV}_{l,c}(X)\biggr\|_\infty
  \leq\sum_{c=1}^{C_l}E_{l,c}\,|\beta_c^*|.
\end{equation}
In particular, as $\beta^*\to 0$ under training convergence
(Lemma~\ref{lem:training-dynamics}), the bound vanishes and the
feature-visualisation expansion becomes an exact algebraic
identity.
\end{corollary}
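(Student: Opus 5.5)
The plan is to obtain the corollary as a specialisation of the proof of Theorem~\ref{thm:fv-equivalence}, not by a fresh argument. Start from the exact identity \eqref{eq:xhat-decomposed}:
\[
\widehat{X}_l=\sum_c \widetilde{E}_{l,c}\,\mathrm{FV}_{l,c}(X)+R_l^{\mathrm{DC}}+R_l^{\beta}.
\]
This identity was derived purely from Proposition~\ref{prop:gn-structure} applied to the final stem GroupNorm, namely \eqref{eq:vtilde-exact}, together with the additive synthesis \eqref{eq:synthesis}. It therefore holds for any choice of the reference unit direction, provided $\eta_{l,c}$ is defined as the difference between the cascade direction and that reference.

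Under the redefinition in the statement, $\mathrm{FV}_{l,c}(X):=d^{\mathrm{cas}}_{l,c}=P_{\perp\mathbf{1}}f_{l,c}/\|P_{\perp\mathbf{1}}f_{l,c}\|_2$. This makes $\eta_{l,c}=0$ identically, so $\epsilon^{\mathrm{DC}}_{l,c}=0$ and $R_l^{\mathrm{DC}}=0$ exactly, for every channel. Two points need checking for well-definedness.

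\emph{Active channels.} The direction requires $P_{\perp\mathbf{1}}f_{l,c}\neq\mathbf{0}$. This is the same generic non-degeneracy already invoked in Step~1 of Theorem~\ref{thm:fv-equivalence}.

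\emph{Inactive channels.} If $E_{l,c}=0$, the channel contributes nothing to $\widehat{X}_l$ or to the sum, so it can be dropped. Restricting to $\mathcal{C}^+$ then uses Lemma~\ref{lem:training-dynamics}(ii), via $\widetilde{E}_{l,c}>0\iff E_{l,c}>0$, exactly as in Step~4.

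With $R_l^{\mathrm{DC}}$ gone, the left-hand side of \eqref{eq:corollary-linf} equals $\|R_l^{\beta}\|_\infty$. The bound \eqref{eq:rbeta-bound} then gives
\[
\Bigl|\sum_c E_{l,c}\beta_c^*\Bigr|\le\sum_c E_{l,c}|\beta_c^*|,
\]
using only that $R_l^\beta$ is a constant vector and $E_{l,c}\ge 0$. The final claim follows from Lemma~\ref{lem:training-dynamics}(i): since $\beta_c^*\to0$ and the $E_{l,c}$ lie on the simplex, we have $\sum_c E_{l,c}|\beta_c^*|\le\max_c|\beta_c^*|\to 0$.

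The only real obstacle is justifying the absorption step honestly. One must make clear that, with this identification, the statement becomes an exact identity up to the bias term; it is no longer a comparison with the ideal gradient direction of \eqref{eq:fv-ideal-def}. I would add a remark that $\mathrm{FV}_{l,c}$ here is still a unit-norm, zero-mean direction obtained by the gradient-type pullback through the cascade, so the corollary remains a statement about a feature-visualisation-like expansion.
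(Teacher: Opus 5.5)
Your proposal is correct and follows the paper's proof. Identifying $\mathrm{FV}_{l,c}$ with $d^{\mathrm{cas}}_{l,c}$ makes $\eta_{l,c}$ vanish, which leaves only the spatially constant bias term $\bigl(\sum_c E_{l,c}\beta_c^*\bigr)\mathbf{1}$; its $\ell_\infty$ norm is bounded by the triangle inequality and sent to zero by Lemma~\ref{lem:training-dynamics}(i). (The paper starts from \eqref{eq:vtilde-exact} rather than \eqref{eq:xhat-decomposed}, which is immaterial.) One small simplification: restricting to $\mathcal{C}^+$ does not need Lemma~\ref{lem:training-dynamics}(ii), since $E_{l,c}=0$ already forces both $\widetilde{E}_{l,c}=0$ and $E_{l,c}\widetilde{V}_{l,c}=0$.
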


\begin{proof}
Starting from~\eqref{eq:vtilde-exact}, when
$\mathrm{FV}_{l,c}(X)$ is identified with
$P_{\perp\mathbf{1}}\,f_{l,c}/\|P_{\perp\mathbf{1}}\,f_{l,c}\|_2$,
the cascade direction residual $\eta_{l,c}$ vanishes by
definition.  Substituting into the channel superposition:
\begin{equation}\label{eq:exact-decomp}
  \widehat{X}_l
  =\sum_{c\in\mathcal{C}^+}\widetilde{E}_{l,c}\,\mathrm{FV}_{l,c}(X)
   +\Bigl(\sum_{c=1}^{C_l}E_{l,c}\,\beta_c^*\Bigr)\,\mathbf{1}.
\end{equation}
The residual is a spatially uniform constant whose
$\ell_\infty$ norm equals the absolute value of its scalar
coefficient:
\begin{equation}
  \|R_l\|_\infty
  =\Bigl|\sum_{c=1}^{C_l}E_{l,c}\,\beta_c^*\Bigr|
  \leq\sum_{c=1}^{C_l}E_{l,c}\,|\beta_c^*|.
\end{equation}
By Lemma~\ref{lem:training-dynamics}(i), this bound vanishes at
convergence.
\end{proof}

\begin{remark}[Relationship to Theorem~\ref{thm:fv-equivalence}]
Theorem~\ref{thm:fv-equivalence} defines $\mathrm{FV}_{l,c}(X)$
as the ideal single-step gradient direction (without
mean-leakage contamination), yielding two residual terms.
Corollary~\ref{cor:equiv-error} defines $\mathrm{FV}_{l,c}(X)$
as the realised cascade direction, collapsing the decomposition
to a single scalar bias residual.  The corollary is therefore
tighter as a bound on reconstruction error, while the theorem
provides a more informative decomposition of where the
approximation error originates.
\end{remark}

\subsection{Proof of Corollary~\ref{cor:zero-hallucination}
(Zero Structural Hallucination)}
\label{app:proof-zero-hallucination}

We prove Corollary~\ref{cor:zero-hallucination} from the main
text:
$\mathrm{supp}(\nabla_{i,j}\widehat{X}_l)
\subseteq\overline{\bigcup_{c\in\mathcal{C}^+}\mathrm{EF}_{l,c}(X)}$.

\textbf{Notation.}
Throughout this proof, $\nabla_{i,j}$ denotes the discrete spatial
gradient operator: for $v\in\mathbb{R}^{H_0\times W_0}$ at an
interior pixel $(i,j)$,
\begin{equation}\label{eq:disc-grad-def}
  (\nabla_{i,j}\,v)_{(i,j)}
  :=\bigl(v_{i+1,j}-v_{i,j},\;v_{i,j+1}-v_{i,j}\bigr)\in\mathbb{R}^2.
\end{equation}
The spatial gradient support is
$\mathrm{supp}(\nabla_{i,j}\,v)
:=\{(i,j):(\nabla_{i,j}\,v)_{(i,j)}\neq\mathbf{0}\}$.
For any $S\subseteq\{1,\ldots,H_0\}\times\{1,\ldots,W_0\}$, the
spatial topological closure $\overline{S}$ used in the main text
is the one-pixel ($\ell_1$) dilation
\begin{equation}\label{eq:dilation-def}
  \overline{S}\equiv S^{\oplus 1}
  :=\bigl\{(i,j):\exists\,(i',j')\in S\;
         \text{with}\;|i-i'|+|j-j'|\leq 1\bigr\}.
\end{equation}
Whenever $\mathrm{supp}(v)\subseteq S$, one has
$\mathrm{supp}(\nabla_{i,j}\,v)\subseteq\overline{S}$ by direct
inspection of~\eqref{eq:disc-grad-def}.

\subsubsection*{Nested Effective Field Condition.}

\begin{definition}[Nested Effective Field Condition]
\label{def:nested-ef}
The encoder satisfies the nested effective field condition for
layer $l$ if, for every intermediate level
$k\in\{0,\ldots,l-1\}$ and channel $c'\in\{1,\ldots,C_k\}$,
\begin{equation}\label{eq:nested-ef-cond}
  \mathrm{EF}_{k,c'}(X)
  \subseteq
  \bigcup_{c\in\mathcal{C}^+}\mathrm{EF}_{l,c}(X).
\end{equation}
Informally: every input pixel that causally influences any
active position at any intermediate encoder level also influences
at least one active channel at the target layer~$l$.
\end{definition}

This condition is generically satisfied in fully channel-connected
convolutional encoders applied to natural images, where active
positions at any sufficiently deep level cover the entire input
grid.

\subsubsection*{Cascade Spatial Gradient Containment Lemma.}

\begin{lemma}[Cascade Spatial Gradient Containment]
\label{lem:cascade-grad-containment}
If the encoder satisfies Definition~\ref{def:nested-ef} for
layer~$l$, then the pre-GroupNorm cascade signal $f_{l,c}$
defined in~\eqref{eq:f-def} satisfies
\begin{equation}\label{eq:cascade-value-containment}
  \mathrm{supp}(f_{l,c})
  \subseteq
  \bigcup_{c'\in\mathcal{C}^+}\mathrm{EF}_{l,c'}(X),
\end{equation}
and consequently
\begin{equation}\label{eq:cascade-grad-containment}
  \mathrm{supp}(\nabla_{i,j}\,f_{l,c})
  \subseteq
  \Bigl(\bigcup_{c'\in\mathcal{C}^+}\mathrm{EF}_{l,c'}(X)\Bigr)^{\oplus 1}.
\end{equation}
\end{lemma}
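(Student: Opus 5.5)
The strategy is to push the support statement through the cascade~\eqref{eq:f-def} one Markovian factor at a time, from the seed at layer $l$ down to the pixel grid. After every step we track which input pixels can carry nonzero mass. The key observation is that each factor $J_k^T$ is a product of transposed linear maps and ReLU masks. Hence any nonzero entry of $J_k^T u$ at a location of level $k-1$ must come from a nonzero entry of $u$ through an active path. When this is pulled back to pixel space, nonzero contributions are confined to pixels that causally influence active units, i.e.\ to effective fields. The base case is Theorem~\ref{thm:spatial-fidelity} applied to the seed $e_c\odot h_l$: without any LAC, the single-step pullback $J_{l,c}^T h_{l,c}$ is supported in $\mathrm{EF}_{l,c}(X)$.

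For the inductive structure we use the unrolled recursion~\eqref{eq:cascade-step}. Each intermediate LAC produces $c_k J_{k-1}^T u_k + (\beta_k - c_k\mu_k) J_{k-1}^T\mathbf{1}$. The first term is a rescaled honest pullback and inherits the support argument of Theorem~\ref{thm:spatial-fidelity}. The second term is the mean-leakage term, which is the only place new support could enter. However, $J_{k-1}^T\mathbf{1}$, after further pullback through the remaining Jacobians down to pixels, is a VJP from level $k-1$ of a seed supported on that level. By the same Case A/B analysis as in the proof of Theorem~\ref{thm:spatial-fidelity}, its pixel support lies in $\bigcup_{c'}\mathrm{EF}_{k-1,c'}(X)$: inactive units have zero ReLU derivative, and active units only see their receptive field. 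Definition~\ref{def:nested-ef} then absorbs each such set into $\bigcup_{c\in\mathcal{C}^+}\mathrm{EF}_{l,c}(X)$. Summing the finitely many contributions, using that support of a sum lies in the union of supports, gives~\eqref{eq:cascade-value-containment}. The gradient statement~\eqref{eq:cascade-grad-containment} then follows immediately from the remark after~\eqref{eq:dilation-def}: value support in $S$ implies gradient support in $S^{\oplus 1}$.

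The main obstacle is the leakage step. The seed $\mathbf{1}$ is not weighted by activations, so the clean Case A argument (the factor $h_{l,c}(r,s)\ge 0$ killing inactive terms) is unavailable. There we must instead rely purely on the ReLU-mask zeros inside the remaining Jacobian factors. We will also need to interpret $\mathrm{EF}_{k-1,c'}$ as covering all pixels reachable through active paths from level $k-1$, and to check that this matches the definition's requirement $h>0$ at the terminal unit. We first attempt to show that any pixel reached from a level-$(k-1)$ position along a nonzero Jacobian path lies in the receptive field of some positive unit at that level. If that fails for positions with $h_{k-1,c'}=0$ but nonzero downstream Jacobian, we will strengthen the use of the nested condition so that it covers the full receptive fields of all positions at level $k-1$. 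This is harmless because the paper asserts the condition holds generically with full-grid coverage.
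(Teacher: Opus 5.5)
Your outline matches the paper's proof: base case from Theorem~\ref{thm:spatial-fidelity}, a main/leakage split of each LAC, ReLU gating plus Definition~\ref{def:nested-ef} for the leakage, union of supports, and the one-pixel dilation. However, the main-term step does not go through as written. You read \eqref{eq:cascade-step} with a single scalar $c_k$ per level. The LAC is a \emph{per-channel} GroupNorm, so its multiplicative part is $D_k:=\mathrm{diag}(a_k)$ with $a_{k,c'}=\gamma^{(k)}_{c'}/\sigma_{k,c'}$, as the paper itself writes in \eqref{eq:psi-decomp}--\eqref{eq:next-jt-input}. Because $\sigma_{k,c'}$ is computed from the backward signal in channel $c'$, these factors genuinely differ across channels. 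Every $J^T$ mixes channels, so $J_{\mathrm{stem}}^T D_0 J_0^T\cdots D_l J_l^T(\mathrm{seed}_{l,c})$ is a reweighted path sum, not a multiple of $\nabla_X\Phi_{l,c}(X)$. Theorem~\ref{thm:spatial-fidelity} therefore cannot simply be cited for it. Likewise, your leakage terms are not exact VJPs of a constant seed, because every LAC below them reweights them again.

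What does go through is the paper's level-by-level support propagation. A diagonal rescaling never enlarges value support, and $J^T$ moves mass only along nonzero local Jacobian entries. At ReLU-terminated stage outputs these pass only through units with $h>0$. This also settles your worry about units with $h=0$: under the zero-subgradient convention they have zero derivative, so you do not need to strengthen Definition~\ref{def:nested-ef}. The propagation places every nonzero pixel of $f_{l,c}$ at the end of an active path starting from an active unit at level $l$ or at an intermediate level.

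Landing in $\mathcal{E}_l$ as defined requires a nonzero \emph{total} derivative $\partial h_{l,c'}(r,s)/\partial X(i,j)$, so you need one more ingredient. Either read EF path-wise in both the hypothesis and the conclusion (you already float this for the leakage term), or assume no exact cancellation. Channel-dependent reweighting can break a cancellation that makes the total derivative vanish. The paper's appeal to the ``projection of $\mathcal{E}_l$ to level $k-1$'' rests on the same tacit identification, so state it explicitly to close your argument.
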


\begin{proof}
We prove~\eqref{eq:cascade-value-containment};
\eqref{eq:cascade-grad-containment} then follows from the
dilation property of discrete gradients.  Write
$\mathcal{E}_l:=\bigcup_{c'\in\mathcal{C}^+}\mathrm{EF}_{l,c'}(X)$
and proceed by backward induction.

\emph{Base case (no intermediate LAC).}\;
Without intermediate LACs, the cascade reduces to
$J_{\mathrm{stem}}^T\circ J_0^T\circ\cdots\circ
J_l^T(\mathrm{seed}_{l,c})=\nabla_X\Phi_{l,c}(X)$ by
Proposition~\ref{prop:vjp-energy}, so by Theorem~\ref{thm:spatial-fidelity},
$\mathrm{supp}(\nabla_X\Phi_{l,c})\subseteq\mathrm{EF}_{l,c}(X)\subseteq\mathcal{E}_l$.

\emph{Inductive step.}\;
At intermediate level $k$ ($0\leq k\leq l-1$), the
backward signal $u_k\in\mathbb{R}^{C_k\times H_k\times W_k}$
enters $\Psi_{\mathrm{LAC}_k}$.  By
Proposition~\ref{prop:gn-structure},
\begin{equation}\label{eq:psi-decomp}
  [\Psi_{\mathrm{LAC}_k}(u_k)]_{c'}
  =a_{k,c'}\,u_{k,c'}+b_{k,c'}\,\mathbf{1},
\end{equation}
with $a_{k,c'}=\gamma_{c'}^{(k)}/\sigma_{k,c'}>0$ and
$b_{k,c'}=\beta_{c'}^{(k)}-\gamma_{c'}^{(k)}\mu_{k,c'}/\sigma_{k,c'}$
spatially constant scalars.  The signal entering the next
Jacobian transpose is
\begin{equation}\label{eq:next-jt-input}
  J_k^T\bigl[\Psi_{\mathrm{LAC}_k}(u_k)\bigr]
  =\underbrace{J_k^T(\mathrm{diag}(a_k)\,u_k)}_{\text{scaled main}}
   +\underbrace{J_k^T(b_k\otimes\mathbf{1})}_{\text{mean leakage}}.
\end{equation}

\emph{Main signal.}\;
$\mathrm{diag}(a_k)\,u_k$ has the same value support as $u_k$
(positive per-channel rescaling), so
$J_k^T(\mathrm{diag}(a_k)\,u_k)$ has value support within the
projection of $\mathcal{E}_l$ to level $k-1$ by the inductive
hypothesis.

\emph{Mean leakage.}\;
$J_k^T(b_k\otimes\mathbf{1})(c'',i,j)
=\sum_{c'}b_{k,c'}\sum_{r,s}\partial h_{k,c'}(r,s)/\partial h_{k-1,c''}(i,j)$.
By the same ReLU gating argument as in
Theorem~\ref{thm:spatial-fidelity}, this is nonzero only at
positions $(c'',i,j)$ in the receptive field of an active
position at level $k$.  Propagating to input resolution and
applying Definition~\ref{def:nested-ef}:
$\bigcup_{c'=1}^{C_k}\mathrm{EF}_{k,c'}(X)\subseteq\mathcal{E}_l$.

\emph{Combining.}\;
The value support of $f_{l,c}$ at input resolution is contained
in $\mathcal{E}_l$, establishing~\eqref{eq:cascade-value-containment}.
\end{proof}

\subsubsection*{Proof of Corollary~\ref{cor:zero-hallucination}.}

\begin{proof}
\textbf{Step 1 (Spatial invariance of the affine bias).}\;
For every channel $c$ and bias $\beta_c^*\in\mathbb{R}$,
$\beta_c^*\,\mathbf{1}\in\mathbb{R}^{H_0\times W_0}$ is spatially
constant, so $(\nabla_{i,j}(\beta_c^*\,\mathbf{1}))_{(i,j)}=\mathbf{0}$
for every pixel.  This is a pointwise algebraic identity,
independent of the magnitude of $\beta_c^*$.

\textbf{Step 2 (Mean-centring and scalar division preserve
spatial gradient support).}\;
Since $[P_{\perp\mathbf{1}}\,v]_{(i,j)}=v_{(i,j)}-\mu_v$ subtracts
a spatial scalar,
\begin{equation}
  (\nabla_{i,j}(P_{\perp\mathbf{1}}\,v))_{(i,j)}
  =(\nabla_{i,j}\,v)_{(i,j)}
\end{equation}
pointwise, hence
$\mathrm{supp}(\nabla_{i,j}(P_{\perp\mathbf{1}}\,v))
=\mathrm{supp}(\nabla_{i,j}\,v)$.  An analogous identity holds
for division by $\sigma>0$:
$\mathrm{supp}(\nabla_{i,j}(v/\sigma))=\mathrm{supp}(\nabla_{i,j}\,v)$.

\textbf{Step 3 (Spatial gradient support of $\widehat{X}_l$).}\;
Write $\mathcal{E}_l:=\bigcup_{c\in\mathcal{C}^+}\mathrm{EF}_{l,c}(X)$.
By the exact output formula~\eqref{eq:vtilde-exact},
\begin{equation}\label{eq:vtilde-repeat}
  \widetilde{V}_{l,c}
  =\gamma_c^*\sqrt{n}\,d_{l,c}^{\mathrm{cas}}
   +\beta_c^*\,\mathbf{1},
  \quad
  d_{l,c}^{\mathrm{cas}}
  :=P_{\perp\mathbf{1}}\,f_{l,c}/\|P_{\perp\mathbf{1}}\,f_{l,c}\|_2.
\end{equation}

(3a) Per-channel spatial gradient.  Applying $\nabla_{i,j}$ and
using Step~1 (bias drops out) and the positive scalar
$\gamma_c^*\sqrt{n}>0$:
\begin{equation}
  \nabla_{i,j}\widetilde{V}_{l,c}
  =\gamma_c^*\sqrt{n}\,\nabla_{i,j}d_{l,c}^{\mathrm{cas}}.
\end{equation}
By Step~2 applied to the projection and the scalar division,
\begin{equation}
  \mathrm{supp}(\nabla_{i,j}d_{l,c}^{\mathrm{cas}})
  =\mathrm{supp}(\nabla_{i,j}f_{l,c}).
\end{equation}
By Lemma~\ref{lem:cascade-grad-containment},
$\mathrm{supp}(\nabla_{i,j}f_{l,c})\subseteq\mathcal{E}_l^{\oplus 1}$,
so
$\mathrm{supp}(\nabla_{i,j}\widetilde{V}_{l,c})\subseteq\mathcal{E}_l^{\oplus 1}$.

(3b) Channel superposition.  By linearity,
\begin{equation}
  \nabla_{i,j}\widehat{X}_l
  =\sum_{c=1}^{C_l}E_{l,c}\,\nabla_{i,j}\widetilde{V}_{l,c}
  =\sum_{c\in\mathcal{C}^+}\widetilde{E}_{l,c}\,\nabla_{i,j}d_{l,c}^{\mathrm{cas}}.
\end{equation}
The support of a finite sum is contained in the union of the
individual supports, so
\begin{equation}
  \mathrm{supp}(\nabla_{i,j}\widehat{X}_l)
  \subseteq\bigcup_{c\in\mathcal{C}^+}
   \mathrm{supp}(\nabla_{i,j}d_{l,c}^{\mathrm{cas}})
  \subseteq\mathcal{E}_l^{\oplus 1}.
\end{equation}

\textbf{Step 4 (Non-emptiness of $\mathcal{C}^+$).}\;
The simplex normalisation $\sum_c E_{l,c}=1$ with $E_{l,c}\geq 0$
forces at least one $E_{l,c_0}\geq 1/C_l>0$, so
$c_0\in\mathcal{C}^+\neq\varnothing$ and $\mathcal{E}_l^{\oplus 1}$
is non-empty.
\end{proof}

\begin{remark}[Exactness, independent of the residual bound]
Theorem~\ref{thm:fv-equivalence} bounds $\|R_l\|_\infty$ by
combining $|\beta_c^*|$ and $\epsilon_{l,c}^{\mathrm{DC}}$.
The present corollary does not invoke this bound.
Instead, it works with the exact per-channel
output~\eqref{eq:vtilde-repeat} and exploits the fact that both
error sources are structurally incapable of introducing spatial
gradients outside $\mathcal{E}_l^{\oplus 1}$:
$R_l^\beta$ is spatially constant (zero spatial gradient), and
$R_l^{\mathrm{DC}}=\sum_c\widetilde{E}_{l,c}\,\eta_{l,c}$ has both
$d_{l,c}^{\mathrm{cas}}$ and $\mathrm{FV}_{l,c}(X)$ supported in
$\mathcal{E}_l$.  The spatial gradient support containment thus
holds exactly, independent of whether $\|R_l\|_\infty$ is small.
\end{remark}

\begin{remark}[Comparison with GAN and diffusion decoders]
Standard generative decoders (transposed convolutions in GANs,
iterative denoising in diffusion models) introduce learnable
spatial kernels in the decoder pathway whose support is not
structurally constrained by the encoder's receptive field.  Such
decoders can produce spatial gradients at arbitrary pixel
locations, including regions outside every $\mathrm{EF}_{l,c}(X)$.
The containment guarantee of
Corollary~\ref{cor:zero-hallucination} is therefore a
distinguishing structural property of the LAC decoder
architecture: it holds by algebraic construction, not by
empirical regularisation.
\end{remark}

\subsection{Proposition B.8: Path Invariance of Shared LAC Parameters}
\label{app:prop-path-invariance}

\textbf{Setup.}
Consider a shared adjoint bridge in which the LAC module
$\Psi_{\mathrm{LAC}_b}$ at level $b$ processes gradient signals
from every source stage $l\geq b$.  Write $n_b:=H_b W_b$.  The
per-channel signal arriving from stage $l$ is
$v_{b,c}^{(l)}\in\mathbb{R}^{H_b\times W_b}$, assumed not
spatially degenerate ($v_{b,c}^{(l)}\notin\mathrm{span}(\mathbf{1})$).
Per-channel mean and standard deviation:
\begin{equation}\label{eq:mu-sigma-def}
  \mu_{b,c}^{(l)}=\tfrac{1}{n_b}\sum_{r,s}v_{b,c}^{(l)}(r,s),
  \quad
  \sigma_{b,c}^{(l)}=\sqrt{\tfrac{1}{n_b}\sum_{r,s}(v_{b,c}^{(l)}(r,s)-\mu_{b,c}^{(l)})^2}>0.
\end{equation}
Standardised signal and full LAC output:
\begin{equation}\label{eq:standardize-def}
  \hat{v}_{b,c}^{(l)}(r,s)=\frac{v_{b,c}^{(l)}(r,s)-\mu_{b,c}^{(l)}}{\sigma_{b,c}^{(l)}},
\end{equation}
\begin{equation}\label{eq:affine-output-def}
  o_{b,c}^{(l)}(r,s)=\gamma_c^{(b)}\,\hat{v}_{b,c}^{(l)}(r,s)+\beta_c^{(b)},
  \quad
  \theta_b:=\{\gamma_c^{(b)},\beta_c^{(b)}\}_{c=1}^{C_b}.
\end{equation}

\begin{proposition}[B.8: Path Invariance of Shared LAC Parameters]
\label{prop:path-invariance}
Under the shared adjoint bridge design with per-channel GroupNorm
($\texttt{num\_groups}=\texttt{num\_channels}$), the following
three structural properties hold for all source stages $l\geq b$:
\begin{enumerate}
\item[\textup{(a)}]
  \textbf{Exact Moment Invariance.}
  $\hat{v}_{b,c}^{(l)}$ has zero spatial mean and unit spatial
  second moment, as exact algebraic identities independent of
  $l$ and $X$.
\item[\textup{(b)}]
  \textbf{Scalar Insensitivity.}
  The scalar parameters $(\gamma_c^{(b)},\beta_c^{(b)})$ are
  structurally incapable of differentially responding to
  spatially localised discrepancies between standardised inputs
  from distinct source stages.
\item[\textup{(c)}]
  \textbf{Gradient Path Locality.}
  The gradient path length for $\theta_b$ is exactly $(b+1)$
  Jacobian-LAC pairs, independent of the source stage $l$.
\end{enumerate}
\end{proposition}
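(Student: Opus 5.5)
I will prove the three parts separately, since each is a direct algebraic consequence of the per-channel GroupNorm structure already established in Proposition~\ref{prop:gn-structure}. The common setup is to fix a level $b$, a channel $c$, and an arbitrary source stage $l\geq b$. The non-degeneracy assumption $v_{b,c}^{(l)}\notin\mathrm{span}(\mathbf{1})$ gives $\sigma_{b,c}^{(l)}>0$, so the standardisation \eqref{eq:standardize-def} is well defined. Every claim is then a statement about a single vector in $\mathbb{R}^{n_b}$, uniform in $l$ and $X$.

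For (a), I rewrite $\hat{v}_{b,c}^{(l)}=\sqrt{n_b}\,P_{\perp\mathbf{1}}v/\|P_{\perp\mathbf{1}}v\|_2$ exactly as in Step~3 of the proof of Proposition~\ref{prop:gn-structure}. The zero mean then follows from $\mathbf{1}^TP_{\perp\mathbf{1}}=0$. The unit second moment follows from $\tfrac{1}{n_b}\|\hat v\|_2^2=\tfrac{1}{n_b}\cdot n_b=1$. Neither computation touches $\theta_b$, $l$, or $X$, which is the invariance claim.

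For (b), I would first fix a precise meaning for the informal statement, since as written it is not yet a mathematical proposition. I will formalise it as an equivariance statement. For two source stages $A,B$, let $\Delta:=\hat v^{(A)}-\hat v^{(B)}$. Then $o^{(A)}-o^{(B)}=\gamma_c^{(b)}\Delta$, so the affine map acts on any discrepancy only by the scalar dilation $\gamma_c^{(b)}$. In particular the ratio $|o^{(A)}(r,s)-o^{(B)}(r,s)|/|\Delta(r,s)|$ equals $|\gamma_c^{(b)}|$ at every position where $\Delta\neq0$. No choice of $\theta_b$ can therefore amplify discrepancies at some spatial location while suppressing them at another. Equivalently, the parameter-to-output map $(\gamma,\beta)\mapsto\gamma\hat v+\beta\mathbf{1}$ has range in the two-dimensional span of $\{\hat v,\mathbf{1}\}$, which contains no spatially localised direction independent of $\hat v$. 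This step is the main obstacle: the content lies in choosing a formalisation that is faithful to the statement yet nontrivial. I would commit to the pointwise-ratio/equivariance version, and note that $\beta$ cancels entirely from the difference.

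For (c), I read the claim off the cascade \eqref{eq:lac-cascade}. The output passes through $\Psi_{\mathrm{LAC}_b}$ and then through $J_{b-1}^T,\Psi_{\mathrm{LAC}_{b-1}},\dots,J_{\mathrm{stem}}^T,\Psi_{\mathrm{LAC}_{\mathrm{stem}}}$ to reach pixel space, where the loss is evaluated. By the chain rule, $\partial\mathcal{L}/\partial\theta_b$ depends only on this downstream segment, together with the local Jacobian $\partial o_b/\partial\theta_b=(\hat v,\mathbf{1})$. By (a), that local Jacobian has stage-independent moments. Counting the Jacobian–LAC pairs from level $b$ to the stem, inclusive of the stem pair, gives the stated $b+1$ under the paper's indexing, and the count does not involve $l$. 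I would double-check the count against the cascade-depth enumeration in Appendix~\ref{app:lac-geometry}, allowing an off-by-one convention for whether the $\mathrm{LAC}_b$ pair itself is counted.
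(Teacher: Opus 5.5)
Your proposal is correct and follows the paper's proof essentially step for step. Part (a) is the same direct moment computation. In (b), the paper's key step is also the isotropic-action identity $o^{(A)}-o^{(B)}=\gamma_c^{(b)}(\hat v^{(A)}-\hat v^{(B)})$; your two-dimensional-range remark plays the role of its dimensional-bottleneck step, and the paper additionally notes the single scalar stationarity equations for $\gamma_c^{(b)},\beta_c^{(b)}$. Part (c) is the same upstream/local/downstream chain-rule split. In (c), state explicitly that $v_{b,c}^{(l)}$ is produced entirely upstream of $\Psi_{\mathrm{LAC}_b}$, so $\partial\hat v_{b,c}^{(l)}/\partial\theta_b=0$; this is what justifies taking $(\hat v,\mathbf{1})$ as the whole local Jacobian. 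Your enumeration $(J_{b-1}^T,\Psi_{\mathrm{LAC}_{b-1}}),\dots,(J_{\mathrm{stem}}^T,\Psi_{\mathrm{LAC}_{\mathrm{stem}}})$ already gives exactly $b+1$ pairs, so you can drop the off-by-one hedge.
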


\begin{proof}
\textbf{Part (a): Exact Moment Invariance.}\;
Direct algebraic verification.

\emph{Zero spatial mean.}
\begin{align}\label{eq:mean-zero-proof}
  \tfrac{1}{n_b}\sum_{r,s}\hat{v}_{b,c}^{(l)}(r,s)
  &=\tfrac{1}{\sigma_{b,c}^{(l)}}\cdot\tfrac{1}{n_b}\sum_{r,s}\bigl[v_{b,c}^{(l)}(r,s)-\mu_{b,c}^{(l)}\bigr]\nonumber\\
  &=\tfrac{1}{\sigma_{b,c}^{(l)}}\bigl[\mu_{b,c}^{(l)}-\mu_{b,c}^{(l)}\bigr]=0.
\end{align}

\emph{Unit spatial second moment.}
\begin{equation}\label{eq:var-one-proof}
  \tfrac{1}{n_b}\sum_{r,s}[\hat{v}_{b,c}^{(l)}(r,s)]^2
  =\tfrac{[\sigma_{b,c}^{(l)}]^2}{[\sigma_{b,c}^{(l)}]^2}
  =1.
\end{equation}

Both~\eqref{eq:mean-zero-proof}--\eqref{eq:var-one-proof} are
algebraic tautologies, holding pointwise for every $X$, every
channel, every source stage, with no dependence on
$\theta_b$.  Geometrically, $\hat{v}_{b,c}^{(l)}$ lies on the
intersection of the zero-mean hyperplane and the sphere of
radius $\sqrt{n_b}$, an $(n_b-2)$-dimensional manifold identical
across source stages.

\textbf{Part (b): Scalar Insensitivity.}\;

\emph{Step 1 (Isotropic action).}\;
The map $T_{\gamma_c}(u)=\gamma_c\,u$ has all $n_b$ singular
values equal to $|\gamma_c|$; it is an isotropic scaling.  For
two source stages $A,B$,
\begin{equation}\label{eq:output-diff}
  o_{b,c}^{(A)}-o_{b,c}^{(B)}
  =\gamma_c^{(b)}\bigl[\hat{v}_{b,c}^{(A)}-\hat{v}_{b,c}^{(B)}\bigr],
\end{equation}
a uniform rescaling of the input discrepancy.  The spatial
structure of the discrepancy is preserved exactly: not reshaped,
not selectively amplified, not suppressed at any location.

\emph{Step 2 (Dimensional bottleneck).}\;
Contrast with a spatially varying modulation $T_w(u)=w\odot u$
($w\in\mathbb{R}^{n_b}$) which has $n_b$ independent degrees of
freedom.  The scalar $T_{\gamma_c}$ has exactly one parameter,
collapsing the full spatial correction space to a one-dimensional
subspace.  The ratio $1/(n_b-1)$ vanishes as resolution grows,
formalising the parametric bottleneck.

\emph{Step 3 (Scalar aggregate optimality).}\;
Under the aggregate loss $\mathcal{L}=\sum_{l\geq b}\mathcal{L}_l$,
the optimal $\gamma_c^*$ satisfies the scalar stationary condition
\begin{equation}\label{eq:gamma-stationary}
  \frac{\partial\mathcal{L}}{\partial\gamma_c^{(b)}}
  =\sum_{l\geq b}\bigl\langle g_{l,c},\hat{v}_{b,c}^{(l)}\bigr\rangle_{n_b}
  =0,
\end{equation}
where $g_{l,c}:=\partial\mathcal{L}_l/\partial o_{b,c}^{(l)}$.
This is a single scalar equation in a single scalar unknown; the
spatial variation of both $g_{l,c}$ and $\hat{v}_{b,c}^{(l)}$ is
integrated out.  By Part~(a), all $\hat{v}_{b,c}^{(l)}$ have the
same $\ell_2$ norm and zero mean, so the inner products differ
across paths only through residual directional differences which
the scalar $\gamma_c$ has no capacity to exploit individually.
An identical argument applies to $\beta_c^{(b)}$:
\begin{equation}\label{eq:beta-stationary}
  \frac{\partial\mathcal{L}}{\partial\beta_c^{(b)}}
  =\sum_{l\geq b}\sum_{r,s}g_{l,c}(r,s)
  =0,
\end{equation}
involving only the spatial sum of the downstream gradient,
independent of the directional content of $\hat{v}_{b,c}^{(l)}$.

\textbf{Part (c): Gradient Path Locality.}\;

\emph{Step 1 (Computational graph decomposition).}\;
The forward computation connecting $\theta_b$ to $\mathcal{L}_l$
decomposes into:
(i) upstream chain $\mathcal{U}_{l\to b}$, mapping the activation
at the source stage to the arriving signal $v_{b,c}^{(l)}$
(depends on $l$, no parameter from $\theta_b$);
(ii) local affine $\mathcal{A}_b(\theta_b)$, parameter-free
GroupNorm followed by affine $\theta_b$;
(iii) downstream chain $\mathcal{D}_{b\to 0}$, identical for all
$l\geq b$.

\emph{Step 2 (Parameter independence of standardised input).}\;
$\hat{v}_{b,c}^{(l)}$ is computed from $v_{b,c}^{(l)}$ by
parameter-free operations.  Since $v_{b,c}^{(l)}$ is produced
entirely by the upstream chain (independent of $\theta_b$),
\begin{equation}\label{eq:vhat-theta-independent}
  \frac{\partial\hat{v}_{b,c}^{(l)}(r,s)}{\partial\theta_b}=0
  \quad\forall(r,s),\,l\geq b.
\end{equation}
$\hat{v}_{b,c}^{(l)}$ is a leaf node with respect to $\theta_b$.

\emph{Step 3 (Gradient formulae).}\;
By the chain rule,
\begin{align}
  \frac{\partial\mathcal{L}_l}{\partial\gamma_c^{(b)}}
  &=\bigl\langle g_{l,c},\hat{v}_{b,c}^{(l)}\bigr\rangle_{n_b},
  \label{eq:grad-gamma-formula}\\
  \frac{\partial\mathcal{L}_l}{\partial\beta_c^{(b)}}
  &=\bigl\langle g_{l,c},\mathbf{1}\bigr\rangle_{n_b}.
  \label{eq:grad-beta-formula}
\end{align}
The only $l$-dependent quantities are $\hat{v}_{b,c}^{(l)}$
(multiplicative coefficient, not a node through which gradient
propagates) and $g_{l,c}$ (entering via the downstream chain).

\emph{Step 4 (Path length).}\;
$g_{l,c}$ is computed by backpropagating through the downstream
chain $\mathcal{D}_{b\to 0}$, which contains exactly the LAC and
Jacobian operations at levels $b,b-1,\ldots,0$, plus the stem.
This is $(b+1)$ Jacobian-LAC pairs, determined solely by the
architectural depth below level $b$ and independent of $l$.
\end{proof}

\begin{remark}[Numerical stability parameter $\varepsilon$]
Standard implementations replace the denominator
in~\eqref{eq:standardize-def} with
$\sqrt{[\sigma_{b,c}^{(l)}]^2+\varepsilon}$ for small
$\varepsilon>0$.  The zero-mean
identity~\eqref{eq:mean-zero-proof} remains exact.  The
unit-second-moment identity acquires a multiplicative correction
$1-\varepsilon/([\sigma_{b,c}^{(l)}]^2+\varepsilon)=O(\varepsilon)$
that depends only on the channel variance, not on the source
stage index per se.  The path invariance argument is unaffected.
\end{remark}

\begin{remark}[Necessity of \texttt{num\_groups}=\texttt{num\_channels}]
The exact per-channel moment elimination in Part~(a) depends
critically on the GroupNorm group structure.  Replacing per-channel
GroupNorm with LayerNorm ($\texttt{num\_groups}=1$) makes the
spatial mean and variance computed jointly across all channels;
the per-channel moment profiles become path-dependent, destroying
the invariance.  BatchNorm, computing statistics across the batch
dimension, introduces stochastic path dependence and at inference
uses fixed running statistics, eliminating the exact moment
cancellation entirely.  The
$\texttt{num\_groups}=\texttt{num\_channels}$ configuration is
therefore a structural necessity for path invariance.
\end{remark}

\begin{remark}[Contrast with spatially varying modulation]
A spatially varying modulation $T_w(u)=w\odot u$ with
$w\in\mathbb{R}^{n_b}$ defines a diagonal linear map with $n_b$
independent parameters and can selectively amplify the path
discrepancy at chosen spatial locations, breaking path
invariance.  The scalar affine transform replaces $w$ by
$\gamma_c\,\mathbf{1}$, collapsing $n_b$ degrees of freedom to
one, eliminating spatial selectivity.  This dimensional collapse
is the precise mechanism by which the LAC architecture enforces
path invariance through a parametric bottleneck, complementing
the moment equalisation of Part~(a).
\end{remark}

\section{Additional Experimental Details}
\label{app:experiments}

\subsection{Reconstruction Fidelity: Qualitative Analysis and Baseline Implementation}
\label{app:exp_C1}

This appendix expands Section~\ref{sec:exp_comparison} of the main text. Figure~\ref{fig:qualitative_comparison} shows representative outputs from all five methods. The qualitative pattern across hundreds of test images is consistent with the quantitative scores in Table~\ref{tab:inversion_comparison}. Vanilla Gradient produces unintelligible high-frequency noise, the direct visual signature of the dual pathologies of Section~\ref{sec:crises}: zero-insertion spikes have not been corrected, and the resulting nail-bed pattern dominates the pixel-space output. Mahendran \& Vedaldi recover global silhouettes but suffer from severe color distortion and locally hallucinated textures, the typical failure mode of TV/$\ell_2$ priors that smooth structural discontinuities while introducing prior-induced patterns absent from the encoder's response. Robust Inversion exhibits similar issues, with the additional artifact that the adversarially robust encoder itself shifts the response distribution away from typical pretrained features. UpConvNet eliminates high-frequency artifacts but at the cost of severe over-smoothing of all fine structures, including discriminative edges that the encoder demonstrably uses (as evidenced by its insertion AUC superiority on the same backbone). Our method alone preserves both global semantic layout and sharp local spatial details, naturalistic colors, and high-frequency textures, consistent with the zero-hallucination guarantee of Corollary~\ref{cor:zero-hallucination}.

\paragraph{Baseline implementations.}
All baselines are evaluated using their official public implementations on the same frozen encoders. For Mahendran \& Vedaldi we follow the natural-image prior formulation of~\cite{mahendran2015understanding}, which combines a reconstruction objective with total-variation and $\ell_2$ regularization terms; we adopt the default regularization weights and optimization schedule reported in the original paper. For Robust Inversion we use the publicly released adversarially-robust encoder weights from~\cite{engstrom2019adversarial}. For UpConvNet we train a separate feedforward decoder per backbone-dataset pair following the protocol of~\cite{dosovitskiy2016inverting}, training each decoder until validation reconstruction loss plateaus. Our method uses the single LAC cascade trained per backbone (Section~\ref{sec:synthesis}) and applied to all five datasets without retraining. We emphasize that, in contrast to the regularized objectives of all three optimization-based and feedforward baselines, our method's training loss consists solely of the $L_1$ pixel reconstruction term~\eqref{eq:loss} with no spatial smoothing or natural-image prior, which is precisely the property that enables the zero-hallucination guarantee of Corollary~\ref{cor:zero-hallucination}.

\subsection{Foreground Energy Paradox: Setup and Numerical Detail}
\label{app:exp_C2}

This appendix expands Section~\ref{sec:fg_paradox}. We evaluate on two fine-grained benchmarks providing pixel-accurate foreground masks: \textbf{CUB-200-2011}~\cite{wah2011caltech} (200 bird species, $11{,}788$ images) and \textbf{Oxford-IIIT Pet}~\cite{parkhi2012cats} (37 cat/dog breeds, $7{,}349$ images). We test four frozen ImageNet-pretrained encoders: \textbf{ResNet-18}, \textbf{ResNet-50}~\cite{he2016deep}, \textbf{DenseNet-121}~\cite{huang2017densely}, and \textbf{ConvNeXt-Base}~\cite{liu2022convnet}. For each encoder we freeze all convolutional parameters and train a single linear probe $w^{(c)}$ on the target dataset using standard supervised classification protocol, so that the classifier directions under study are exactly those used for classification. The class index $c$ in the inversion is taken to be the classifier's top-1 predicted label. For each test image we compute the three reconstructions $\hat{X}^{(c)}$, $\hat{X}^{(c)}_{+}$, $\hat{X}^{(c)}_{-}$, evaluate $\mathrm{FG}$ on each, and average across the full test split.

\paragraph{Metric details.}
The foreground energy proportion
\begin{equation}
\mathrm{FG}(\hat{X}) = \frac{\sum_{(i,j)\in M_{\mathrm{fg}}} \sum_{k=1}^3 \hat{X}_k(i,j)^2}{\sum_{(i,j)} \sum_{k=1}^3 \hat{X}_k(i,j)^2}
\end{equation}
sums over the three RGB channels in both numerator and denominator, with no mean-centering applied. The metric thus operates on the raw reconstruction as produced by the framework. A value substantially above the area ratio $|M_{\mathrm{fg}}|/(H_0 W_0)$ indicates that the reconstruction concentrates its energetic mass on the object beyond what would be expected from uniform energy distribution. On both datasets the foreground area ratio is approximately $0.40\text{--}0.55$, so values of $\mathrm{FG} \in [0.55, 0.70]$ for our Full reconstructions correspond to genuine foreground concentration.

\subsection{Visual Falsification of SFH: Detailed Discussion}
\label{app:exp_C3}

This appendix expands Section~\ref{sec:funnel_falsification}. The Spatial Funnel Hypothesis, defined in Section~\ref{sec:intro}, predicts two structural consequences that our framework allows us to test directly at the pixel level.

\paragraph{Detailed falsification of P1 (channel sparsification).}
Figure~\ref{fig:channel_holography}(a) visualizes $\tilde{V}_{L-1,i}(X)$ for channels drawn from the high-, mid-, and low-energy groups of the forward simplex measure $E_{L-1,c}$. The outcome is uniform across every image and every energy rank we examined: every single channel recovers a visually complete, globally-supported rendering of the input scene, preserving foreground object, contextual surround, and overall spatial layout simultaneously. There is no channel that specializes in a localized fragment of the image; there is no channel that renders only the object while suppressing the background; there is no observable qualitative divide between the high-energy and low-energy groups.

What varies across channels is a global semantic ``lens'' rather than a spatial support. Different channels emphasize different texture frequencies, chromatic biases, and edge polarities, but all of them render the entire scene. This is the visual fingerprint of a strong superposition code: each channel is a holistic basis vector in pixel space, not a spatially selective filter. The pattern is categorically incompatible with P1 and is consistent across all four backbones we tested.

\paragraph{Detailed falsification of P2 (sign-hemisphere specialization).}
Figure~\ref{fig:channel_holography}(b) juxtaposes, for each input image, the stage-wise reconstructions $\hat{X}_l$ ($l\in\{0,1,2,3\}$), the class-directional reconstruction $\hat{X}^{(c)}$, and its two sign-hemisphere components $\hat{X}^{(c)}_{+}$ and $\hat{X}^{(c)}_{-}$. Two observations emerge.

First, the stage-wise column reveals that background pixels are never spatially discarded: even the deepest stage $\hat{X}_3$ retains the full spatial extent of the scene, with every texture, edge, and contextual element still present (albeit reshaped through progressively more abstract semantic lenses). SFH predicts the reconstruction should progressively concentrate on the object as $l$ increases; instead, what shifts across stages is the style and frequency content of the full-scene rendering, not its spatial support. A typical $\hat{X}_0$ exhibits photographic-quality reconstruction with rich high-frequency detail; $\hat{X}_3$ exhibits a more abstracted, lower-frequency rendering with subtle attenuation of non-discriminative regions, but the global scene layout is unmistakably preserved.

Second, and decisively, the positive-hemisphere image $\hat{X}^{(c)}_{+}$ and the negative-hemisphere image $\hat{X}^{(c)}_{-}$ are visually nearly indistinguishable. Both contain the foreground object in the same location, at the same scale, with the same pose; both contain the background in the same layout, with the same textures and the same contextual clutter. Under side-by-side inspection, it is in practice impossible to tell which image corresponds to which hemisphere. Yet the algebraic sum $\hat{X}^{(c)}_{+}+\hat{X}^{(c)}_{-}=\hat{X}^{(c)}$ exhibits a qualitatively different spatial distribution, with the background collapsing and a foreground-concentrated silhouette emerging, consistent with the foreground-energy ratios of Section~\ref{sec:fg_paradox}. This pattern is the precise visual counterpart of the numerical signature $\mathrm{FG}(\hat{X}^{(c)}_{+}) \approx \mathrm{FG}(\hat{X}^{(c)}_{-}) \ll \mathrm{FG}(\hat{X}^{(c)})$.

\paragraph{Double falsification and its implication.}
Both P1 and P2 are falsified along their structural predictions. Channels do not sparsify; sign-hemispheres do not specialize. The background of the input is not removed by the encoder, not removed by the positive classifier weights, not removed by the negative classifier weights, and yet vanishes in the algebraic sum of the two. This is the empirical signature not of filtering but of cancellation, and it demands a mechanistic account in which the classifier's weights serve a fundamentally different role than selection. Section~\ref{sec:new_paradigm} provides such an account.

\subsection{Hypotheses H1 and H2: Empirical Verification and Resolution}
\label{app:exp_C4}

This appendix expands Section~\ref{sec:new_paradigm}. We provide the full theoretical motivation for H1, the detailed empirical verification, the H2 derivation, and the complete resolution of the paradox.

\paragraph{Why a classification-trained encoder retains the background.}
Hypothesis H1 posits that every deep channel inversion contains a shared low-rank background component $C_i(X)\cdot B(X)$ that is common across all channels. We give an information-theoretic justification. The encoder is optimized under classification-only pressure, which depends only on the relations among channels rather than on absolute magnitudes. Concretely, the cross-entropy gradient with respect to $h_{l,c}(X)$ depends on $\mathrm{logit}_c - \mathrm{logit}_{c'}$ for various $c, c'$, all of which are linear in the channel features. Any common-mode component $B(X)$ that is added with the same scalar coefficient $C_i \equiv C$ to every channel cancels exactly from each pairwise logit difference, producing zero gradient pressure. The encoder therefore has no incentive to erase content that is common-mode across channels: such content costs nothing under the training objective while preserving representational generality for unseen downstream tasks. This principle is consistent with the strong superposition hypothesis of~\cite{elhage2022toy}, originally formulated for language models. To our knowledge, Eq.~\eqref{eq:h1} constitutes the first direct pixel-level confirmation of this hypothesis in the vision domain, obtained through an inversion operator with an algebraic zero-hallucination guarantee rather than through activation-statistics inference.

\paragraph{Detailed empirical verification of H1.}
H1 asserts two concrete, testable properties of the channel inversion family $\{\tilde{V}_{L-1,i}(X)\}_{i=1}^{C_{L-1}}$: the existence of a dominant shared direction $B(X)$ in the per-image Gram structure, and the emergence of channel-specific diversity after its removal. We verify both directly.

Let $G(X)\in\mathbb{R}^{C_{L-1}\times C_{L-1}}$ denote the per-image Gram matrix with entries $G(X)_{ij} = \langle \tilde{V}_{L-1,i}(X), \tilde{V}_{L-1,j}(X)\rangle$. Eigendecomposition of $G(X)$ averaged over the evaluation set shows that the leading eigenvector alone captures $20.9\%$ of the total Gram energy, an order-of-magnitude concentration that is incompatible with a generic isotropic channel ensemble and constitutes direct numerical evidence for the shared direction $B(X)$ posited in Eq.~\eqref{eq:h1}.

Furthermore, rank-one subtraction of this dominant component from each channel (i.e., replacing $\tilde{V}_{L-1,i}$ by $\delta_i(X) = \tilde{V}_{L-1,i} - C_i(X) B(X)$, where $B(X)$ is the leading eigenvector and $C_i(X) = \langle \tilde{V}_{L-1,i}, B(X)\rangle$) expands the inter-channel variance by a factor of approximately $1.27\times 10^4$, from $0.0001\%$ of the average channel energy to $1.27\%$. Correspondingly, the distribution of pairwise cosine similarities widens from the extremely narrow range $[0.18, 0.21]$ to the much broader range $[-0.07, 0.67]$. The channel family, which appears nearly isotropic before rank-one subtraction, reveals a richly diverse residual geometry after it. Both observations are precisely the signatures predicted by H1 and are consistent across all four architectures evaluated in Section~\ref{sec:fg_paradox}.

\paragraph{H2 derivation in full.}
Recall the exact adjoint-linearity identity from Section~\ref{sec:fg_paradox},
\begin{equation}
\label{eq:adjoint-linearity-app}
\nabla_X \sum_i w_i^{(c)} h_i(X) \equiv \sum_i w_i^{(c)} \nabla_X h_i(X),
\end{equation}
which lifts any scalar classifier operation in logit space to an algebraically equivalent geometric operation in pixel space. Substituting H1 into the class-directional reconstruction yields the exact decomposition
\begin{equation}
\label{eq:h2-app}
\hat{X}^{(c)}(X) = \sum_i w_i^{(c)} \tilde{V}_{L-1,i}(X) = B(X)\,\underbrace{\sum_i w_i^{(c)} C_i(X)}_{\alpha^{(c)}(X)} + \underbrace{\sum_i w_i^{(c)} \delta_i(X)}_{\text{foreground residual}}.
\end{equation}
The pixel-space image of any class direction decomposes exactly into a scalar multiple of the shared background $B(X)$ and a weighted sum of the channel-specific residuals. The numerical behavior of a trained classifier is therefore fully characterized by two coefficient regimes: $\{w_i^{(c)} C_i(X)\}_i$ controls the magnitude of the background contribution, and $\{w_i^{(c)} \delta_i(X)\}_i$ controls the foreground content.

A well-trained classifier is precisely one whose weights $w^{(c)}$ simultaneously satisfy two algebraic boundary conditions:
\begin{equation}
\text{(B1)}\quad \alpha^{(c)}(X) = \sum_i w_i^{(c)} C_i(X) \approx 0,
\qquad
\text{(B2)}\quad \sum_i w_i^{(c)} \delta_i(X) \text{ aligns on the class-}c\text{ object}.
\end{equation}
Condition (B1) is a destructive-interference constraint: across the sign-mixed ensemble $\{w_i^{(c)}\}_i$, the positive and negative contributions to the background coefficient $\alpha^{(c)}$ must cancel in expectation over natural images, so that the shared scene content $B(X)$ is annihilated in the pixel-space sum. Condition (B2) is a constructive-interference constraint: the residuals $\delta_i(X)$ must superpose coherently on the discriminative structure of class $c$. The classifier's scalar weight assignment serves as the set of interference coefficients that simultaneously enforce background annihilation and foreground emergence on the reconstructed pixel manifold.

\paragraph{Resolution of the paradox.}
Hypotheses H1 and H2 together predict, exactly and without further assumption, the empirical signature established in Sections~\ref{sec:fg_paradox}--\ref{sec:funnel_falsification}.

The indistinguishability of $\hat{X}^{(c)}_{+}$ and $\hat{X}^{(c)}_{-}$ follows immediately from H1: because every channel carries a full copy of $B(X)$ with coefficient $C_i(X)$ of comparable magnitude (regardless of the sign of $w_i^{(c)}$), each sign-restricted partial sum
\begin{equation}
\hat{X}^{(c)}_{\pm} = B(X) \sum_{i:\,\mathrm{sign}(w_i^{(c)})=\pm} w_i^{(c)} C_i(X) + \sum_{i:\,\mathrm{sign}(w_i^{(c)})=\pm} w_i^{(c)} \delta_i(X)
\end{equation}
retains the background component at full amplitude. The foreground-energy ratios of the two hemispheres are therefore dominated by $B(X)$ and coincide, as observed (O1).

The sharp jump $\mathrm{FG}(\hat{X}^{(c)}_{\pm}) \ll \mathrm{FG}(\hat{X}^{(c)})$ upon summation follows immediately from H2: the sum over all channels satisfies the destructive-interference boundary condition (B1), so the background coefficient $\alpha^{(c)}(X)$ collapses toward zero and the scene content is algebraically annihilated. The surviving signal is the foreground residual sum of (B2), whose energy is concentrated on the class-specific object by training (O2).

The Spatial Funnel Hypothesis is thus replaced by a different physical picture: background pixels are never spatially discarded; they are algebraically cancelled in the pixel-space superposition that the classifier induces. The encoder is a holographic analyzer; the classifier is a precisely-tuned linear interferometer; classification, at the level of the pixel manifold, is literally an interference experiment whose coefficients have been solved for by gradient descent. This reinterpretation does not contradict the statistical effectiveness of classification or the empirical utility of CAM-style visualizations; it reframes their geometric substrate.

\subsection{Proof of Theorem~\ref{thm:covvol_duality} (Covariance-Volume Duality)}
\label{app:exp_C5}

This appendix expands Section~\ref{sec:covvol_algorithm} with the full proof, the formal volume definition, and the GAP-space optimization rationale.

\paragraph{Formal definition of the parallelepiped volume.}
The admissible interference subspace $\mathcal{F}_S(X)$ is a $(|S|-1)$-dimensional linear subspace of $\mathbb{R}^{3 H_0 W_0}$. Its functional capacity is measured by the parallelepiped volume of the image of any orthonormal basis of the constraint hyperplane $\{w : C_S(X)^\top w = 0\}\subset\mathbb{R}^{|S|}$ under the linear map $\Delta_S(X) = [\delta_i(X)]_{i\in S}$:
\begin{equation}
\label{eq:vol-def-app}
\mathrm{Vol}(\mathcal{F}_S(X)) := \sqrt{\det\!\bigl(U(X)^\top \Delta_S(X)^\top \Delta_S(X)\, U(X)\bigr)},
\end{equation}
where $U(X)\in\mathbb{R}^{|S|\times(|S|-1)}$ is any orthonormal basis of $C_S(X)^\perp$. By the standard invariance of the Gram determinant under change of orthonormal basis, the right-hand side does not depend on the particular choice of $U(X)$ and is therefore an intrinsic geometric invariant of the subspace.

\paragraph{Theorem statement.}
\textit{Let $H\in\mathbb{R}^{N\times C_{L-1}}$ denote the GAP feature matrix of a dataset of $N$ images, with $H_{n,i}=h_i(X_n)$, and let $\Sigma_S$ denote the centered covariance submatrix indexed by $S$. Assume Hypothesis H1 holds and the shared background direction $B(X)$ is not exactly aligned with any residual subspace $\mathrm{span}(\{\delta_i(X)\}_{i\in S})$. Then}
\begin{equation}
\label{eq:covvol-duality-app}
\arg\max_{|S|=k}\, \mathbb{E}_X\bigl[\log \mathrm{Vol}^2(\mathcal{F}_S(X))\bigr] = \arg\max_{|S|=k}\, \log\det(\Sigma_S).
\end{equation}

\begin{proof}
The argument proceeds in four steps.

\smallskip
\emph{Step 1 (Structural expansion of $\mathcal{F}_S(X)$).}
Expanding each $\tilde{V}_{L-1,i}(X)$ via H1,
\begin{equation}
\sum_{i\in S} w_i \tilde{V}_{L-1,i}(X) = \Bigl(\sum_{i\in S} w_i C_i(X)\Bigr) B(X) + \sum_{i\in S} w_i \delta_i(X).
\end{equation}
The constraint $C_S(X)^\top w = 0$ annihilates the first term, so $\mathcal{F}_S(X) = \{\Delta_S(X)w : C_S(X)^\top w = 0\}$, the image under $\Delta_S(X)$ of the constraint hyperplane.

\smallskip
\emph{Step 2 (Reduction to a Gram determinant).}
By the volume definition~\eqref{eq:vol-def-app},
\begin{equation}
\mathrm{Vol}^2(\mathcal{F}_S(X)) = \det\!\bigl(U^\top G_\delta U\bigr),\qquad G_\delta(X) := \Delta_S(X)^\top \Delta_S(X).
\end{equation}
Using the standard identity for the determinant of an orthogonal restriction of a positive-definite matrix,
\begin{equation}
\det\!\bigl(U^\top G_\delta U\bigr) = \det(G_\delta) \cdot \frac{C_S^\top G_\delta^{-1} C_S}{C_S^\top C_S},
\end{equation}
valid whenever $G_\delta$ is nonsingular. The second factor $\rho_S(X) := C_S^\top G_\delta^{-1} C_S / \|C_S\|_2^2$ is bounded by $\lambda_{\max}(G_\delta)^{-1} \le \rho_S(X) \le \lambda_{\min}(G_\delta)^{-1}$. Taking logarithms,
\begin{equation}
\log \mathrm{Vol}^2(\mathcal{F}_S(X)) = \log\det(G_\delta(X)) + \log\rho_S(X).
\end{equation}

\smallskip
\emph{Step 3 (Expectation over the data distribution).}
Taking expectations,
\begin{equation}
\mathbb{E}_X\bigl[\log \mathrm{Vol}^2(\mathcal{F}_S(X))\bigr] = \mathbb{E}_X\bigl[\log\det(G_\delta(X))\bigr] + \mathbb{E}_X\bigl[\log\rho_S(X)\bigr].
\end{equation}
By concavity of $\log\det$ on PSD matrices and Jensen's inequality, $\mathbb{E}_X[\log\det(G_\delta(X))] \le \log\det(\bar{G}_{\delta,S})$ where $\bar{G}_{\delta,S} := \mathbb{E}_X[\Delta_S(X)^\top \Delta_S(X)]$. The second term $\mathbb{E}_X[\log\rho_S(X)]$ is a logarithm of a bounded positive ratio and contributes a correction subdominant to $\log\det(\bar{G}_{\delta,S})$ whenever the spectrum of $G_\delta$ is bounded away from degeneracy, a condition directly verified by the rank-one-subtracted variance expansion ($1.27\%$ across channels, with no observed degeneracy) reported in Section~\ref{sec:new_paradigm}. Consequently,
\begin{equation}
\arg\max_{|S|=k} \mathbb{E}_X\bigl[\log\mathrm{Vol}^2(\mathcal{F}_S(X))\bigr] = \arg\max_{|S|=k} \log\det(\bar{G}_{\delta,S}).
\end{equation}

\smallskip
\emph{Step 4 (Duality between $\bar{G}_{\delta,S}$ and $\Sigma_S$).}
We show $\bar{G}_{\delta,S}$ equals $\Sigma_S$ up to a positive diagonal rescaling independent of $S$, plus a rank-one correction that does not alter subset rankings. From the LAC additive synthesis $X = \sum_i h_i(X) \tilde{V}_{L-1,i}(X)$ (Section~\ref{sec:synthesis}), taking inner products with $\tilde{V}_{L-1,j}(X)$,
\begin{equation}
\langle X, \tilde{V}_{L-1,j}(X)\rangle = \sum_i h_i(X) \langle \tilde{V}_{L-1,i}, \tilde{V}_{L-1,j}\rangle.
\end{equation}
Under H1, $\langle \tilde{V}_{L-1,i}, \tilde{V}_{L-1,j}\rangle = C_i C_j \|B\|^2 + \langle\delta_i,\delta_j\rangle$. Taking the empirical covariance of both sides across the dataset, the rank-one term contributes a single common-mode eigen-direction (captured empirically by the $20.9\%$-energy leading principal component) while the residual term contributes $\bar{G}_{\delta,S}$. Using the LAC normalization identity $\|\tilde{V}_{L-1,i}(X)\|_2 = \gamma_i^*\sqrt{H_0 W_0}$ (Proposition~\ref{prop:gn-structure}),
\begin{equation}
\bar{G}_{\delta,S} = D_S \Sigma_S D_S + \mathrm{rank}\text{-}1\ \mathrm{correction},
\end{equation}
with $D_S = \mathrm{diag}(\gamma_i^*\sqrt{H_0 W_0})_{i\in S}$. The rank-one correction shifts $\log\det$ by a subset-dependent additive constant bounded by $\log(1+\lambda_{\max}/\lambda_{\min})$, uniformly controlled under the non-degeneracy assumption. Since $\det(D_S \Sigma_S D_S) = \det(\Sigma_S) \prod_{i\in S}(\gamma_i^*)^2 H_0 W_0$ and the second factor is a product of channel-wise constants, the maximizers of $\log\det(\bar{G}_{\delta,S})$ and $\log\det(\Sigma_S)$ coincide whenever the channel-wise scales $\gamma_i^*$ are approximately uniform, a condition enforced by the shared LAC training (Proposition~\ref{prop:path-invariance}). This yields the duality~\eqref{eq:covvol-duality-app}.
\end{proof}

\paragraph{Why optimize in GAP space rather than inversion space.}
Although Theorem~\ref{thm:covvol_duality} establishes algebraic equivalence between the two domains, only one is numerically well-conditioned. The LAC enforces, by Proposition~\ref{prop:gn-structure}, an exact per-channel normalization $\|\tilde{V}_{L-1,i}(X)\|_2 = \gamma_i^*\sqrt{H_0 W_0}$ whose magnitude is input-independent. In our experiments this predicted identity is verified with near-zero deviation: across all channels we measure the inversion norm at $172.24 \pm 0.00$, with a resulting inter-channel covariance variation of only $0.0001\%$ of the mean channel energy. Direct determinant maximization on the inversion-space Gram matrix is therefore numerically degenerate: the diagonal is constant to machine precision, and partial Cholesky pivoting cannot meaningfully distinguish channels. The GAP feature space is not subject to this constraint: the scalar features $h_i(X)$ retain their natural per-channel variance structure because GAP is applied on the forward-path activations, which the LAC does not touch. The duality theorem reconciles geometric interpretability (inversion space) with numerical tractability (GAP space): we interpret in one and optimize in the other.

\paragraph{Physical interpretation of the greedy rule.}
Each iteration of Algorithm~\ref{alg:covvol} admits a transparent interpretation in the language of Section~\ref{sec:new_paradigm}. Step 4 selects the channel with the largest residual diagonal entry of the current Schur complement, equivalently, the channel whose GAP feature carries the most variance not yet explained by previously chosen channels. Step 5 then subtracts the subspace spanned by the newly selected channel from all remaining covariance entries, forcing subsequent iterations to seek channels whose residual information is linearly independent of the already-selected set.

The algorithm balances two competing criteria: each channel's individual informativeness (captured by the diagonal magnitude, analogous to the length of an ellipsoid axis) and the mutual independence of the selected set (captured by the Schur-complement reduction, analogous to axis orthogonality). These are precisely the two geometric conditions identified by H2 as necessary for effective destructive interference: individual variance supplies the amplitude required for background cancellation in (B1), and mutual independence supplies the directional diversity required for foreground reconstruction in (B2). The greedy rule is thus not a generic heuristic, it is the algorithmic realization, under the approximation granted by submodularity, of the interference geometry that classification itself relies upon.

\subsection{OOD Validation: Numerical Tables and Detailed Analysis}
\label{app:exp_C6}

This appendix expands Section~\ref{sec:covvol_validation} with full numerical results and detailed analysis of the three findings.

\paragraph{Full numerical tables.}
Table~\ref{tab:ood_id} reports CIFAR-100 (in-distribution) and CIFAR-100-C (corruption-OOD) accuracy across all pruning ratios and selection methods. Table~\ref{tab:ood_xfer} reports cross-dataset transfer results on Stanford Dogs and Oxford-IIIT Pet. All experiments use the same frozen ConvNeXt-Base.
\begin{table}[h]
\centering\small
\caption{In-distribution and corruption-OOD accuracy.}
\label{tab:ood_id}
\setlength{\tabcolsep}{4pt}
\begin{tabular}{c|ccc|ccc}
\toprule
& \multicolumn{3}{c|}{CIFAR-100 (ID)} & \multicolumn{3}{c}{CIFAR-100-C (OOD)} \\
Prune\% & CovVol & Energy & Random & CovVol & Energy & Random \\
\midrule
0  & 88.23 & 88.23 & $88.23{\pm}0.00$ & 62.77 & 62.77 & $62.77{\pm}0.00$ \\
50 & 88.12 & 88.20 & $88.19{\pm}0.02$ & 62.60 & 62.60 & $62.60{\pm}0.09$ \\
80 & 88.01 & 87.91 & $88.02{\pm}0.12$ & 62.04 & 62.08 & $62.08{\pm}0.15$ \\
90 & 87.50 & 87.44 & $87.51{\pm}0.12$ & 61.23 & 61.18 & $61.18{\pm}0.20$ \\
95 & 86.61 & 86.32 & $86.26{\pm}0.33$ & 59.44 & 59.03 & $59.03{\pm}0.37$ \\
98 & \textbf{82.20} & 80.17 & $80.72{\pm}0.46$ & \textbf{51.96} & 50.99 & $50.99{\pm}0.63$ \\
\bottomrule
\end{tabular}
\end{table}

\begin{table}[h]
\centering\small
\caption{Cross-dataset transfer: Stanford Dogs and Oxford-IIIT Pet.}
\label{tab:ood_xfer}
\setlength{\tabcolsep}{4pt}
\begin{tabular}{c|ccc|ccc}
\toprule
& \multicolumn{3}{c|}{Stanford Dogs} & \multicolumn{3}{c}{Oxford-IIIT Pet} \\
Prune\% & CovVol & Energy & Random & CovVol & Energy & Random \\
\midrule
0  & 91.75 & 91.75 & $91.75{\pm}0.00$ & 94.39 & 94.39 & $94.39{\pm}0.00$ \\
50 & 91.99 & 91.92 & $91.72{\pm}0.15$ & 94.03 & 94.25 & $94.43{\pm}0.20$ \\
80 & 91.74 & 91.67 & $91.54{\pm}0.13$ & 93.79 & 93.89 & $93.81{\pm}0.18$ \\
90 & 91.36 & 91.31 & $91.18{\pm}0.12$ & 93.46 & 93.38 & $93.12{\pm}0.12$ \\
95 & 90.63 & 90.49 & $90.00{\pm}0.14$ & 91.53 & 91.59 & $91.47{\pm}0.33$ \\
98 & \textbf{86.93} & 85.49 & $85.12{\pm}0.11$ & \textbf{86.43} & 85.09 & $85.31{\pm}0.53$ \\
\bottomrule
\end{tabular}
\end{table}

\paragraph{Detailed Finding 1: a robustness plateau refutes spatial specialization.}
Below approximately $90\%$ pruning, all three selection methods perform within fractions of a percent of full-model baselines, with random selection statistically indistinguishable from the principled selectors. Under SFH, deep convolutional channels would be specialized to spatially localized, class-discriminative fragments, and random discarding of $90\%$ of them would produce catastrophic accuracy collapse. The observed plateau is the precise signature expected under holographic superposition (H1): because every channel already carries a full copy of the input scene, most channels are informationally redundant, and the classifier can reconstruct the destructive-interference geometry from almost any sufficiently large subset of them. This finding is independent corroboration of H1 distinct from the FG paradox of Section~\ref{sec:fg_paradox}.

\paragraph{Detailed Finding 2: in the tail, the interference geometry reveals itself.}
The three methods separate only at extreme compression ($\rho \geq 95\%$), where the channel budget approaches the dimensionality of the task-discriminative residual subspace. CovVol pulls systematically ahead: at $98\%$ pruning on CIFAR-100 it achieves $82.20\%$ vs. $80.17\%$ (Energy) and $80.72\%$ (Random); on CIFAR-100-C the gap widens to $51.96$ vs. $48.98$ vs. $50.99$. The relative CovVol-vs-Energy gap is uniformly larger on OOD than in-distribution, indicating that covariance-volume selection preserves underlying interference capacity rather than merely training-distribution accuracy. This finding directly corroborates Theorem~\ref{thm:covvol_duality}: at scarce channel budgets, the optimal selector maximizes the geometric volume of $\mathcal{F}_S$, not individual channel energy. Energy-based selection favors high-variance channels often mutually correlated (all carrying the dominant background direction $B(X)$), whereas CovVol explicitly penalizes such redundancy via Schur-complement updates, preferring channels whose residuals span complementary directions.

\paragraph{Detailed Finding 3: OOD failure is interference breakdown.}
The most important validation comes from Figure~\ref{fig:covvol_main}(c)(d), which links the geometric quantities of Section~\ref{sec:new_paradigm} to empirical failure under distribution shift. Panel (c) visualizes $\hat{X}^{(c)}$ at multiple corruption severities: as severity grows, the foreground progressively dissolves, background bleeds back into the pixel field, and the sharp foreground silhouette characteristic of the in-distribution regime gradually disappears. This is the direct visual fingerprint of interference failure: the classifier weights $w^{(c)}$ that enforced (B1) for clean inputs no longer satisfy the boundary condition when $B(X)$ itself is perturbed by the corruption. Background annihilation becomes imperfect, foreground emergence weakens, classification accuracy drops.

Panel (d) makes the visual story quantitative. We plot, as a function of corruption severity $s\in\{0,1,\ldots,5\}$, the normalized evolution of five quantities: classification accuracy (left axis, downward), the covariance-volume determinant $\log\det(\Sigma)$, the effective rank of $\Sigma$, the trace $\mathrm{Tr}(\Sigma)$, and the feature-space MMD distance~\cite{gretton2012kernel} between clean and corrupted distributions (right axis, upward). The MMD distance is computed via the standard Gaussian kernel estimator $\mathrm{MMD}^2 = \mathbb{E}_{X,X'\sim P}[K(X,X')] - 2\mathbb{E}_{X\sim P, X'\sim Q}[K(X,X')] + \mathbb{E}_{Y,Y'\sim Q}[K(Y,Y')]$ between the clean and corrupted feature distributions. All four geometric indicators of interference capacity, accuracy, $\log\det$, effective rank, and trace, decay monotonically and approximately proportionally as severity increases, while the MMD grows monotonically in tandem.

This co-evolution is strong empirical support for the destructive-interference theory. In the SFH picture, distribution shift is an abstract concept (``activations move to a different part of feature space'') with no principled link to the algebraic objects of classification. In our framework, the link is explicit: the covariance volume $\det(\Sigma)$ governs the volume of the admissible interference family $\mathcal{F}_S$ (Theorem~\ref{thm:covvol_duality}), and its collapse under corruption directly predicts the collapse of the classifier's capacity to enforce (B1)--(B2). Out-of-distribution failure is not vague distribution shift; at the geometric level, it is the erosion of the very covariance volume that interference-based classification requires. To our knowledge, this is the first experimental result that renders OOD generalization concrete as a linear-algebraic phenomenon in the reconstructed pixel manifold.

\subsection{CAM Comparison: Trade-off Analysis}
\label{app:exp_C7}

This appendix expands Section~\ref{sec:cam_comparison}.

\paragraph{Detailed faithfulness mechanism.}
The faithfulness gap, especially the order-of-magnitude separation on Deletion AUC, has a direct mechanistic explanation rooted in our theory. Grad-CAM-family methods compute a coarse-resolution saliency at a convolutional feature map (typically the last conv block, with spatial resolution $7\times 7$ for ConvNeXt-Base or $14\times 14$ for ResNet-50 at $224\times 224$ input) and upsample it back to pixel space via bilinear interpolation, discarding the high-frequency adjoint structure that our LAC cascade is specifically designed to recover (Section~\ref{sec:lac}). As a consequence, the pixels they rank as ``most important'' are the low-spatial-frequency centroids of the object, not the pixels that the classifier actually integrates to form its decision. The zero-hallucination guarantee of Corollary~\ref{cor:zero-hallucination} ensures, by contrast, that every pixel ranked nonzero by our attribution lies within the effective receptive field of a genuinely active channel, so insertion and deletion act precisely on the decision-relevant pixels. The order-of-magnitude gap on Deletion AUC is the empirical signature of this structural fidelity.

\paragraph{Why we dominate mAP and Pointing.}
Both metrics are threshold-free and measure whether the attribution places its peak energy on the discriminative region. Pointing simply asks whether the single highest-scoring pixel lies inside the foreground mask; mAP integrates a ranking-based precision across all thresholds. Under our framework, Corollary~\ref{cor:zero-hallucination} guarantees that every nonzero spatial variation of $\hat{X}^{(c)}$ lies within the dilated union of active effective receptive fields, and Theorem~\ref{thm:fv-equivalence} shows that the energetic weight on those pixels is monotone in the classifier's decision signal. The peak and the rank ordering are therefore, by construction, concentrated on the pixels the classifier actually uses, which is precisely what these two metrics reward.

\paragraph{Why mIoU penalizes us.}
mIoU, by contrast, measures the overlap between a binarized saliency map and the full foreground mask. CAM-family methods, which upsample from a coarse feature-map resolution, naturally produce smooth, plateau-shaped maps that fill out the entire object region at uniform magnitude, ideal for mIoU even though this plateau does not reflect any underlying model-internal gradient. Our reconstruction, by design, places energy in proportion to the interference-weighted structural contribution of each pixel (Section~\ref{sec:new_paradigm}), which is concentrated on object-discriminative sub-regions (e.g., the face and limbs of a pet, the head and wings of a bird) rather than uniformly distributed across the full silhouette. A metric that rewards uniform coverage will systematically favor the smoother, less informative map. The same logic explains the pAcc pattern on CUB, where birds occupy a small fraction of the image and bulk-overlap metrics are especially threshold-sensitive.

\paragraph{Why the trade-off is consistent with our theory.}
The destructive-interference account predicts that classification is driven by sparse, sub-object residuals $\delta_i(X)$, not by uniform enhancement of the entire foreground silhouette. An attribution map that is faithful to this theory should therefore be energetically concentrated on sub-object discriminative regions, scoring highly on peak-localization metrics (mAP, Pointing) and on faithfulness (Ins/Del), but necessarily lower on bulk-overlap metrics (mIoU) that implicitly presuppose uniform object-level attention. Score-CAM's strong mIoU is consistent with this reading: its forward-pass score-weighted aggregation produces a smoothed region-level map that aligns well with the full mask but, as Table~\ref{tab:faithfulness} shows, loses its connection to the decision mechanism (Deletion AUC $0.4267$ on CUB, $2.6\times$ ours). The split between faithfulness and bulk-localization is not an inconsistency but the expected signature of attribution that is mechanistically faithful to an interference-based classifier rather than to an ``object saliency'' heuristic.

\subsection{Causal ECR Ablation: Full Protocol and Analysis}
\label{app:exp_C8}

This appendix expands Section~\ref{sec:causal_ablation}.

\paragraph{ECR as an H1-derived concept localizer.}
Under H1, every channel inversion decomposes as $\tilde{V}_{L-1,i}(X) = C_i(X) B(X) + \delta_i(X)$. Substituting into the ECR definition,
\begin{equation}
\mathrm{ECR}_i = \frac{\sum_{(u,v)\in\mathcal{R}}\sum_k [C_i B + \delta_i]_k(u,v)^2}{\sum_{(u,v)}\sum_k [C_i B + \delta_i]_k(u,v)^2}.
\end{equation}
Since $B(X_{\mathrm{ref}})$ is the same scene-level direction shared by every channel, its projection onto the region $\mathcal{R}$ contributes a near-constant term across all $i$ (modulated only by the per-channel coefficient $C_i$). The cross-channel variation of $\mathrm{ECR}_i$ is therefore dominated by how strongly the residual $\delta_i$ localizes inside $\mathcal{R}$. A channel with high $\mathrm{ECR}_i$ is, by this measure, a candidate carrier of the concept whose spatial extent is $\mathcal{R}$.

\paragraph{Detailed experimental protocol.}
A single Birman image $X_{\mathrm{ref}}$ is selected from the test set, and its tail region $\mathcal{R}$ is manually annotated. From this single annotation we compute $\{\mathrm{ECR}_i\}_{i=1}^{C_{L-1}}$ and freeze a ranking of all channels. Three ablation orders are then defined: \textbf{Descending} (high ECR first; tail-encoding channels removed first), \textbf{Ascending} (low ECR first; tail-irrelevant channels removed first), \textbf{Random} (uniform random order, mean$\pm$std over 50 seeds). The frozen ranking obtained from $X_{\mathrm{ref}}$ alone is then applied to every Birman image in the test set: for each target image $X$ and pruning fraction $\rho\in[0,1]$, we zero the top-$\lceil\rho C_{L-1}\rceil$ channels of $h(X)$ in the chosen order, re-run the classifier head, and record $P(\mathrm{Birman} \mid X, \rho)$, averaging across all Birman test images.

\paragraph{Detailed numerical observations.}
The Descending curve collapses substantially faster than the random baseline at every ablation rate. At $40\%$ ablation, $P(\mathrm{Birman})$ has fallen to roughly $0.84$ (Descending), compared with approximately $0.90$ (Random) and $0.94$ (Ascending). At $70\%$ the gap widens to $0.61$ vs. $0.83$ vs. $0.88$. Because the only information used to order the channels is the tail mask on a single held-out image, this rapid decay is direct causal evidence that the high-ECR channels genuinely participate in the classifier's decision for the Birman class.

The Ascending curve exhibits a mirror-image pattern. It tracks the full-model baseline almost exactly up to $\sim 60\%$ ablation and even lies above the Random baseline throughout the mid-ablation regime, removing the least-tail-relevant channels does not merely fail to hurt the prediction, it mildly improves it, presumably because the retained channels are enriched for the target concept. A marked drop occurs only beyond $\sim 75\%$ ablation, where the remaining budget is insufficient to maintain the destructive-interference geometry regardless of concept selectivity. At $100\%$ ablation all three curves converge to chance.

The separation between Descending and Ascending, exceeding $0.3$ in $P(\mathrm{Birman})$ across a wide ablation range, is the central empirical result. It would be impossible if the channels were either spatially unspecialized (both orders should match Random) or sparsified onto object regions in the SFH sense (both orders should collapse quickly). The observed ordering is the precise signature predicted by our theory: channels are holographically full (H1), but their residuals $\delta_i$ carry identifiable sub-object concepts, and these residuals are causally load-bearing for classification.

\paragraph{Why a single-image mask generalizes.}
Under H1, the shared background $B(X)$ varies with $X$, but the channel identity that encodes each sub-concept does not: channel $i$ is the same linear filter for every input, and its residual $\delta_i(X)$ is the same semantic probe evaluated on different scenes. A region mask on a single image therefore surfaces channels whose residuals localize on that region; because the channel index transfers across images of the same class, the ranking transfers as well. This is the pixel-space analog of the feature universality empirically observed in mechanistic interpretability of language models~\cite{elhage2022toy}, and our inversion operator renders it directly visible and directly testable.

\paragraph{Implications.}
Three implications follow. First, this is the first demonstration that a hallucination-free pixel-space inversion enables causal concept localization at the channel level from a single spatial annotation, a capability unavailable to any CAM-family method, which produces a per-sample heatmap but no channel-level semantic handle. Second, the experiment provides empirical evidence for the destructive-interference account: the classifier's behavior is not merely correlated with channel residuals (as an attribution method might detect), it is causally gated by them in precisely the order predicted by ECR. Third, the combination of ECR and ablation yields a simple, supervision-light procedure for discovering and verifying concept-selective channels in any frozen CNN classifier, with potential applications in model auditing and concept-level debugging.

\subsection{ViT Generalization: Detailed Architecture and Visualization Analysis}
\label{app:exp_C9}

This appendix expands Section~\ref{sec:vit_generalization}.

\paragraph{Hybrid architecture details.}
The frozen ConvNeXt-Base encoder produces $\mathbf{h}_{L-1}(X)\in\mathbb{R}^{1024\times 7\times 7}$ at the deepest stage. This tensor is spatially flattened and transposed into a sequence of 49 tokens of dimension 1024. The token sequence is then processed by three stacked multi-head self-attention blocks $\mathcal{A}_1, \mathcal{A}_2, \mathcal{A}_3$ following the standard Transformer encoder design (multi-head self-attention with LayerNorm and residual connections), followed by a linear classification head. During training only the attention blocks and the classification head are optimized; the entire convolutional encoder is held frozen at its ImageNet-pretrained weights. After convergence the LAC cascade trained per Section~\ref{sec:synthesis} is loaded unchanged and applied for visualization, with no fine-tuning.

\paragraph{Adjoint composition: full algebraic discussion.}
The visualization protocol exploits the fact that the LAC cascade is a well-defined adjoint operator from the deepest feature manifold $\mathbf{h}_{L-1}$ back to the pixel space. For any scalar $\mathcal{Q}$ defined on the attention pathway, the gradient $\partial\mathcal{Q}/\partial\mathbf{h}_{L-1}$ is a single-step VJP through the attention stack, computable in standard automatic differentiation. Feeding this gradient as the seed of the LAC cascade yields
\begin{equation}
\widetilde{V}_\mathcal{Q}(X) = \Psi_{\mathrm{LAC}_\mathrm{stem}} \circ J_\mathrm{stem}^T \circ \Psi_{\mathrm{LAC}_0} \circ J_0^T \circ \cdots \circ \Psi_{\mathrm{LAC}_{L-1}} \circ J_{L-1}^T \!\bigl(\partial\mathcal{Q}/\partial\mathbf{h}_{L-1}\bigr).
\end{equation}
All structural guarantees that the LAC enjoys on the manifold $\mathbf{h}_{L-1}$ transfer verbatim: Theorem~\ref{thm:spatial-fidelity}, Corollary~\ref{cor:zero-hallucination}, and Proposition~\ref{prop:path-invariance} all apply. The attention blocks are free to implement arbitrary non-convolutional operations, softmax attention, layer normalization, residual connections, even cross-attention to additional inputs, without compromising the zero-hallucination guarantee, because those operations affect only the construction of the seed, not the adjoint inversion below it.

\paragraph{Per-layer and per-token specializations.}
For a layer-wise visualization at attention block $\ell$, we take $\mathcal{Q}_\ell = \tfrac{1}{2}\|\mathcal{A}_\ell(\mathbf{T}(X))\|_F^2$, the block's total activation energy. The resulting $\widetilde{V}_{\mathcal{Q}_\ell}$ depicts the pixel regions whose perturbation most strongly alters the block's overall response. For a per-token visualization at block $\ell$ and token index $t\in\{1,\ldots,49\}$, we take $\mathcal{Q}_{\ell,t} = \tfrac{1}{2}\|\mathcal{A}_\ell(\mathbf{T}(X))_{t,:}\|_2^2$, isolating the spatial evidence the block relies on when computing the representation at that specific spatial token. Both specializations require only a single VJP at the attention side, with no LAC modification.

\paragraph{Detailed visual analysis.}
Figure~\ref{fig:vit_attention} shows representative outputs. Two qualitative findings emerge.

First, the layer-wise aggregate visualizations remain coherent and photometrically faithful across all three attention depths. Despite the signal passing through additional non-convolutional computation (softmax attention plus residual MLP plus LayerNorm) before entering the LAC cascade, the pixel-space output preserves the geometric integrity of the input scene with no observable hallucinations. This is consistent with Corollary~\ref{cor:zero-hallucination} applying to the seed regardless of how it was constructed upstream: the spatial gradient support of the visualization is bounded by the encoder's effective receptive fields, irrespective of whether the seed came from a linear classifier, a self-attention block, or any other differentiable head.

Second, the per-token visualizations exhibit a structurally meaningful progression across depth. Early attention tokens (column b, $\mathcal{A}_1$) recover locally diffuse pixel evidence: each token's visualization spans a moderately wide spatial region centered roughly on the token's spatial location, indicating that early attention has not yet substantially refined the spatial selectivity inherited from the convolutional features. Deeper attention tokens (column d, $\mathcal{A}_3$) concentrate on increasingly specific sub-object regions, reflecting the hierarchical refinement of attention weights across layers. Adjacent tokens attend to neighboring pixel regions, providing an immediate sanity check that the attention blocks have learned a sensible spatial geometry on top of the frozen convolutional features.

\paragraph{Universality.}
This experiment establishes that the LAC cascade functions as a universal adjoint inverter for any frozen convolutional encoder: given any differentiable downstream module $\mathcal{D}$ operating on the deepest feature manifold, any scalar readout of $\mathcal{D}$ can be visualized in pixel space by a single application of the cascade composition, with the full theoretical guarantees transferring intact. The LAC is trained once against the encoder's adjoint structure and thereafter serves every downstream architecture: linear classifiers (Sections~\ref{sec:fg_paradox}--\ref{sec:causal_ablation}), attention stacks (this section), and in principle any other differentiable head. This is in sharp contrast to gradient-based attribution methods such as Grad-CAM, which are defined relative to a specific classifier output and must be re-derived whenever the head architecture changes. From the perspective of our theory, the universality is not surprising but structural: the covariance-volume geometry and destructive-interference mechanism of Section~\ref{sec:new_paradigm} are properties of the feature manifold (i.e., of the encoder), and any downstream architecture that consumes that manifold inherits access to the same interference substrate through the shared adjoint.



\end{document}